\documentclass[twoside,11pt, preprint]{article}

\usepackage{blindtext}
\usepackage{jmlr2e}
\setcitestyle{numbers,square}
\usepackage{subcaption}
\usepackage{wrapfig}
\usepackage{amsmath}
\usepackage{amsfonts}
\usepackage[nameinlink,noabbrev]{cleveref}
\usepackage{bm}

\usepackage{tikz}
\usepackage{tikz-cd}
\usepackage{pgfplots}

\pgfplotsset{compat=1.18}

\usetikzlibrary{positioning}

\newtheorem{assumption}{Assumption}
\crefname{assumption}{assumption}{assumptions}
\Crefname{assumption}{Assumption}{Assumptions}

\newcommand{\neuronin}{{\bm{w}\cdot \bm{x}}}

\newcommand{\defeq}{\overset{\text{\tiny def}}{=}}
\newcommand{\norm}[1]{\left\lVert #1\right\rVert}
\newcommand{\ip}[2]{\left\langle #1, #2 \right\rangle}
\newcommand{\abs}[1]{\left\lvert #1 \right\rvert}
\newcommand{\SQdim}{{SQ\dim}}
\newcommand{\threshold}[1]{\sigma^{\pm}(#1)}
\DeclareMathOperator{\sech}{sech}

\newcommand{\thmref}[1]{Thm.~\ref{#1}}
\newcommand{\propref}[1]{Prop.~\ref{#1}}

\newcommand{\corref}[1]{Corollary~\ref{#1}}
\newcommand{\defref}[1]{Definition~\ref{#1}}

\newlength{\figurewidth}
\pgfmathsetmacro{\uppergap}{1.40}
\usepackage{lastpage}

\firstpageno{1}

\begin{document}

\title{
    Representation Complexity Along Learning Trajectories: Trading Width for Precision and Time
}

\author{%
    \name Ryan Singh \email rs773@sussex.ac.uk \\ 
    \addr Department of Engineering and Informatics\\ University of Sussex 
    \AND \name Miguel De Llanza \email {} \\ 
    \addr Department of Engineering and Informatics\\ University of Sussex 
    \AND 
    \name Christopher L. Buckley \email {} \\ \addr Department of Engineering and Informatics\\ University of Sussex }

\maketitle

\begin{abstract}%
What computational role does overparameterisation play during learning? Information-plane analyses suggest that neural networks can traverse representations substantially more complex than those required by their final predictors. We explore when this transient complexity serves a computational purpose by comparing the representation complexity needed for prediction with that needed to make an unknown target statistically visible through gradient feedback. We introduce an oracle model of neural network learning and use Kolmogorov \(n\)-widths to quantify effective representation complexity. Our main technical result transfers complexity bounds from the forward representation to the corresponding gradient-query features. Combined with statistical-query lower bounds, this gives a task-agnostic lower bound on the initial effective width required for target-dependent gradient information to become detectable. 
For \(k\)-sparse parity in \(D\) dimensions, a known target admits an \(O(k)\)-width representation, while the general bound already implies a polynomial separation between representation and discovery. We then exploit parity's invariance under coordinatewise sign reflections to identify a sharper unsigned effective width, which removes sign distinctions invisible to the parity information channel. At a transfer-compatible measurement resolution, its critical scale is \(
    \widetilde{\Theta}_k
    \left(
        D^{k-1}\tau^2
    \right)
\), up to logarithmic factors.
Thus, the representation resources required to discover a predictor can substantially exceed those required to represent it once learned. Interpreting precision through accumulation over fresh samples yields a corresponding width-time frontier. Experiments support the predicted dimension dependence and examine how optimisation and generalisation depart from this idealised frontier.
\end{abstract}

\begin{keywords}%
representation complexity, feature learning,
statistical queries, Kolmogorov \(n\)-widths, sparse parity, information bottleneck
\end{keywords}

\section{Introduction}
Modern machine learning often uses highly overparameterised models whose parameters are statistically under-determined by the available data \citep{zhangUnderstandingDeepLearning2021}. Learning theory offers several explanations for how such models can nevertheless generalise through inductive biases that favour low-complexity solutions \citep{belkinReconcilingModernMachinelearning2019, gunasekarCharacterizingImplicitBias2018, woodworthKernelRichRegimes2020, azulayImplicitBiasInitialization2021}. For neural networks, such biases are best understood in analytically tractable settings such as the lazy or kernel regime \citep{jacotNeuralTangentKernel2018,
woodworthKernelRichRegimes2020}, where the representation remains
effectively fixed and gradient descent selects a controlled-RKHS-norm
solution. Outside this regime, the representation evolves with the parameters \cite{meiMeanfieldTheoryTwolayers2019, chizatImplicitBiasGradient2020}, making both the optimisation dynamics and their implicit biases substantially harder to characterise.

Feature learning raises a different complexity question. The final predictor may admit a compact representation even though discovering it requires the network to pass through a substantially more diverse one. Empirical information-plane studies have reported non-monotonic representation trajectories, in which complexity increases before decreasing as training approaches its final solution \cite{shwartz-zivOpeningBlackBox2017, saxeInformationBottleneckTheory2019}. This motivates a basic question: can a target be simple to represent but \textit{require} a more complex representation to discover?

Our starting point is that a representation sufficient to express a known target need not expose enough target-dependent information to discover an unknown one. Existing complexity measures primarily characterise the former. We introduce effective width as a functional measure that also constrains the information available through gradient feedback. For sparse parity, the target is compact once known but requires polynomially greater representation complexity to become visible to the optimiser. Effective width therefore places representation diversity, feedback precision, and optimisation time on a common frontier, giving overparameterisation a computational role that may disappear once learning is complete.

\subsection{Modelling Approach}
Rather than analysing a particular optimisation algorithm, we study an oracle model of neural network training. Specifically, we consider a two-layer neural network 
\[ 
f(\,\cdot\,;\theta): \mathbb R^D \longrightarrow \mathbb R, \qquad f(\bm x;\theta) = \sum_{p=1}^{P} a_p\sigma(\bm w_p\cdot\bm x). 
\]
The optimiser does not observe the target distribution directly. Instead, it receives coordinatewise noisy gradient information through an oracle,  
\[\hat{g}_{p,d}^t =\mathbb{E}[x_d\sigma'(\bm{w_p}^t \cdot \bm{x}) y] +\xi_{p,d}^t, \quad \xi_{p, d}^t \in [-\tau, \tau].\]
In particular we focus on zero-respecting learners, which cannot change their first-layer representation under an all-zero feedback matrix: 
\[
    W^{t+1} = W^{t} + \phi(W^t, \widehat{G}^t), \quad \text{where} \quad \phi(W, 0_{P\times D})= 0_{P\times D}.
\]
The model is deliberately broader than a direct description of gradient
descent. Explicit second-layer adaptation and other optimiser-specific
state could be added without changing the lower-bound calculation
(see \cref{app:model}). Importantly, the direct target-dependent readout signal
\(\mathbb E[z_p(\bm x)y]\) is already constrained by the width of the forward
activities. We therefore omit these
additional variables, which affect how the exposed information is used
rather than whether it is available.

\subsection{Summary of Results}
Architectural width alone does not measure the functional complexity available to learning, since many neurons may realise identical or nearly identical activities. We therefore use Kolmogorov \(n\)-widths \citep{kolmogoroffUberBesteAnnaherung1936, pinkusNWidthsApproximationTheory2012} in Variation space (\defref{def:Barron}) to define \( R_{\mathcal B_z}(\mathcal Z,\delta), \) the effective width of the activity set \(\mathcal Z\) at approximation resolution \(\delta\).

\paragraph{Effective width bound.} The same first-layer parameters determine   both the activities used for prediction and the gradient queries available for learning. We exploit this shared geometry (\propref{prop:main-width-transfer}) to transfer statistical-query lower bounds into lower bounds on effective width. For a concept class of statistical-query dimension \(d_{\mathrm{SQ}}\) \cite{reyzinStatisticalQueriesStatistical2020}, any successful zero-respecting learner requires, up to approximation factors,
\[
\boxed{
    R_{\mathcal B_z}(\mathcal Z_0,\delta)
    =
    \Omega\left(
        \frac{d_{\mathrm{SQ}}}{D^2}\tau^2
    \right)
    }
\]
where \(\mathcal Z^0\) is the initial activity set and \(d_{\mathrm{SQ}}\) is the statistical-query dimension of the concept class (\thmref{thm:critical-initial-width}). Below this threshold, distinct targets admit the same all-zero gradient transcript, so the algorithm cannot make target-dependent progress.

\paragraph{Example: Sparse Parities.} 
Consider the problem of learning a $k$-sparse parity under independent Rademacher inputs:
\[
    \mathcal{H}_k = \{\chi_S: S \subset [D], \lvert S \rvert = k\}, \quad \bm{x} \sim \{\pm 1 \}^{D}, \quad  \chi_S(\bm{x}) = \prod_{i \in S} x_i.
\]
Once \(S\) is known, \(\chi_S\) admits an \(O(k)\)-width \(\tanh\)-network representation, independently of \(D\). Discovering \(S\) is substantially harder. Since
\(\SQdim(\mathcal H_k,0) = \binom Dk \) \citep{barakHiddenProgressDeep2022}, 
our general lower bound on the critical initial effective width gives,
\[ 
    \boxed{ R_{\mathrm{critical}} = \Omega_k    \left( D^{k-2}\tau^2 \right) } 
\] 
at the transfer-compatible resolution. Thus, for fixed $k>2$ and non-vanishing tolerance a target can be simple to represent once known while requiring polynomially greater effective width to become visible through noisy gradient feedback.

\paragraph{Parity symmetry and near-matching unsigned bounds} While the presented lower bound is general with respect to the concept class, parity problems exhibit a symmetry under sign transformations. This allows us to tighten the bound by measuring the unsigned width of the neurons:
\[
    R^+\defeq R_{\mathcal B_z}(\mathcal{Z}^+, \delta), \quad \text{where}\quad \mathcal{Z}^{+}(W) = \{ \sigma(\abs{\bm{w}}\cdot x): w \in W \}.
\]
The corresponding parity-relevant transfer incurs only a factor \(D\), rather than \(D^2\).
Building on the majority-neuron construction of
\citet{barakHiddenProgressDeep2022}, we obtain one-round upper bounds
that interpolate between constant width at
\(\tau^2=O_k(D^{-(k-1)})\) and
\(O_k(D^{k-1})\) width at constant precision. More generally, if
\(\tau^2=\Theta(D^{-\alpha})\), with \(0<\alpha<k-1\), then
\(O_k(D^{k-1-\alpha}\log D)\) pooled neurons suffice
(\thmref{thm:one-round-parity-upper}). Consequently, the critical parity-relevant effective width is characterised, up to logarithmic factors, by 
\[ \boxed{ R_{\mathrm{critical}}^{+} = \widetilde\Theta_k \left( D^{k-1}\tau^2 \right). } \]

\subsection{Statistical Interpretations of Effective Width}
\begin{wrapfigure}{r}{0.48\textwidth}
        \centering
    \begin{tikzpicture}[ scale=\figurewidth, transform shape ]
\begin{axis}[
            width=10cm    ,
            height=10cm,
            xmin=0.3,
            xmax=3.2,
            ymin=0.3,
            ymax=3.2,
            axis lines=left,
            axis line style={->, thick},
            xlabel={
                Effective Width (\( R\))
            },
            ylabel={
                Samples (\(N\))
            },
            xlabel style={at={(axis description cs:0.5,-0.10)}, anchor=north},
            ylabel style={at={(axis description cs:-0.10,0.5)}, anchor=south},
            xtick={0.3, 1, 3},
            ytick={0.3, 1, 3},
            xticklabels={\(O(k)\), \(D^{\frac{k-1}{2}}\), \(D^{k-1}\)},
            yticklabels={, \(D^{\frac{k-1}{2}}\), \(D^{k-1}\)},
            tick style={black},
            clip=true,
        ]

            \addplot[
                draw=none,
                fill=gray!12,
                domain=0.32:3.2,
                samples=200,
            ]
                {min(1/x,3.2)}
                \closedcycle;

            \addplot[
                draw=none,
                fill=orange!25,
                domain=1:3.2,
                samples=150,
            ]
                {x}
                \closedcycle;

            \addplot[
                draw=none,
                fill=gray!12,
                domain=1:3.2,
                samples=150,
            ]
                {1/x}
                \closedcycle;

            \addplot[
                draw=none,
                fill=green!14,
                domain=0.32:1,
                samples=120,
            ]
                {3.2}
                \closedcycle;

            \addplot[
                draw=none,
                fill=white,
                domain=0.32:1,
                samples=120,
            ]
                {1/x}
                \closedcycle;

            \addplot[
                draw=none,
                fill=green!14,
                domain=1:3.2,
                samples=120,
            ]
                {3.2}
                \closedcycle;

            \addplot[
                draw=none,
                fill=white,
                domain=1:3.2,
                samples=120,
            ]
                {x}
                \closedcycle;

            \addplot[
                draw=none,
                fill=gray!12,
                domain=0.32:3.2,
                samples=200,
            ]
                {min(1/x,3.2)}
                \closedcycle;

            \addplot[
                draw=none,
                fill=orange!25,
                domain=1:3.2,
                samples=150,
            ]
                {x}
                \closedcycle;

            \addplot[
                draw=none,
                fill=gray!12,
                domain=1:3.2,
                samples=150,
            ]
                {1/x}
                \closedcycle;

            \pgfmathsetmacro{\uppergap}{1.40}
            
            

            \addplot[
                very thick,
                blue!75!black,
                domain=0.32:3.2,
                samples=200,
            ]
                {1/x};

            \addplot[
                thick,
                dashed,
                red!75!black,
                domain=0.32:3.2,
                samples=2,
            ]
                {x};

            \addplot[
                only marks,
                mark=*,
                mark size=2.5pt,
                black,
            ]
                coordinates {(1,1)};


            \node[
                red!75!black,
                inner sep=2pt,
                rotate=42,
            ]
                at (axis cs:2.25,2.43)
                {\small \(N=R\)};

            \node[
                align=center,
                blue!65!black,
            ]
                at (axis cs:0.75,2.75)
                {\small High precision\\[-1mm]
                 \small low width};

            \node[
                align=center,
                blue!65!black,
            ]
                at (axis cs:2.75,0.75)
                {\small Low precision\\[-1mm]
                 \small high width};

            \node[
                align=center,
                gray!70!black,
            ]
                at (axis cs:1.15,0.53)
                {\small Target not\\[-1mm]
                 \small visible};

            \node[
                align=center,
                green!40!black,
            ]
                at (axis cs:1.25,2.1)
                {\small Width-regularised \\ fitting};

            \node[
                align=center,
                orange!80!black,
            ]
                at (axis cs:2.45,1.25)
                {\small Regularisation \\ required};


        \end{axis}
\end{tikzpicture}

    \caption{
        \small Effective-width--sample phase diagram for \(k\)-sparse parity.
        The blue curve is the task-visibility frontier
        \(NR_{\mathrm{task}}(N)
        =\widetilde\Theta_k(D^{k-1})\). Above the frontier the target is detectable. The dashed red line $R=N$ marks the crossover in fixed-feature behaviour. When \(R<N\), exact
        interpolation requires a large readout norm; effective-width control consequently obstructs premature fitting. When \(R>N\) additional readout
        regularisation or implicit norm control may be required.
    }
    \label{fig:precision-diversity-frontier}
\end{wrapfigure} 
Here we specialise to nonnegative-weight initialisations, where
\(\mathcal Z=\mathcal Z^+\) and the widths governing parity discovery
and generalisation coincide. Under the fresh-sample approximation \(\tau^2\asymp N^{-1}\), the parity bounds at the  resolution \(\delta_N\asymp\frac{1}{N\sqrt D}\) yield the task frontier
 \[ 
 \boxed{
    NR_{\mathrm{task}}(N)=\widetilde\Theta_k(D^{k-1}).
    }
 \]
This relation provides a complexity--time interpretation of the bounds. A narrow representation requires greater statistical precision before the target becomes visible, whereas a more diverse representation exposes the target at coarser precision. 

\paragraph{Fixed-feature implications.} Understanding whether typical training can approach this frontier also requires accounting for fixed-feature fitting: a sufficiently rich representation may fit the observed sample through its readout before first-layer feature discovery is complete. By augmenting the width--time picture with a third coordinate, \(A\defeq\norm{\bm a}_1\), our secondary results show how effective width and sample size also constrain fixed-feature generalisation and interpolation.

Combining effective dimensionality with global variation-norm control
gives, schematically,
\[ 
    \mathcal L(f) - \widehat{\mathcal L}_N(f) \lesssim A \left[ (1+\delta) \sqrt{ \frac{ \min\{R,C_w^2D^2\} }N } + \delta \right] + \sqrt{ \frac{\log(1/\eta)}N }, 
\] 
where fixed activation- and loss-dependent constants are suppressed (\propref{prop:width-generalisation}). For sparse parity, width and readout norm are coupled by interpolation. When the representation is target independent and \(R/N\to0\), exact interpolation requires 
\[ 
    \norm{\bm a}_1 \geq \frac{1-o(1)}{\delta_N} 
\] 
with high probability (\propref{prop:parity-interpolation-cost}). 

Thus, when \(R\ll N\),   fitting must amplify weak residual features: the resulting norm inflation makes the preceding population certificate uninformative, while also obstructing premature empirical fitting. As \(R\) approaches \(N\), this obstruction can disappear because the dominant feature space may itself support interpolation. Effective-width control can therefore protect against premature fitting below the crossover. Once \(R\gtrsim N\), however, controlling the selected solution requires additional readout regularisation or an implicit optimisation bias toward small norm.

\subsection{Related Work}
\textbf{Gradient-based learning of parities.}
Classical statistical-query lower bounds explain the difficulty of
learning parity functions from approximate population correlations
\citep{blumWeaklyLearningDNF1994, kearnsEfficientNoisetolerantLearning1998, feldmanGeneralCharacterizationStatistical2017}. A growing literature studies when gradient-based algorithms can learn parity functions \citep{abbePolytimeUniversalityLimitations2020, danielyLearningParitiesNeural2020, barakHiddenProgressDeep2022, abbeLEARNINGHIGHDEGREEPARITIES2025, shoshaniHardnessLearningFixed2025}.
As prototypical computationally difficult targets, parities have also
been used to demonstrate separations between feature learning and
fixed-feature methods
\citep{malachQuantifyingBenefitUsing2021, shiTheoreticalAnalysisFeature2022}. Our work builds most directly on \citet{barakHiddenProgressDeep2022}, who study sparse parity under the
uniform input distribution. They identify a Fourier gap through which
gradient descent can gradually amplify the relevant coordinates despite
a long plateau in observable performance, and give polynomial-time
learning guarantees for networks whose width is independent of the
ambient dimension. Our focus is complementary: we connect the SQ obstruction to the
functional geometry of the current representation, study how effective
width trades against gradient precision, and show that functional
diversity rather than raw neuron count is the relevant resource.

\textbf{Neural variation spaces and Barron-type complexity.} The seminal work of \citet{BarronUniversalApproximationBounds1993} showed that two-layer neural networks can evade the curse of dimensionality for suitable classes of functions. One such class is characterised by bounded variation norm with respect to a neural feature dictionary \citep{ kurkovaDimensionIndependentRatesApproximation1997, kurkovaBoundsRatesVariablebasis2001, bachBreakingCurseDimensionality2017}. More recent work has studied the approximation, estimation, and structural properties of closely related Barron spaces \citep{ ePrioriEstimatesPopulation2019, eKolmogorovWidthDecay2021, eBarronSpaceFlowinduced2021, e.RepresentationFormulasPointwise2022}. We use this variation geometry to measure the finite-dimensional diversity of a representation and transfer its forward approximation complexity to the gradient features available during learning.

\textbf{Representation dynamics and feature learning.}
Recent work has identified several qualitative descriptions of
feature-learning dynamics, including saddle-to-saddle progression
through solutions of increasing complexity
\citep{jacotSaddletoSaddleDynamicsDeep2022, abbeSGDLearningNeural2023, zhangSaddletosaddleDynamicsExplains2026},
transitions between lazy and rich regimes
\citep{woodworthKernelRichRegimes2020, bordelonDynamicsFiniteWidth2023, tuMixedDynamicsLinear2024},
and improvements in the kernel induced by early feature-learning updates
\citep{baHighdimensionalAsymptoticsFeature2022}.
Motivated by empirical information-plane analyses
\citep{shwartz-zivOpeningBlackBox2017, saxeInformationBottleneckTheory2019},
we introduce a functional measure of representation dynamics. By controlling both forward approximation and backward gradient information, effective width gives representation dynamics an operational meaning. Together with optimisation time, it provides macroscopic coordinates for studying how departures from lazy dynamics support feature discovery.

\section{Main Results}
We now make our notion of representation complexity precise. We use the Kolmogorov \(n\)-width \cite{kolmogoroffUberBesteAnnaherung1936} to measure the error of the best n-dimensional linear approximation to a collection of representations.

\begin{definition}[Kolmogorov \(n\)-Width] 
\label{def:k-width}
Given normed space $X$ and $W\subset X$ the Kolmogorov \(n\)-width :
\[
    d_n(W, X) = \inf_{V\subset X:\dim(V)\leq n}\sup_{w \in W}\inf_{v \in V} \lvert \lvert v - w \rvert \rvert_X.
\]
\end{definition}
We define the effective width of $W$ at approximation resolution $\delta$ by
\[
    R_{X}(W, \delta) = \min \{n: d_n(W, X) \leq \delta\}.
\]
Thus $R_X(W,\delta)$ measures the number of linear degrees of freedom required to represent $W$ to accuracy $\delta$.

\subsection{A Forward-Backward Width Transfer}
The forward representation and backward information channel are coupled
through parameter differentiation. A direct function-space transfer is unavailable because the correspondence between the two neural dictionaries does not extend linearly to realised activity functions. Indeed, for a differentiable odd
activation,
\[
    z_{\bm w}(\bm x)
    \defeq
    \sigma(\bm w\cdot\bm x), \ 
    q_{\bm w,i}(\bm x)
    \defeq
    x_i\sigma'(\bm w\cdot\bm x) \implies
    z_{-\bm w}
    =
    -z_{\bm w}, \ 
    q_{-\bm w,i}
    =
    q_{\bm w,i},
\]
because \(\sigma'\) is even. Hence the atomwise correspondence cannot extend to a linear operator on realised activity functions. We therefore measure forward approximation in a Barron-type variation geometry
\citep{BarronUniversalApproximationBounds1993, bachBreakingCurseDimensionality2017, ePrioriEstimatesPopulation2019, e.RepresentationFormulasPointwise2022},
whose integral representations allow the transfer to be constructed
through the underlying neural atoms.

\begin{definition}[Variation space induced by a feature map] 
\label{def:Barron} Let $\mathcal{M} 
(\mathcal{W})$ be the space of finite signed measures over $\mathcal{W}$ and $\phi: \mathcal{W} \times \mathcal{X} \rightarrow \mathbb{R}$, then the variation space is given by
\[
    \mathcal{B}_{\phi} = \{ f: \mathcal{X}\rightarrow \mathbb{R} : f =\int_{\mathcal{W}}\phi(\bm{w}, \cdot) \ d\mu,\  \mu \in \mathcal{M}(\mathcal{W}) \}
\]
with norm,
\[
    \lvert\lvert f \rvert\rvert_\phi = \inf \{\lvert \mu \rvert_{TV}: f = \int_{\mathcal{W}}\phi(\bm{w}, \cdot) \ d\mu,\ \mu \in \mathcal{M}(\mathcal{W}) \}.
\]
\end{definition}
In particular we use the activity features \( \phi_z(\bm w,\bm x)\defeq \sigma(\neuronin) \) and denote the corresponding variation space by \(\mathcal B_z\). 
\footnote{We use the symbol $\mathcal{B}$ due to the link with Barron spaces, but for nonhomogeneous activations it need not coincide with parameter-weighted definitions of Barron space \citep{eBarronSpaceFlowinduced2021, e.RepresentationFormulasPointwise2022}.}

The same first-layer weights determine both the forward activities used
for prediction and the gradient features available to the optimiser.
Our first result relates the intrinsic dimensions of these two
collections.

\begin{proposition}[Forward--backward width transfer]
\label{prop:main-width-transfer}
Under the bounded-weight, activation-regularity, and input-moment
conditions of \cref{ass:bounded-weight,ass:smooth,ass:input-second-moment},
there exists a constant \(C_{\mathrm{tr}}>0\), independent of \(D\),
such that
\[
    d_{D^2n+D}
    \left(
        \mathcal Q(W),
        L_2(p_x)
    \right)
    \leq
    C_{\mathrm{tr}}D^{1/4}
    d_n
    \left(
        \mathcal Z(W),
        \mathcal B_z
    \right)^{1/2}.
\]
\end{proposition}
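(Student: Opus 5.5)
The plan is to avoid any linear map on realised activity functions built atom by atom. Instead I would use a genuine linear differential operator in the input variable, together with a case split on the size of \(\bm w\). The key identity is
\[
    x_i\,\partial_{x_j}\sigma(\bm w\cdot\bm x)=w_j\,x_i\sigma'(\bm w\cdot\bm x)=w_j\,q_{\bm w,i}(\bm x).
\]
The operators \(L_{ij}f\defeq x_i\partial_{x_j}f\) act on functions, not on representing measures, so they are well defined and linear on \(\mathcal B_z\). This sidesteps the sign obstruction \(z_{-\bm w}=-z_{\bm w}\), \(q_{-\bm w,i}=q_{\bm w,i}\): dividing by \(w_j\) restores the correct sign. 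Elements of \(\mathcal B_z\) are treated as functions on \(\mathbb R^D\), given by the integral representation, and are then restricted to the support of \(p_x\). This matters when \(p_x\) is discrete, for example Rademacher.

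The first step fixes an \(n\)-dimensional \(V\subset\mathcal B_z\) with \(\sup_{\bm w\in W}\inf_{v\in V}\norm{z_{\bm w}-v}_{z}\le\varepsilon\), where \(\varepsilon\) is slightly above \(d_n(\mathcal Z(W),\mathcal B_z)\). The approximating space for \(\mathcal Q(W)\) is
\[
    U\defeq \mathrm{span}\{L_{ij}V:i,j\in[D]\}+\mathrm{span}\{\sigma'(0)x_i:i\in[D]\},
\]
which has dimension at most \(D^2n+D\).

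The second step is a stability estimate: \(\norm{L_{ij}f}_{L_2(p_x)}\le C\norm f_{z}\) for every \(f\in\mathcal B_z\). Take any representing measure \(\mu\) of \(f\) and differentiate under the integral. Each atom then contributes \(x_iu_j\sigma'(\bm u\cdot\bm x)\). Its \(L_2\) norm is at most \(C_w\norm{\sigma'}_\infty(\mathbb E x_i^2)^{1/2}\) by \cref{ass:bounded-weight,ass:smooth,ass:input-second-moment}. Minkowski's inequality then bounds \(\norm{L_{ij}f}_{L_2}\) by this constant times \(\abs{\mu}_{TV}\), and taking the infimum over \(\mu\) gives the claim. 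Justifying the interchange of derivative and integral with bounded weights and bounded \(\sigma'\) is routine dominated convergence.

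The third step splits on a threshold \(s>0\).

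\emph{Large weights.} If \(\norm{\bm w}\ge s\), choose \(j\) with \(\abs{w_j}\ge s/\sqrt D\) and approximate \(q_{\bm w,i}\) by \(w_j^{-1}L_{ij}v_{\bm w}\in U\). By the identity and the second step, the error is at most
\[
    \abs{w_j}^{-1}\norm{L_{ij}(z_{\bm w}-v_{\bm w})}_{L_2}\le C\sqrt D\,\varepsilon/s.
\]

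\emph{Small weights.} If \(\norm{\bm w}<s\), approximate \(q_{\bm w,i}\) by \(\sigma'(0)x_i\). Using \(\abs{\sigma'(t)-\sigma'(0)}\le\norm{\sigma''}_\infty\abs t\), the error is at most \(\norm{\sigma''}_\infty(\mathbb E[x_i^2(\bm w\cdot\bm x)^2])^{1/2}\). By the input-moment assumption this is at most \(Cs\). I would use the second-moment condition in a form controlling \(\mathbb E[x_i^2(\bm e\cdot\bm x)^2]\) uniformly over unit vectors \(\bm e\); for Rademacher inputs this is exactly \(1\).

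Balancing the two cases with \(s=D^{1/4}\varepsilon^{1/2}\) gives error \(\lesssim D^{1/4}\varepsilon^{1/2}\) uniformly over \(\bm w\in W\) and \(i\in[D]\). Letting \(\varepsilon\downarrow d_n\) yields the stated bound, and the resulting \(C_{\mathrm{tr}}\) depends only on \(C_w\), \(\norm{\sigma'}_\infty\), \(\norm{\sigma''}_\infty\) and the moment constant.

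I expect the main obstacle to be the small-weight regime together with the moment bookkeeping. I must check that the assumed input-moment condition really controls \(\mathbb E[x_i^2(\bm w\cdot\bm x)^2]\) by \(C\norm{\bm w}^2\) with a constant independent of \(D\). If only \(\mathbb E x_i^2\) and the covariance are assumed, this needs a fourth-moment or bounded-coordinate argument. The balancing step is also exactly where the \(D^{1/4}\) and the square root enter, so any extra \(D\)-dependence in the stability constant of the second step would propagate into \(C_{\mathrm{tr}}\). I would verify that the per-atom bound uses only \(\abs{u_j}\le C_w\) and not \(\norm{\bm u}\).
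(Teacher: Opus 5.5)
Your architecture matches the paper's proof. You use the approximating space $\mathrm{span}\{x_i\partial_{x_j}V\}+\mathrm{span}\{x_i\sigma'(0)\}$, divide by a large coordinate $w_j$ for big neurons, apply a mean-value bound around $\sigma'(0)$ for small ones, and balance the two errors. Your direct stability estimate $\norm{x_i\partial_{x_j}f}_{L_2}\le C_wB_\sigma C_x\norm{f}_{\mathcal B_z}$ packages the paper's chain through $\mathcal B_t$, $\mathcal B_q^i$ and the inclusion into $L_2$ more tidily. Reading $\mathcal B_z$ as functions on $\mathbb R^D$, as you do, is needed for the derivatives to make sense when $p_x$ is discrete.

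The gap is in the small-weight step, where you suspected it. Splitting on the Euclidean norm needs a dimension-free isotropic bound $\sup_{\norm{\bm e}_2=1}\mathbb E[x_i^2(\bm e\cdot\bm x)^2]\le C$. That is not \cref{ass:input-second-moment}, which only gives $\mathbb E[x_i^2(\bm w\cdot\bm x)^2]\le\kappa_x^2D\norm{\bm w}_\infty^2$. The stated assumption is strictly weaker. Take $m=\lfloor\sqrt D\rfloor$ identical Rademacher coordinates and the remaining coordinates independent Rademacher. Then \cref{ass:input-second-moment} holds with $\kappa_x^2=2$, yet $\mathbb E[(\bm e\cdot\bm x)^2]=m\approx\sqrt D$ for $\bm e=m^{-1/2}\mathbf 1_{[m]}$. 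Under the stated assumption, your split gives errors $\lesssim\sqrt D\,s$ (small case) and $\lesssim\sqrt D\,\varepsilon/s$ (large case). Balancing yields only $\sqrt D\,\varepsilon^{1/2}$, a loss of $D^{1/4}$.

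The fix is to split on $\norm{\bm w}_\infty$, as the paper does.
\begin{itemize}
\item If $\norm{\bm w}_\infty\ge s$, take $j$ attaining the maximum, so $\abs{w_j}\ge s$. The large-neuron error is then $C_wB_\sigma C_x\varepsilon/s$, with no $\sqrt D$.
\item If $\norm{\bm w}_\infty<s$, write $\bm w=s\bm u$ with $\norm{\bm u}_\infty\le1$. Then \cref{ass:input-second-moment} gives error at most $C_\sigma\kappa_x\sqrt D\,s$.
\end{itemize}
The $\sqrt D$ loss moves from the large case to the small case. Balancing at $s\asymp D^{-1/4}\varepsilon^{1/2}$ gives $\sqrt{C_wB_\sigma C_xC_\sigma\kappa_x}\,D^{1/4}\varepsilon^{1/2}$. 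This is exactly the paper's constant $C_{\mathrm{tr}}=\sqrt{C_{\mathrm{in}}C_wC_s}$, with $C_{\mathrm{in}}=B_\sigma C_x$ and $C_s=C_\sigma\kappa_x$.
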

The proof separates neurons with a sufficiently large incoming weight
from neurons whose incoming weights are uniformly small. For the first
group, Barron approximations transfer through differentiation; for the
second, the tangent features are close to a low-dimensional linear
class. 

The $D^2$ inflation comes from using the approximation directions $\{x_i\partial_{x_j}v \}$. The generic transfer treats all signed tangent directions as potentially distinct. Sparse parity permits a sharper accounting because coordinatewise sign reflections leave the magnitude of every target-dependent query response unchanged. For a weight family \(W\), define its unsigned  representation and backwards families as: 
\[ 
    \mathcal Z^+(W) \defeq \{ \sigma(|\bm w|\cdot\bm x): \bm w\in W \}
    \qquad \mathcal Q^+(W) \defeq \{ x_i\sigma'(|\bm w|\cdot\bm x): \bm w\in W, i \in [D] \}. 
\]

\begin{proposition}[Parity-relevant forward--backward transfer]
\label{prop:parity-width-transfer}
Suppose the ambient feature dictionary satisfies \( \sup_{\bm w\in\mathcal W} \norm{\bm w}_1 \leq C_1\) along with activation-regularity and input-moment conditions \cref{ass:smooth,ass:input-second-moment}. Then there exists a constant
\(C_{\mathrm{tr}}^+>0\), independent of \(D\), such that
\[
    d_{Dn+D}
    \left(
        \mathcal Q^+(W),
        L_2(p_x)
    \right)
    \leq
    C_{\mathrm{tr}}^+D^{1/4}
    d_n
    \left(
        \mathcal Z^+(W),
        \mathcal B_z
    \right)^{1/2}.
\]
\end{proposition}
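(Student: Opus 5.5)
The plan is to mirror the proof of \propref{prop:main-width-transfer}, noting that unsigned weights \(\abs{\bm w}\) are nonnegative. This removes the need to track sign-dependent tangent directions separately, so one index of the \(D^2\) family \(\{x_i\partial_{x_j}v\}\) can be collapsed. Fix an optimal \(n\)-dimensional subspace \(V=\mathrm{span}\{v_1,\dots,v_n\}\subset\mathcal B_z\) with \(\sup_{\bm w\in W}\inf_{v\in V}\norm{\sigma(\abs{\bm w}\cdot\bm x)-v}_{\mathcal B_z}\le \varepsilon:=d_n(\mathcal Z^+(W),\mathcal B_z)+\eta\). Each \(v_m\) has an integral representation \(v_m=\int\sigma(\bm u\cdot\bm x)\,d\mu_m(\bm u)\). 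The approximant of \(\sigma(\abs{\bm w}\cdot\bm x)\) then has the form \(\int\sigma(\bm u\cdot\bm x)\,d\nu_{\bm w}\), with the residual of variation norm at most \(\varepsilon\). As before, I would split \(W\) according to a threshold \(\rho\) on the weights. Since the dictionary satisfies \(\norm{\bm w}_1\le C_1\), the natural splitting quantity is \(\norm{\bm w}_\infty\) (equivalently, the size of the largest coordinate).

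\emph{Small-weight neurons} (\(\norm{\bm w}_\infty\le\rho\)). Here I would Taylor expand \(\sigma'(\abs{\bm w}\cdot\bm x)\) around its value at \(0\), using \cref{ass:smooth}. The query \(x_i\sigma'(\abs{\bm w}\cdot\bm x)\) is then within \(O(\rho)\)-type error, measured in \(L_2(p_x)\) via \cref{ass:input-second-moment}, of \(x_i\sigma'(0)\). Allowing a first-order correction gives \(x_i\sigma''(0)\,\abs{\bm w}\cdot\bm x\); for odd \(\sigma\), \(\sigma''(0)=0\), so the leading term is just \(x_i\). This accounts for the additive \(D\) directions \(\{x_i\}_{i\in[D]}\).

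\emph{Large-weight neurons}. The key identity is that \(\partial_{w_j}\sigma(\bm w\cdot\bm x)=x_j\sigma'(\bm w\cdot\bm x)\). For the unsigned query indexed by \(i\), I would differentiate the approximant in the direction \(e_i\) of the weight. Because \(\abs{\bm w}\) is a nonnegative vector, the query \(x_i\sigma'(\abs{\bm w}\cdot\bm x)\) is a derivative along a single coordinate with a fixed sign. Concretely, I would use the finite-difference form \(h^{-1}[\sigma((\abs{\bm w}+he_i)\cdot\bm x)-\sigma(\abs{\bm w}\cdot\bm x)]\). The shifted atom remains an unsigned atom, so it is approximated by \(V\) to accuracy \(\varepsilon\) in \(\mathcal B_z\), provided the shifted weight stays in the dictionary; the \(\ell_1\) bound with slack \(h\) ensures this. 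The approximating directions are then the finite-difference images \(\{h^{-1}(\tau_{i}v_m - v_m)\}\). Here \(\tau_i\) acts on the representing measure by shifting atoms \(\bm u\mapsto\bm u+he_i\), which is where variation-space integral representations are needed rather than function-level linearity. This gives \(Dn\) directions, one per pair \((i,m)\), and no second index \(j\) appears. In the signed case the \(j\) index arose because the sign pattern of \(\bm w\) also had to be encoded.

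\emph{Balancing}. The total error is \(O(h)+O(\varepsilon/h)\) in \(L_2(p_x)\), using the fact that \(\mathcal B_z\)-norm controls \(L_2\)-norm by \(\sup\norm{\sigma}\), together with the Taylor error \(O(\rho)\). The \(D^{1/4}\) factor should come from converting \(\ell_1\)-bounded quantities to \(L_2\) of \(\bm u\cdot\bm x\) via the second-moment assumption. Optimising gives \(h\asymp\varepsilon^{1/2}\), hence the \(\varepsilon^{1/2}\) rate, with \(\rho\) chosen to match. The main obstacle is making the finite-difference shift a bounded linear map on \(V\). The shift acts on measures, not functions, and a function in \(V\) may have several representations. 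I would fix one near-optimal representing measure per basis element and define the shifted directions from those measures. The approximation error must then be controlled through the measure of the residual, taking a near-optimal representation of \(\sigma(\abs{\bm w}\cdot\bm x)-v\) as well. Bounding \(\abs{\sigma'(\bm u\cdot\bm x)}\) uniformly and \(x_i\) in \(L_2\) gives the derivative bound.
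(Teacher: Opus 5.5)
\textbf{There is a genuine gap in your large-neuron step.} It is the obstruction the paper points out in the remark preceding \defref{def:Barron}. Your finite difference $h^{-1}[\sigma((\abs{\bm w}+he_i)\cdot\bm x)-\sigma(\abs{\bm w}\cdot\bm x)]$ differentiates in the \emph{weight} variable. That is, it tries to realise the atomwise map $\sigma(\bm u\cdot\bm x)\mapsto x_i\sigma'(\bm u\cdot\bm x)$, which fails for two reasons.

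First, the shifted atom is not an element of $\mathcal Z^+(W)$, the finite family of network activities. So the hypothesis $d_n(\mathcal Z^+(W),\mathcal B_z)\le\varepsilon$ gives no approximation of it. Lying in the dictionary $\mathcal W$ only means it may appear inside representing measures.

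Second, shifting fixed representing measures ($\mu_m$ for $v_m$, $\mu_e$ for the residual) does not rescue this. From $\sigma(\abs{\bm w}\cdot\bm x)=\sum_mc_mv_m+e$ you only get $\beta\defeq\delta_{\abs{\bm w}}-\sum_mc_m\mu_m-\mu_e\in\ker A_z$. Weight shifts do not preserve $\ker A_z$, so $A_z(\tau_i\beta)$ is an uncontrolled extra error. For example, for odd $\sigma$ and a symmetric dictionary, $\delta_{\bm u}+\delta_{-\bm u}\in\ker A_z$, while its weight-derivative image is $2x_i\sigma'(\bm u\cdot\bm x)\neq0$. Restricting to unsigned atoms constrains only the family being approximated, not $V$, the $\mu_m$, or $\mu_e$, so it does not help. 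A symptom is that your large-neuron step uses neither the threshold nor nonnegativity. The split and the unsigned hypothesis therefore do no work, and the $D^{1/4}$ factor is guessed rather than derived.

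\textbf{The missing idea (Lemma~\ref{lemma:unsigned-transfer}) is to differentiate in the \emph{input}.} This is a linear operator on functions, so it acts on $V$ independently of which measures represent its elements. By Lemma~\ref{lemma:dom-convergence}, $\Delta\defeq\sum_{j}\partial_{x_j}$ satisfies $\Delta A_z\mu=A_tM_\Sigma\mu$, where $M_\Sigma\mu$ has density $\sum_jw_j$ with respect to $\mu$. Hence $\Delta V$ is well defined with dimension at most $n$.

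On an unsigned atom, $\Delta\sigma(\abs{\bm w}\cdot\bm x)=s\,\sigma'(\abs{\bm w}\cdot\bm x)$ with $s=\norm{\bm w}_1\geq\norm{\bm w}_\infty$. Nonnegativity prevents cancellation in this multiplier. That is why the single direction $\Delta$ can replace the $D$ directions $\partial_{x_j}$ of the signed proof.

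For $\norm{\bm w}_\infty\geq\alpha$ this gives $\sigma'(\abs{\bm w}\cdot\bm x)=s^{-1}\Delta v+A_t(s^{-1}M_\Sigma\mu_e)$, with $\mathcal B_t$-error at most $C_1\varepsilon/\alpha$. Multiplying by $x_i$ and taking the union over $i$ (Lemmas~\ref{lemma:to-tangent} and~\ref{lemma:union-bound}) yields $Dn$ directions. Passing to $L_2(p_x)$ costs a factor $C_{\mathrm{in}}=B_\sigma C_x$.

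Your small-neuron step is Lemma~\ref{lemma:small}, but its error is $C_\sigma\kappa_xD^{1/2}\alpha$, not $O(\alpha)$. The $D^{1/2}$ comes from the second-moment assumption. Balancing $C_\sigma\kappa_xD^{1/2}\alpha$ against $C_{\mathrm{in}}C_1\varepsilon/\alpha$ over $\alpha$ produces $C_{\mathrm{tr}}^+D^{1/4}\varepsilon^{1/2}$, with $C_{\mathrm{tr}}^+=\sqrt{C_{\mathrm{in}}C_1C_\sigma\kappa_x}$.
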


\subsection{Critical Initial Effective Width}
We combine the forward--backward transfer with a statistical-query-dimension argument. If the initial gradient-query family lies within \(\delta\) of an \(R\)-dimensional subspace, then all available queries are approximated by \(R\) basis directions, with a \(\delta\)-loss in effective tolerance. Insufficient query width then leaves at least two targets compatible with an all-zero transcript. For a zero-respecting learner, this transcript freezes the first-layer
representation indefinitely.

\begin{theorem}[Critical initial effective width]
\label{thm:critical-initial-width}
Let \( d_{\mathrm{SQ}}=\SQdim(\mathcal F,\gamma).\) Under the conditions of \propref{prop:main-width-transfer}, \(\exists C_q > 0\), independent of $D$, such that any zero-respecting learner that achieves squared loss below \(\frac{1-\gamma}{2}\) uniformly over \(\mathcal F\) satisfies
\[
    R_{\mathcal B_z}
    \left(
        \mathcal Z_0,\delta
    \right)
    \geq
    \frac{
        (d_{\mathrm{SQ}}-1)
        \left(
            \tau-C_{\mathrm{tr}}D^{1/4}\sqrt{\delta}
        \right)^2
    }{
        C_qD^2
        \left[
            1+(d_{\mathrm{SQ}}-1)\gamma
        \right]
    }.
\]
for all
 \(
    0\leq\delta
    <
    \frac{\tau^2}{
        C_{\mathrm{tr}}^2D^{1/2}
    }.
\)

\end{theorem}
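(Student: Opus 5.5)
We argue by contraposition. Suppose the initial effective width satisfies \(R_{\mathcal B_z}(\mathcal Z_0,\delta)=n\) with \(n\) below the stated threshold. We then exhibit two targets in \(\mathcal F\) that produce identical (all-zero) oracle transcripts. The proof has three steps.

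\textbf{Step 1: transfer to the query side.} Since \(d_n(\mathcal Z(W_0),\mathcal B_z)\le\delta\), \propref{prop:main-width-transfer} yields a subspace \(V\subset L_2(p_x)\) with
\[
\dim V\le m\defeq D^2n+D
\quad\text{and}\quad
\sup_{q\in\mathcal Q(W_0)}\inf_{v\in V}\norm{q-v}_{L_2}\le \epsilon\defeq C_{\mathrm{tr}}D^{1/4}\sqrt\delta .
\]
The constraint \(\delta<\tau^2/(C_{\mathrm{tr}}^2D^{1/2})\) is exactly the condition \(\epsilon<\tau\).

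For a target \(f\) with \(\norm{f}_{L_2}\le1\) (labels \(\pm1\)), Cauchy--Schwarz gives \(\abs{\mathbb E[q f]-\mathbb E[v_q f]}\le\epsilon\), where \(v_q\) is the projection of \(q\) onto \(V\). Hence if \(\abs{\mathbb E[v f]}\le(\tau-\epsilon)\norm{v}\) for every \(v\in V\), then every true gradient entry \(\mathbb E[q f]\) is smaller in magnitude than \(\tau\), up to a rescaling of \(v\) absorbed into \(C_q\). The norms \(\norm{v_q}\) are uniformly bounded by \(\sup_q\norm{q}_{L_2}\), which is \(O(1)\) under \cref{ass:smooth,ass:input-second-moment}. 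This constant, together with the step from \(m\le D^2n+D\) to \(m\le C D^2 n\) (valid for \(n\ge1\)), is what defines \(C_q\).

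\textbf{Step 2: a near-orthogonal family cannot correlate with a low-dimensional subspace.} Let \(f_1,\dots,f_d\in\mathcal F\), with \(d=d_{\mathrm{SQ}}\), be pairwise \(\gamma\)-correlated: \(\abs{\ip{f_i}{f_j}}\le\gamma\) for \(i\ne j\). Take an orthonormal basis \(e_1,\dots,e_m\) of \(V\) and let \(G\) be the Gram matrix of the \(f_i\). Then
\[
\sum_i\norm{P_Vf_i}^2=\sum_{l}\sum_i\ip{e_l}{f_i}^2\le m\,\lambda_{\max}(G)\le m\,[1+(d-1)\gamma],
\]
using the Gershgorin bound on \(G\). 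If \(\norm{P_Vf_i}\ge\tau-\epsilon\) held for all but one index \(i\), we would get \((d-1)(\tau-\epsilon)^2\le m[1+(d-1)\gamma]\). So whenever
\[
m<\frac{(d-1)(\tau-\epsilon)^2}{1+(d-1)\gamma},
\]
at least two distinct targets \(f_i,f_j\) satisfy \(\norm{P_Vf}<\tau-\epsilon\). For each such \(f\) we have \(\abs{\mathbb E[vf]}\le\norm{v}\,\norm{P_Vf}<(\tau-\epsilon)\norm{v}\), so by Step 1 every query satisfies \(\abs{\mathbb E[qf]}<\tau\). Substituting \(m\le C_qD^2n\) gives exactly the displayed threshold on \(n\).

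\textbf{Step 3: freezing, and why the learner fails.} For both surviving targets the adversarial oracle may return \(\widehat G^0=0\), choosing \(\xi=-\mathbb E[qf]\in[-\tau,\tau]\). Because the learner is zero-respecting, \(W^1=W^0\), so the query family is unchanged. By induction the oracle can return all-zero transcripts forever, and the first layer is frozen on both targets. The learner's final predictor is then a single function \(h\) used for both \(f_i\) and \(f_j\). We have
\[
\mathcal L(f_i)+\mathcal L(f_j)\ge\tfrac12\norm{f_i-f_j}^2=\tfrac12\bigl(2-2\ip{f_i}{f_j}\bigr)\ge 1-\gamma,
\]
where the first inequality is the parallelogram law. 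So one of the two losses is at least \((1-\gamma)/2\), contradicting uniform success.

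The one place needing care is that the readout, or any other optimiser state, might also react to the target. As in the modelling section (\cref{app:model}), we treat the predictor as determined by the frozen representation and the zero transcript. The readout signals \(\mathbb E[z_p f]\) can be added to \(V\) at an extra cost of \(n\) dimensions, which is absorbed into \(C_q\).

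\textbf{Main obstacle.} The hard part is Step 2's pigeonhole, because the bound needs two targets left undetected, not one. My first attempt is the Gershgorin/trace bound above, whose counting gives precisely the factor \((d_{\mathrm{SQ}}-1)/[1+(d_{\mathrm{SQ}}-1)\gamma]\).
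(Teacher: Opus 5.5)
Your proposal is correct and follows the paper's proof essentially step for step: contraposition through \propref{prop:main-width-transfer}, the Gershgorin/projection count of \cref{lemma:transcript-approx} (your $\norm{P_Vf_i}^2$ is the paper's $s_h$, with $B_q^2$ absorbed into $C_q$), the all-zero transcript frozen by the zero-respecting rule, and the two-target bound forcing loss at least $(1-\gamma)/2$. One caveat applies equally to your argument and the paper's: absorbing $D^2n+D$ into $C_qD^2n$ genuinely needs $n\geq1$, and the case $n=0$ cannot be rescued in general. For example, with $W^0=0$, $\sigma=\tanh$ and $\mathcal F=\{x_1,\dots,x_D\}$ one has $R_{\mathcal B_z}(\mathcal Z_0,0)=0$, yet the queries $x_i\sigma'(0)=x_i$ identify the target in one round, so the bound should be read with $R_{\mathcal B_z}(\mathcal Z_0,\delta)\geq1$ or with the additive $+D$ kept explicit.
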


\begin{corollary}\label{cor:unsigned-critical}
Suppose additionally, the target class satisfies the symmetry condition $\abs{ \ip{q_{\bm w,i}}{h} } = \abs{ \ip{q_{\abs{\bm w},i}}{h} }$ for every $h \in \mathcal F$, $i \in [D]$ and $\bm{w} \in \mathcal{W}$. Then, under the conditions of \propref{prop:parity-width-transfer}: 
\[ 
    R_{\mathcal B_z} \left( \mathcal Z_{0}^{+},\delta \right) 
    \geq \frac{ 
        (d_{\mathrm{SQ}}-1) \left( \tau-C_{\mathrm{tr}}^+D^{1/4}\sqrt{\delta} \right)^2 
        }
        { 
            C_q^+D \left[ 1+(d_{\mathrm{SQ}}-1)\gamma \right] 
        },
\]
for all \(
0 \leq \delta < \frac{ \tau^2 }{ (C_{\mathrm{tr}}^+)^2D^{1/2} }.
\)
\end{corollary}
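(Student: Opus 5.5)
The plan is to rerun the argument behind \thmref{thm:critical-initial-width}, making two substitutions: \propref{prop:parity-width-transfer} takes the place of \propref{prop:main-width-transfer}, and the unsigned query family \(\mathcal Q^+(W_0)\) takes the place of \(\mathcal Q(W_0)\). The symmetry hypothesis \(\abs{\ip{q_{\bm w,i}}{h}}=\abs{\ip{q_{\abs{\bm w},i}}{h}}\) is what allows this replacement at the level of transcripts. The oracle response to neuron \(p\) and coordinate \(i\) under target \(h\) is \(\mathbb E[x_i\sigma'(\bm w_p\cdot\bm x)h(\bm x)]=\ip{q_{\bm w_p,i}}{h}\). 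Whether \(\abs{\ip{q_{\bm w_p,i}}{h}}\le\tau\) is therefore decided entirely by the unsigned atom \(q_{\abs{\bm w_p},i}\in\mathcal Q^+(W_0)\). In particular, the all-zero feedback matrix \(\widehat G^0=0\) is a legal oracle answer for \(h\) exactly when \(\sup_{q\in\mathcal Q^+(W_0)}\abs{\ip qh}\le\tau\).

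First, I would set \(R=R_{\mathcal B_z}(\mathcal Z_0^+,\delta)\), so that \(d_R(\mathcal Z^+_0,\mathcal B_z)\le\delta\). \propref{prop:parity-width-transfer} then supplies a subspace \(V\subset L_2(p_x)\) of dimension at most \(DR+D\) such that every \(q\in\mathcal Q^+(W_0)\) lies within \(\epsilon\defeq C^+_{\mathrm{tr}}D^{1/4}\sqrt\delta\) of \(V\). The condition on \(\delta\) gives \(\epsilon<\tau\). For any \(h\) with \(\norm{h}_{L_2}\le1\) we get \(\abs{\ip qh}\le \abs{\ip{\Pi_Vq}h}+\epsilon\), so it suffices to find a target \(h\) in the class with \(\sup_{v\in V,\ \norm v\le \|q\|_{\max}}\abs{\ip vh}\le\tau-\epsilon\). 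Here \(\|q\|_{\max}\) is bounded by a constant by \cref{ass:smooth,ass:input-second-moment}, because \(\norm{q_{\bm w,i}}_{L_2}\le\norm{\sigma'}_\infty\sqrt{\mathbb E x_i^2}\).

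Second, I would apply the standard SQ-dimension counting step, exactly as in the proof of the theorem. Take \(d_{\mathrm{SQ}}\) functions \(h_1,\dots,h_{d_{\mathrm{SQ}}}\) with pairwise correlations at most \(\gamma\). Choose an orthonormal basis \(e_1,\dots,e_m\) of \(V\) with \(m\le D(R+1)\). The Gram-matrix (Bessel-type) bound gives \(\sum_j\ip{e_l}{h_j}^2\le 1+(d_{\mathrm{SQ}}-1)\gamma\) for each \(l\). Summing over \(l\), some pair of targets, indeed all but one, satisfies \(\norm{\Pi_V h_j}^2\le m[1+(d_{\mathrm{SQ}}-1)\gamma]/(d_{\mathrm{SQ}}-1)\). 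Whenever this quantity is below \((\tau-\epsilon)^2/C\), at least two targets \(h_j\ne h_{j'}\) both admit the all-zero transcript. A zero-respecting learner then leaves \(W\) frozen at \(W_0\) in both worlds, by induction on \(t\), and produces identical outputs. Since the two targets are \(\gamma\)-decorrelated, a common predictor must have squared loss at least \((1-\gamma)/2\) on one of them, which contradicts success. This step also needs the second-layer readout to be handled as in \cref{app:model}. Rearranging \(D(R+1)\ge (d_{\mathrm{SQ}}-1)(\tau-\epsilon)^2/(C[1+(d_{\mathrm{SQ}}-1)\gamma])\) and absorbing the \(+1\) and the constant into \(C^+_q\) yields the stated bound with \(D\) in place of \(D^2\).

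I expect the main obstacle to be justifying that the transcript depends only on the unsigned family. The individual signed inner products differ in sign, so the learner's map \(\phi\) could in principle distinguish targets through sign patterns. The fix is that I only ever need the all-zero transcript. Its admissibility is a statement about absolute values, which the symmetry hypothesis controls. So I never have to compare the signed transcripts themselves, and the comparison reduces to checking that the zero matrix is a valid oracle answer in both worlds. A secondary care point is absorbing the \(+D\) in the subspace dimension: when \(R\ge1\), \(DR+D\le 2DR\), and the case \(R=0\) is handled the same way with the constant adjusted.
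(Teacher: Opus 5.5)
Your route is the paper's. The symmetry hypothesis makes admissibility of the all-zero transcript depend only on the unsigned atoms $q_{\abs{\bm w},i}$. One then reruns the initial-width obstruction on $\mathcal Q^+(W_0)$, with \propref{prop:parity-width-transfer} replacing \propref{prop:main-width-transfer}. Your remark that only the zero transcript ever needs comparing, so any sign information available to $\phi$ is irrelevant, is exactly the paper's justification. One small slip: the averaging step gives at most $d_{\mathrm{SQ}}-2$ targets above the threshold $m[1+(d_{\mathrm{SQ}}-1)\gamma]/(d_{\mathrm{SQ}}-1)$, hence at least two below it, not ``all but one''. Two orthonormal targets can split a one-dimensional $V$ equally. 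At least two is all you use, so this is harmless.

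The genuine gap is your claim that the case $R=0$ is handled by adjusting $C_q^+$. For $d_{\mathrm{SQ}}\geq 2$ and $\delta$ in the stated range, the right-hand side is strictly positive, so the claimed inequality forces $R\geq1$. Your argument only gives $D(R+1)>(d_{\mathrm{SQ}}-1)(\tau-\epsilon)^2/(B_q^2[1+(d_{\mathrm{SQ}}-1)\gamma])$.

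This is not bookkeeping; the stated bound actually fails there. Take the following setting:
\begin{itemize}
\item $\sigma=\tanh$ and Rademacher inputs;
\item $\mathcal W$ the nonnegative unit $\ell_1$ ball;
\item $\mathcal F=\{x_j\}_{j\in[D]}$, which satisfies the symmetry condition, with $d_{\mathrm{SQ}}=D$ and $\gamma=0$;
\item $W_0=0$.
\end{itemize}
Then $\mathcal Z_0^+=\{0\}$, so $R_{\mathcal B_z}(\mathcal Z_0^+,\delta)=0$. Yet the queries $x_i\sigma'(0)=x_i$ return $1$ for $i=j$ and $0$ otherwise, up to $\tau<1/2$, which identifies $j$ in one round. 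A zero-respecting update to $\bm e_j$ with readout $1/\tanh 1$ then fits $x_j$ exactly.

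The paper's proof drops the same $+D$ term without comment, both in the initial-width obstruction lemma and in the inversion lemma. So this is a defect of the stated bound rather than of your strategy. What the argument proves is the bound with $R+1$ in place of $R$. The stated form follows whenever success forces $R\geq1$. An example is $k$-sparse parity with $k\geq2$: the $D$ extra directions $x_i\sigma'(0)$, contributed by the small-neuron part of the transfer, are orthogonal to every target, so $R=0$ already leaves every target compatible with the zero transcript.
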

At any transfer-compatible resolution \( \delta = O\left( \frac{\tau^2}{\sqrt D} \right) \), with a sufficiently small dimension-independent constant, and for nearly orthogonal concept classes, these lower bounds simplify to
\[
        R_{\mathcal B_z}
        \left(
            \mathcal Z_0,\delta
        \right)
        =
        \Omega\left(
            \frac{d_{\mathrm{SQ}}}{D^2}\tau^2
        \right), 
        \quad \text{and} \quad        
        R_{\mathcal B_z}
        \left(
            \mathcal Z_0^{+},\delta
        \right)
        =
        \Omega\left(
            \frac{d_{\mathrm{SQ}}}{D}\tau^2
        \right).
\]
under the respective assumptions of \thmref{thm:critical-initial-width} and  \corref{cor:unsigned-critical}.

\subsection{Sparse Parity Bounds}
The upper bound is based on the Fourier-gap mechanism for ReLU majority neurons \cite{barakHiddenProgressDeep2022},
which allows the discovery of parity at $\tau=O(D^{-(k-1)/2})$. To operate at coarser precision, we replace full-support majority neurons with sparse pooled \(\tanh\) neurons.
For a pool
\(U\subseteq[D]\) of size \(s\), let
\[
    z_U(\bm x)
    \defeq
    \tanh
    \left(
        \frac1{\sqrt s}
        \sum_{i\in U}x_i-b_{k-1}
    \right),
    \quad
    h_U(\bm x)
    \defeq
    \sech^2
    \left(
        \frac1{\sqrt s}
        \sum_{i\in U}x_i-b_{k-1}
    \right).
\]
For fixed \(k\), the relevant Walsh coefficients of $h_U$ satisfy
\[
    \lambda_{k-1,s}
    =
    \Theta_k
    \left(
        s^{-(k-1)/2}
    \right),
\qquad
    \lambda_{k+1,s}
    =
    O_k
    \left(
        s^{-(k+1)/2}
    \right).
\]
The coefficient gap makes a target coordinate \(d\in S\) detectable whenever a pool contains \(S\setminus\{d\}\). Smaller pools strengthen the signal and tolerate coarser feedback, but require more neurons to provide a witness pool for every target coordinate.
\begin{theorem}[One-round sparse-parity upper bound]
\label{thm:one-round-parity-upper}
Suppose \(k\leq s<D\) and
\[
    2\tau
    <
    \min
    \left\{
        \abs{\lambda_{k-1,s}},
        \abs{
            \lambda_{k-1,s}
            -
            \lambda_{k+1,s}
        }
    \right\}.
\]
Then there exists a network of width, 
\[
    P
    =
    O_k
    \left(
        \left(\frac Ds\right)^{k-1}
        \log D
    \right)
\]
along with a zero-respecting update rule whose gradient transcript identifies every \(k\)-sparse parity in one
non-adaptive oracle round.
\end{theorem}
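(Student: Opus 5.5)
The plan is to construct the network explicitly from random pools, compute the one-round oracle response in the Walsh basis, and show that thresholding it recovers \(S\). First, the network: take \(P\) neurons, each indexed by a pool \(U_p\subseteq[D]\) of size \(s\) drawn uniformly at random and independently. The first-layer weights are \(w_{p,i}=s^{-1/2}\mathbf 1[i\in U_p]\), with a bias \(-b_{k-1}\); the bias can be absorbed through a constant input coordinate, or handled by allowing the bias in \(\sigma\). The activation is \(\sigma=\tanh\), so \(\sigma'(\bm w_p\cdot\bm x)=h_{U_p}(\bm x)\).

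For the target \(y=\chi_S\), the population query is
\[
g_{p,d}=\mathbb E[x_d h_{U_p}(\bm x)\chi_S(\bm x)]=\widehat{h_{U_p}}(S\triangle\{d\}).
\]
Since \(h_{U_p}\) is a symmetric function of the coordinates in \(U_p\), its Walsh coefficient on a set \(T\) vanishes unless \(T\subseteq U_p\), and otherwise depends only on \(|T|\). This gives three cases.
\begin{itemize}
\item If \(d\in S\) and \(S\setminus\{d\}\subseteq U_p\), then \(g_{p,d}=\lambda_{k-1,s}\).
\item If \(d\notin S\) and \(S\cup\{d\}\subseteq U_p\), then \(g_{p,d}=\lambda_{k+1,s}\).
\item In every other case, \(g_{p,d}=0\).
\end{itemize}

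The update rule will be a fixed decoder \(\phi\). The learner declares \(d\in\widehat S\) iff some neuron \(p\) has \(|\hat g_{p,d}|>\tau\) and also \(|\hat g_{p,d}-\lambda_{k+1,s}|>\tau\). It then writes the resulting set into \(W\), for example by setting neuron weights to encode \(\widehat S\). To keep \(\phi\) zero-respecting, \(\phi\) must vanish on the all-zero transcript. This holds because \(0\) fails the first test when \(\tau\ge0\), so \(\widehat S=\emptyset\) and we may map that case to no update.

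Correctness follows from the hypothesis \(2\tau<\min\{|\lambda_{k-1,s}|,|\lambda_{k-1,s}-\lambda_{k+1,s}|\}\). A witness response \(\lambda_{k-1,s}+\xi\) lies more than \(\tau\) from both \(0\) and \(\lambda_{k+1,s}\). A non-witness response lies within \(\tau\) of either \(0\) or \(\lambda_{k+1,s}\). Consequently, \(d\notin S\) is never selected, and \(d\in S\) is selected provided some pool contains \(S\setminus\{d\}\).

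The remaining step, which I expect to be the main quantitative point, is a covering bound. We need a single pool family such that every one of the at most \(\binom{D}{k-1}\) sets \(T\) of size \(k-1\) is contained in some \(U_p\). This makes the family simultaneously valid for all \(S\), so the round is non-adaptive. For a random \(s\)-subset,
\[
\Pr[T\subseteq U]=\binom{D-k+1}{s-k+1}\Big/\binom{D}{s}\ \ge\ c_k (s/D)^{k-1}.
\]
This needs care when \(s\) is close to \(k\); the product \(\prod_{j<k-1}\frac{s-j}{D-j}\ge (\frac{s-k+2}{D})^{k-1}\) handles it, and the constant depends only on \(k\). A union bound gives
\[
\Pr[\exists T \text{ uncovered}]\le D^{k-1}\bigl(1-c_k(s/D)^{k-1}\bigr)^P\le D^{k-1}e^{-Pc_k(s/D)^{k-1}}.
\]
This is less than \(1\) once \(P=O_k((D/s)^{k-1}\log D)\). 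By the probabilistic method, a deterministic covering family of that width exists. Fixing it yields the network, and the decoder above identifies every \(k\)-sparse parity in one round.
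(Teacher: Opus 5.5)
Your proposal is correct and follows essentially the same route as the paper. Both arguments use a random $(D,s,k-1)$ covering family of pooled $\tanh$ neurons obtained by a union bound, evaluate $\widehat{h_{U_p}}(S\triangle\{d\})$ in three cases via permutation symmetry, and decode by thresholding across pools. The differences are minor: you reject responses within $\tau$ of $0$ or $\lambda_{k+1,s}$ rather than accepting responses within $\tau$ of $\lambda_{k-1,s}$ (equivalent under the hypothesis), and you explicitly verify the zero-respecting property on the all-zero transcript and handle the bias $b_{k-1}$, both of which the paper's proof leaves implicit.
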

The construction chooses a covering family of size-\(s\) pools such
that, for every target \(S\) and every coordinate \(d\in S\), some pool
contains \(S\setminus\{d\}\). Each target coordinate therefore has a
witness gradient response at the large order-\((k-1)\) coefficient,
while non-target coordinates produce only zero or a lower-order
coefficient. Thresholding across all pools permits robust recovery of
the complete support in one non-adaptive round.
\begin{corollary}[Precision-width interpolation]
\label{cor:precision-diversity}
Let \(k\) be fixed and suppose \(\tau^2=\Theta(D^{-\alpha}),\) where \( 0<\alpha<k-1\). Then there exists a one-round learner of width
\[
        P
        =
        O_k
        \left(
            D^{k-1-\alpha}\log D
        \right).
\]
\end{corollary}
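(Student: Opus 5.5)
The plan is to derive the corollary directly from \thmref{thm:one-round-parity-upper}. We choose the pool size \(s\) as a function of \(D\) so that the precision condition holds at \(\tau^2=\Theta(D^{-\alpha})\). Then we read off the resulting width \(P=O_k((D/s)^{k-1}\log D)\).

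\textbf{Step 1: the signal gap.} Using the stated Walsh-coefficient asymptotics \(\lambda_{k-1,s}=\Theta_k(s^{-(k-1)/2})\) and \(\lambda_{k+1,s}=O_k(s^{-(k+1)/2})\), there is a constant \(s_0(k)\) such that for all \(s\geq s_0(k)\)
\[
    \min\left\{\abs{\lambda_{k-1,s}},\abs{\lambda_{k-1,s}-\lambda_{k+1,s}}\right\}\geq c_k\, s^{-(k-1)/2}.
\]
This holds because the \(\lambda_{k+1,s}\) term is smaller by a factor \(O_k(1/s)\). Hence the hypothesis \(2\tau<\min\{\cdot\}\) of the theorem is implied by \(2\tau< c_k s^{-(k-1)/2}\), that is, by \(s< (c_k/(2\tau))^{2/(k-1)}\).

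\textbf{Step 2: choice of \(s\).} Write \(\tau\leq C D^{-\alpha/2}\). Set
\[
    s\defeq \left\lfloor \epsilon_k D^{\alpha/(k-1)}\right\rfloor,
\]
with \(\epsilon_k>0\) a small constant chosen so that \(2\tau\leq 2CD^{-\alpha/2}<c_k s^{-(k-1)/2}\). Because \(0<\alpha<k-1\), the exponent \(\alpha/(k-1)\) lies in \((0,1)\). Therefore \(s\to\infty\), and for all sufficiently large \(D\) we get \(\max\{k,s_0(k)\}\leq s<D\), so the remaining hypotheses of the theorem hold. For the finitely many small \(D\), the bound is absorbed into the \(O_k\) constant, for example by taking \(s=k\) at those sizes, which costs at most a \(D\)-bounded constant.

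\textbf{Step 3: width.} Plugging in gives \((D/s)^{k-1}=\Theta_k(D^{k-1}\cdot D^{-\alpha})\), so
\[
    P=O_k\left(D^{k-1-\alpha}\log D\right).
\]
The zero-respecting one-round learner is the one supplied by \thmref{thm:one-round-parity-upper}.

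The main point needing care is Step 1. We need a uniform lower bound on \(\abs{\lambda_{k-1,s}-\lambda_{k+1,s}}\) of order \(s^{-(k-1)/2}\). This requires \(\lambda_{k-1,s}\) to be bounded away from zero at that scale for every large \(s\), not merely along a subsequence. The \(\Theta_k\) statement gives exactly this; the bias \(b_{k-1}\) is presumably chosen for that purpose. The remaining steps are bookkeeping with floors and constants.
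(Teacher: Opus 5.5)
Your proposal is correct and is the argument the paper intends, although the paper writes no explicit proof: the corollary follows from the one-round theorem by taking pools of size $s\asymp D^{\alpha/(k-1)}$, small enough that the order-$(k-1)$ Walsh gap $\Theta_k(s^{-(k-1)/2})$ exceeds $2\tau$, which gives $(D/s)^{k-1}=\Theta_k(D^{k-1-\alpha})$ (your exponent $\alpha/(k-1)$ is the right one, whereas the paper's passing remark $s=\Theta(D^\alpha)$ in the covering subsection is inconsistent with this scaling). Your small-$D$ remark is not needed because the $O_k$ claim is asymptotic, and choosing $s=k$ there would not by itself guarantee the precision condition.
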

The construction interpolates between two limiting strategies (\cref{sec:endpoints}). At
precision \( \tau^2=O_k(D^{-(k-1)}) \), $O(k)$ neurons suffice, where the $k$ scaling is needed to represent the solution. At constant precision and fixed \(k\),
a static dictionary containing representations of every
\((k-1)\)-parity has width \(O_k(D^{k-1})\) and also learns in one
round.

Since a collection of \(P\) activities has effective width at most \(P\) at every nonnegative resolution, the constructive width bound provides an upper bound on critical effective width. Since the construction uses only positive weights, its width bound also applies to unsigned effective width. Combining \corref{cor:unsigned-critical} with \corref{cor:precision-diversity} therefore characterises the critical unsigned width up to logarithmic factors: 
\[
\boxed{
    R_{\mathrm{critical}}^{+} = \widetilde{\Theta}_k(D^{k-1}\tau^2).
    }
\]

\subsection{Effective Width and Fixed-Feature Fitting}
\paragraph{Generalisation.}
Effective width also controls the statistical complexity of a frozen representation when evaluated on fresh data. If \(\mathcal Z\) has effective width \(R\) at resolution \(\delta\), then every linear readout satisfying \(\norm{\bm a}_1\leq A\) obeys, schematically (\propref{prop:width-generalisation}), \[ \mathcal L(f)-\widehat{\mathcal L}_N(f) \lesssim A\left[ (1+\delta) \sqrt{\frac{\min\{R,C_w^2D^2\}}{N}} +\delta \right] + \sqrt{\frac{\log(1/\eta)}{N}}, \] where \(\lesssim\) suppresses fixed activation- and loss-dependent constants. The two terms in the minimum arise respectively from Rademacher-complexity estimates for the \(R\)-dimensional approximating space \citep{bartlettRademacherGaussianComplexities2001} and for the global variation-norm ball \citep{bachBreakingCurseDimensionality2017}. In particular, although discovering a \(k\)-sparse parity may require effective width polynomial in \(D\), once its support is known the target admits an \(O(k)\)-width representation. For controlled readout norm and approximation error, the resulting fixed-feature complexity bound is independent of the ambient dimension.

\paragraph{Interpolation.}
At initialisation, fixed-feature interpolation imposes a complementary constraint on the readout norm. For sparse parities, suppose that the representation and its \(R\)-dimensional approximating space are chosen independently of the parity target. If \(R/N\to0\), then, for every fixed parity target, its empirical projection onto the target-independent approximating space vanishes with high probability. Any exactly interpolating readout must therefore fit an order-one component through the residual features, whose variation norms are at most \(\delta\), and hence 
\[
    \norm{\bm a}_1 \geq \frac{1-o(1)}{\delta} 
\] 
with probability \(1-o(1)\) over the initialisation and sample (\propref{prop:parity-interpolation-cost}). Thus \(R\asymp N\) marks a crossover between a low-width regime in which interpolation forces norm inflation and one in which the dominant approximating space may itself support interpolation.

\subsection{Predicted Phase Diagram}
To derive broader predictions for the typical training regime of neural networks (stochastic feedback, task-agnostic initialisation and update rule) from the oracle bounds, we make two additional modelling assumptions.

\paragraph{Sample accumulation.} The formal results above concern a single coordinatewise
tolerance-\(\tau\) oracle call. To obtain a trajectory-level
interpretation, we assume that optimisation time can be traded for
statistical precision, capturing possible hidden-progress effects in
SGD. For a fixed query, averaging its stochastic gradient over \(N\)
fresh samples produces fluctuations of order \(N^{-1/2}\), motivating
\(
    \tau^2
    \asymp
    N^{-1}.
\)

This interpretation is exact only when the representation remains fixed
during accumulation. Along an evolving trajectory, successive samples
estimate gradients at different representations, so their information
need not combine as one fixed-query average. For sparse parity, we
nevertheless use this scaling as a trajectory-level approximation,
motivated by the initial slow-learning regime in which the relevant
population correlations are weak and the representation changes only
gradually.\footnote{
Even when sample averages concentrate, fixed-step SGD does not
automatically translate increasing certainty into progressively stable
parameter estimates; learning-rate decay or explicit averaging may be
required.
}

\paragraph{Operational width.} We assume that the relevant discovery scaling is given by the unsigned representation, corresponding to the nonnegative-weight regime in which \(\mathcal Z=\mathcal Z^+\) and the sharper \(D\)-inflation transfer applies. In principle, coordinated sign cancellations could make ordinary effective width substantially smaller than unsigned width. Realising and exploiting such a dense encoding, however, may require finely tuned networks and structured decoding of the resulting gradient responses. We therefore conjecture that unsigned width better captures the operationally relevant regime for typical neural-network training.

\subsubsection{Effective width--sample phase diagram}
Under the fresh-sample and unsigned scaling assumptions, the sparse-parity bounds yield the task frontier
\[
        N R_{\mathrm{task}}(N)
        =
        \widetilde\Theta_k
        \left(
            D^{k-1}
        \right).
\]
A narrow representation consequently requires more samples before the
target becomes visible, whereas greater functional diversity exposes
the target at lower statistical precision.

The task-visibility frontier and the fixed-feature crossover divide the
width--sample plane into qualitatively different regimes. Below the
task frontier, the target is not visible through gradient feedback.
Below the fixed-feature crossover \(R\asymp N\), exact interpolation
requires readout-norm inflation; beyond it, the dominant feature space
may itself support interpolation, so additional norm control may be
needed to select a generalising solution.
Figure~\ref{fig:precision-diversity-frontier} summarises this picture.

\section{Experiments}

Under the previous modelling assumptions, the task frontier motivates a corresponding trajectory-level
width--time hypothesis. Let \(T_\varepsilon\) denote the first time at
which worst-case accuracy reaches \(1-\varepsilon\). If successive
training steps contribute comparable increments of fresh statistical
information, then a trajectory of approximately constant effective
width \(R\) should obey the lower envelope
\(T_\varepsilon R=\widetilde\Omega_k(D^{k-1})\). For time-varying
representations, this suggests replacing \(T_\varepsilon R\) by the
cumulative expenditure
\(\mathcal C_{T_\varepsilon}(\delta)
\defeq\sum_{s\leq T_\varepsilon}
R_{\mathcal B_z}(\mathcal Z^s,\delta)\). The empirical prediction is
therefore that successful trajectories require
\(\mathcal C_{T_\varepsilon}(\delta)
=\widetilde\Omega_k(D^{k-1})\), trading lower instantaneous width
against longer optimisation time.

\subsection{Estimating Effective Width}
Computing Kolmogorov \(n\)-widths in Variation space is generally an infinite-dimensional optimisation problem. Sparse parity provides a tractable setting because the Hadamard--Walsh functions form a finite orthogonal basis of \(L_2(p_x)\). We expand each neuron activity in this basis and construct a low-dimensional approximation to the activity set, weighting each Walsh coefficient by an upper bound on its representation cost in \(\mathcal B_z\). 

We define \(\widehat R_t(\delta)\) as the smallest subspace dimension for which the resulting weighted residual is at most \(\delta\), and define the cumulative effective-width upper bound by 
\[ 
   \widehat{\mathcal C}_t(\delta) \defeq \sum_{s=1}^{t} \widehat R_s(\delta). 
\] 
The construction is guaranteed to upper-bound effective width, although
it need not be tight because it searches over a restricted family of
Walsh-based approximations and the representation cost is not necessarily minimal. 
Consequently, low estimated width certifies a compact functional approximation, whereas high estimated width may partly reflect looseness of the Walsh restriction.

\subsection{Cumulative Effective Width and Ambient Dimension}
We focus on \(k=3\), for which the trajectory-level prediction is 
\[ 
    \widehat{\mathcal C}_{T_\varepsilon}(\delta) = \widetilde\Omega(D^2),
\] 
and consider ambient dimensions \(D\in\{5,7,9,11\}\). For each hyperparameter setting, indicated by colour, we evaluate
multiple targets, initialisations, and feedback realisations. Thin
curves show individual runs, while the thick curve reports the minimum
test accuracy across the evaluated runs, providing an empirical analogue
of the theory's uniform target-and-feedback criterion.

Figure~\ref{fig:vary-k-ib} plots accuracy against 
effective-width, in the style of information-plane
complexity--accuracy plots. The instantaneous effective-width
trajectories depend strongly on the initialisation scale, and successful
runs do not always exhibit a pronounced transient increase. This is
consistent with the width--time interpretation: trajectories that
maintain low instantaneous effective width may compensate through a
longer time to reach the target.

By contrast, cumulative effective-width expenditure exhibits a clearer
relationship with task accuracy and is less sensitive to
initialisation; see \cref{fig:vary-k-loglog}. Finally, normalising the
cumulative effective-width upper bound by \(D^2\) produces an
approximate collapse over the dimensions tested
(\cref{fig:vary-k-collapse}), consistent with the predicted
\(\widetilde\Omega(D^2)\) dependence.
\begin{figure*}[t]
    \centering
    \begin{subfigure}[t]{0.85\textwidth}
        \includegraphics[width=0.95\linewidth]{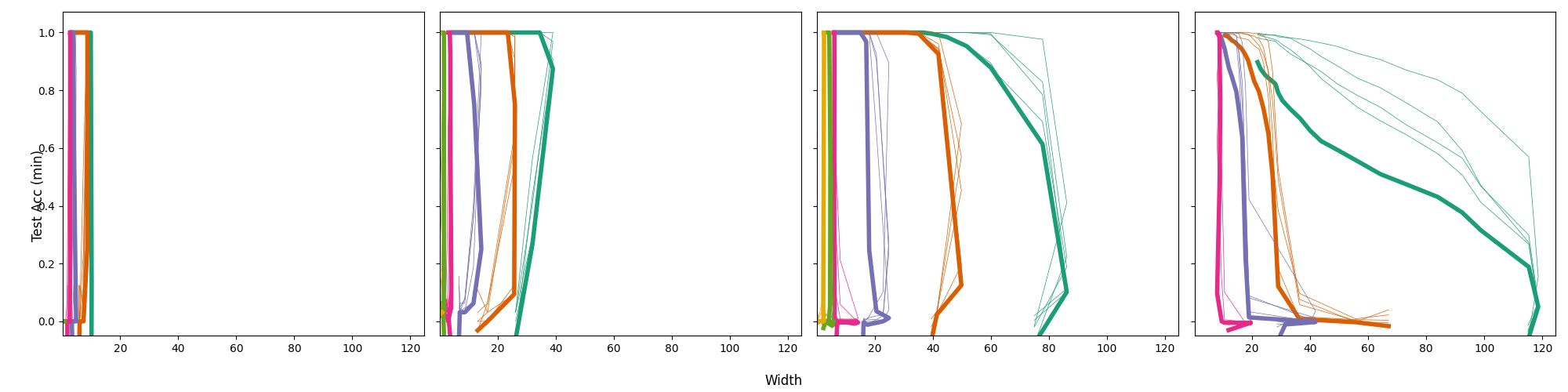}
        \caption{Instantaneous representation complexity.}
        \label{fig:vary-k-ib}
    \end{subfigure}
    \centering
    \begin{subfigure}[t]{0.4\linewidth}
        \centering
        \includegraphics[height=1.5in]{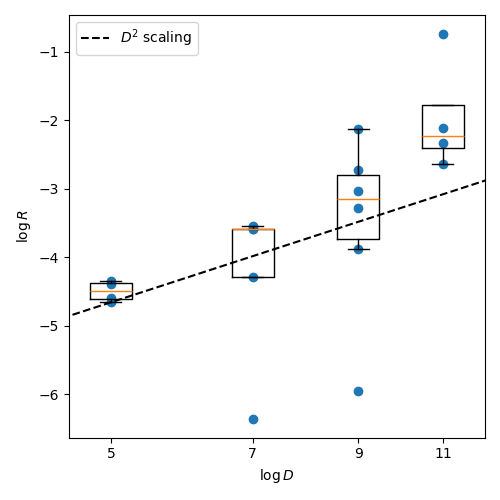}
        \caption{Scaling with ambient dimension.}
        \label{fig:vary-k-loglog}
    \end{subfigure}%
    ~
    \begin{subfigure}[t]{0.4\linewidth}
        \centering
        \includegraphics[height=1.65in]{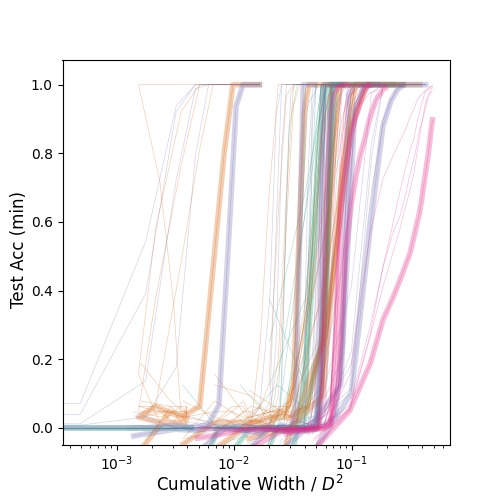}
        \caption{Scaling normalised by $D^2$.}
        \label{fig:vary-k-collapse}
    \end{subfigure}

    \caption{
    \small Representation complexity during learning and its scaling with problem dimension.
    Results are shown for $k=3$ sparse parity across ambient dimensions
    $D\in\{5,7,9,11\}$, with colours denoting different initialisation scales.
    (a) Instantaneous width $\widehat{R}_t$ can vary substantially across
    successful trajectories, with small initialisations exhibiting little to no
    transient increase. (b) The distribution of $\widehat{\mathcal C}_{\varepsilon}$ across hyperparameter settings as a function of $D$, against the predicted $D^2$ scaling. (c) Normalising $\widehat{\mathcal C}_{\varepsilon} / D^2$ produces an approximate collapse across the tested dimensions.
    }
    \label{fig:vary-k}
\end{figure*}

\subsection{Effective Width, Optimisation Time, and Generalisation}  

The trajectory-level prediction also implies a width--time constraint.
For a prescribed accuracy \(1-\varepsilon\), we consider the training and test stopping times $T^{\rm{train}}_{\epsilon}$ and $T^{\rm{test}}_{\epsilon}$ and the corresponding average empirical effective-width upper bound before these
stopping times
\[
    \overline R_{\varepsilon}^{\mathrm{x}}(\delta)
    \defeq
    \frac{1}{
        T_{\varepsilon}^{\mathrm{x}}
    } \widehat{\mathcal C}_{
        T_{\varepsilon}^{\mathrm{x}}
    }(\delta) \quad \mathrm{x} \in \{\rm{train}, \rm{test}\}. 
\]
Under approximately uniform information accumulation across training
steps, the cumulative-width prediction gives the lower-envelope
constraint
\[
    T_{\varepsilon}^{\mathrm{x}}
    \overline R_{\varepsilon}^{\mathrm{x}}(\delta)
    =
    \widetilde\Omega_k(D^{k-1}), \quad \rm{x} \in \{\rm{train}, \rm{test}\}.
\]
Thus, trajectories cannot simultaneously maintain low average
effective width and reach either performance threshold in few
optimisation steps.

For \(k=3\), \(D=10\) parity, the training-time frontier remains close
to the inverse lower-envelope scaling
\(
    \overline R_\varepsilon^{\mathrm{train}}
    \propto 1 / T_\varepsilon^{\mathrm{train}}
\) (\cref{subfig:log-log,subfig:time-train-acc}). The test-time frontier obeys the same one-sided constraint but changes
character beyond a characteristic effective width. Increasing width
continues to accelerate fitting without producing a corresponding
reduction in
\(T_\varepsilon^{\mathrm{test}}\), causing the test-time lower envelope
to become approximately U-shaped (\cref{fig:u-shape}). 

The emergence of a characteristic width in the test-time frontier is
qualitatively consistent with the sample-supported crossover
\(R\asymp N\): beyond this scale, greater effective width can continue
to accelerate fitting while no longer improving the low-dimensional population certificate. We hypothesise that rapid interpolation by the second layer may suppress the residual error signal driving first-layer adaptation, separating fast empirical fitting from slower feature discovery.


\begin{figure*}[t!]
    \centering
    \begin{subfigure}[t]{0.3\textwidth} 
        \centering
        \includegraphics[height=1.5in]{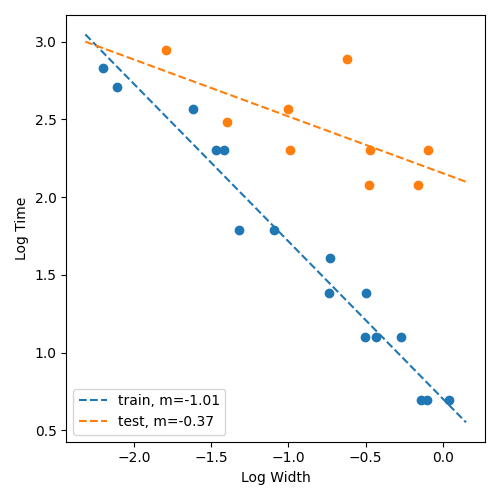}
        \caption{Accuracy against cumulative complexity, log-log plot.}
        \label{subfig:log-log}
    \end{subfigure}%
    ~
    \begin{subfigure}[t]{0.3\textwidth}
        \centering
        \includegraphics[height=1.55in]{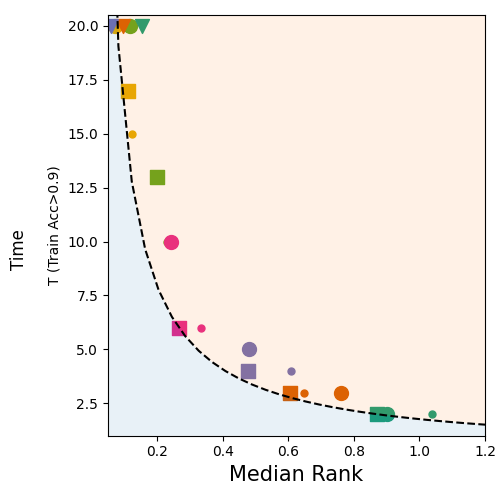}
        \caption{Time (train) against complexity.}
        \label{subfig:time-train-acc}
    \end{subfigure}%
    ~
    \begin{subfigure}[t]{0.3\textwidth}
        \centering
        \includegraphics[height=1.55in]{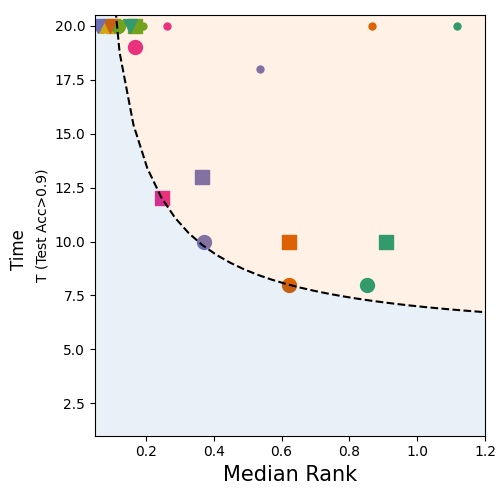}
        \caption{Time (test) against complexity.}\label{fig:u-shape}  
    \end{subfigure} 
    \caption{\small
    Width--time frontiers for \(k=3\), \(D=10\) parity across
    initialisation scales (colour) and weight-decay values (marker).
    (a) Accuracy plotted against cumulative effective-width expenditure.
    (b) The training-time frontier remains close to the predicted inverse
    lower envelope and is comparatively insensitive to weight decay.
    (c) The test-time frontier departs upward beyond a characteristic
    effective width, resulting in a U-shaped lower envelope. Runs with
    stronger weight decay remain closest to the frontier, while the growing
    gap
    \(T_{0.1}^{\mathrm{test}}-T_{0.1}^{\mathrm{train}}\)
    indicates delayed generalisation.
    }    
\end{figure*}

\section{Discussion}

Our results separate the resources required to represent a solution from those required to discover it. A \(k\)-sparse parity has a compact representation once its support is known, but noisy gradient feedback requires polynomially greater effective width to identify that support. This gives overparameterisation a computational interpretation: useful width does not merely enlarge the predictor class, but creates distinct functional directions through which the unknown target can influence the optimiser. The relevant resource is therefore effective width, not the raw number of neurons.

\subsection{Limitations and Future Directions}

The largest unresolved gap is the factor \(D\) between the general signed-weight transfer and its unsigned equivalent. Given an \(n\)-dimensional forward approximating space \(V\), the generic proof must span the backward features \(\{x_i\partial_{x_j}V:i,j\in[D]\}\), leading to a \(D^2\) inflation. For parity, sign invariance of the query magnitudes reduces this to \(D\) after symmetrisation, but it remains unclear whether the additional factor can be realised by a learnable signed representation or is only an artefact of the task-agnostic argument. 

Resolving this question may require modelling the restrictions that neural-network training places on how gradient responses are combined and decoded. Effective width provides a common functional geometry for both the information available during target discovery and the forward complexity governing interpolation and population control. A natural next step is to make
this geometry more sensitive to the information operationally accessible
to the optimiser without losing that shared control.

\paragraph{Task specificity.} The width-transfer and SQ lower-bound framework applies to a broad class of smooth activations and concept classes, but our analysis is currently limited to shallow networks. Extending it to deep networks would require tracking how representations and target-dependent information are transformed and shared across layers. The presented upper bounds are more problem-specific: they exploit the permutation symmetry and explicit Fourier--Walsh structure of sparse parity. Future work could study effective width in other problems that combine compact target representations with difficult feature discovery, including Gaussian single-index models \cite{arousOnlineStochasticGradient2021a, damianSmoothingLandscapeBoosts2023} and other concept classes with large SQ dimension \citep{feldmanGeneralCharacterizationStatistical2017}.

\paragraph{Optimisation dynamics.}
Rather than attempting to solve the full finite-width SGD dynamics, we
use a coordinatewise adversarial oracle to derive information
constraints that apply across a broad class of update rules. We
interpret its tolerance through fresh-sample accumulation,
\(\tau^2\asymp N^{-1}\). In line with hidden-progress analyses
\citep{barakHiddenProgressDeep2022}, this interpretation assumes that
weak population correlations can be resolved by accumulating statistical
information across samples. Whether sample-wise gradient methods are
fundamentally governed by this SQ-style scaling, or can exploit
information in individual samples to attain more favourable PAC-type
scaling
\citep{abbeUniversalityDeepLearning2020,barakHiddenProgressDeep2022,damianSmoothingLandscapeBoosts2023},
remains an open question.

At the same time, the broad class of zero-respecting update rules leaves open how efficiently population correlations are used. At low effective width, constant-learning-rate SGD may use weak correlations less efficiently than explicit averaging. Consistent with this possibility, standard Gaussian initialisation and Langevin-like perturbed gradient dynamics can remain uninformative for fixed parity targets when the relevant Fourier correlations are small \citep{shoshaniHardnessLearningFixed2025}. Annealing, iterate averaging, or increasing batch size may be required to approach the frontier. At high effective width, rapid second-layer interpolation may instead weaken the residual signal driving first-layer feature learning. Characterising these nonlinear departures for more tightly specified optimiser classes is an important direction for future work.

\paragraph{Measuring effective width.} Effective width is resolution-dependent and generally defined through
an infinite-dimensional functional optimisation. For sparse parity, the
finite input space and explicit Fourier--Walsh basis yield a
constructive, although potentially loose, upper bound. This strategy
may extend to other tasks with known input distributions and appropriate
orthogonal decompositions, but estimating the effective width on an unknown
distribution remains open. The essential property of the variation geometry is not the particular use
of total variation itself, but that its coefficient norm controls neural
representation cost while preserving the atomwise structure needed for
the forward--backward transfer. Future work could seek alternative
coefficient norms or feature-space relaxations that preserve this
transfer while making effective width easier to estimate from data. 

\paragraph{Accessible information.} Unlike the noise-regularised mutual information \(I[X;Z+\epsilon]\) used in analysis of deterministic networks \cite{shwartz-zivOpeningBlackBox2017, saxeInformationBottleneckTheory2019}, whose noise distribution models finite numerical precision, effective width uses a scalar resolution \(\delta\) to quantify the functional directions linearly accessible to prediction and gradient feedback. This gives resolution a direct operational role in learning and can simplify its estimation in structured settings. A measure adapted to stochastic gradient feedback may provide an intermediate notion, retaining effective width’s operational focus on linearly accessible directions while measuring their distinguishability under sampling noise rather than worst-case tolerance. 

\subsection{Conclusion}
More broadly, effective width exposes two sides of
overparameterisation. Greater functional diversity can weaken
statistical control, but it can also reduce the precision or time
required for target discovery. The resulting frontier is particularly
relevant when accurate gradient information is expensive, and suggests
a basis for studying compute-optimal learning through intrinsic
trade-offs among effective width, feedback precision, and optimisation
time. Crucially, the resources needed for learning need not resemble
those needed for inference: a predictor that is compact once learned
may require a substantially richer representation to discover.

\newpage
\bibliography{references}

\newpage

\appendix

\section{Preliminaries}

\subsection{Model} \label{app:model}
We recall our model $f(\ \cdot\ ; \theta): \mathbb{R}^D \rightarrow\mathbb{R}$,
\[
f(\bm{x}; \theta)
    =
    \sum_{p=1}^{P} a_{p} \sigma(\bm{w}_{p}\cdot \bm{x}).
\]
Where $\theta =(\bm{a}, W)$, with $W = [\bm{w}_1,\cdots, \bm{w}_P] \in \mathbb{R}^{P\times D}$ and $\bm{a} = (a_1,..,a_P) \in \mathbb{R}^P$. Where we assume $\bm{w}_i \in \mathcal{W}$ is a compact subset of $\mathbb{R}^D$.

The optimiser does not observe the target distribution directly; instead, it receives noisy gradient information through an oracle,  
\[\hat{g}_{p,d}^t = g_{p,d}^t+\xi_{p,d}^t, \quad \xi_{p, d}^t \in [-\tau, \tau].\]
Where $ g_{p,d}^t = \mathbb{E}[x_d\sigma'(\bm{w_p}^t \cdot \bm{x}) y]$ is the population gradient and $\tau$ defines the noise scale of the oracle.

We further restrict the learner to update the weights (and queries) based only on the previous round of feedback through,
\[
    W^{t+1} = W^t + \phi(W^t, \hat{G}^t)
\]
where $\hat{G}^t \in \mathbb{R}^{P\times D}$ is the feedback received, $(\hat{G}^t)_{p,d}=\hat{g}_{p,d}^t$. Where,
\[ \phi: \mathbb R^{P\times D} \times \mathbb R^{P\times D} \rightarrow \mathbb R^{P\times D}, \] is independent of the unknown target except through its arguments. We assume the update rule is zero-respecting in its feedback argument,
\[
    \phi(W, 0_{P\times D}) = 0_{P\times D}. 
\]

\paragraph{Second-layer adaptation and optimiser state.} A fuller model could augment \(W^t\) with second-layer coefficients and other optimiser state. This introduces no uncontrolled target-information channel: second-layer target information is mediated directly by correlations with the current activities, \(\mathbb E[z_p(\bm x)y]\), and is therefore controlled by the effective width of \(\mathcal Z^t\). First-layer gradients instead involve the tangent features \(x_i\sigma'(\bm w_p^t\cdot\bm x)\), necessitating the forward--backward transfer. The second layer and optimiser state may combine the correlations exposed by these families, but cannot create target information absent from them. Such extensions could be represented by time-dependent state updates \[ W^{t+1} = W^t+\phi_t(W^t,S^t,\widehat G^t), \qquad S^{t+1} = \psi_t(W^t,S^t,\widehat G^t). \] They do not affect indistinguishability: under a common all-zero transcript, every target-independent state variable evolves identically for both targets. If \(\phi_t(W,S,0)=0\), the first-layer query family also remains fixed, allowing the zero transcript to continue indefinitely. We omit these additional variables to keep the notation focused on whether the representation exposes detectable target information.

\paragraph{Assumptions}
\begin{assumption}[Bounded weights] \label{ass:bounded-weight}
We assume weights are drawn from a compact subset
$\mathcal{W}\subset \mathbb{R}^D$, with
$\mathcal{W} \subseteq B_\infty(0,C_w)$.
\end{assumption}

\begin{assumption}[Smooth activations] \label{ass:smooth}
We assume
\(
\abs{\sigma(t)} \leq A_{\sigma}(1+\abs{t})
\), 
\(
\abs{\sigma'(t)}\leq B_\sigma
\)
and
\( \norm{\sigma''}_{\infty} \leq C_{\sigma}\).
\end{assumption}


\begin{assumption}[Input second-moment control]
\label{ass:input-second-moment}
There exists a constants \(C_x, \kappa_x<\infty\), independent of \(D\),
such that,
\[ \max_{i\in[D]} \mathbb E[x_i^2] \leq C_x^2. \] and 
\[
    \sup_{\substack{i\in[D]\\ \norm{\bm u}_\infty\leq1}}
    \mathbb E\left[
        x_i^2(\bm u\cdot\bm x)^2
    \right]
    \leq
    \kappa_x^2D.
\]
\end{assumption}

\begin{assumption}[Weights bounded in $l_1$] \label{ass:boundedl1-weight}
$\mathcal{W} \subseteq B_1(0,C_1)$.
\end{assumption}


\section{Width Transfer}

\subsection{Spaces}
We will work with $L_2(p_x)$ and associated inner product. As well as the Barron spaces $B_{\phi}$ with associated norm. Where $\mathcal{M}(\mathcal{W}) = \{\mu: \mu \text{ a signed measure over $\mathcal{W}$}, \lvert \mu \rvert_{TV} < \infty \}$. In particular we will use feature maps $\phi_z(\bm{w}, \bm{x}) = \sigma(\bm{w}\cdot \bm{x})$ and $\phi_{q^i}(\bm{w}, \bm{x}) = x_i\sigma'(\bm{w}\cdot \bm{x})$, corresponding generating the collections $\mathcal{Z}=\{\sigma(\bm{w_p}^t \cdot \bm{x}): p \in [P], t \in [T] \}$ and $\mathcal{Q}^i=\{x_i\sigma'(\bm{w_p}^t \cdot \bm{x}):  p\in[P], t \in [T] \}$.

We will also deal with all the tangent directions at once \( Q = \bigcup_{i \in [D]} Q^i\). Since this collection is not directly introduced by a feature map, we require:

\begin{definition}[Barron Space induced by a collection of feature maps.] 
Let \(\{\phi_i\}_{i=1}^k\) be a collection of feature maps and $B_{\phi_i}$ the corresponding normed spaces. Define:
\[
    \mathcal{B}_{\{\phi_i\}_{i=1}^k} \defeq \text{span}\left( \bigcup_{i} \mathcal{B}_{\phi_i} \right) = \left\{\sum f_i: f_i \in \mathcal{B}_{\phi_i}\right\}
\]
with norm $\norm{f} = \inf \{\sum_i \abs{\mu_i}_{TV} :  f = \sum_{i} A_{\phi_i}\mu_i \}$ or equivalently, \(\norm{f} = \inf \{\sum_i \norm{f_i}_{\phi_i} :  f = \sum_{i} f_i, f_i \in \mathcal{B}_{\phi_i}\} \). 
\end{definition}

\paragraph{Synthesis Maps}. We highlight that these Barron norms can be seen as quotient norms induced by synthesis maps. Let $A_{\phi}: \mathcal{M}(\mathcal{W}) \rightarrow L_2(p_x)$
\[
    (A_{\phi}\mu )(x) = \int_{\mathcal{W}} \phi(\bm{w}, \bm{x}) \ d\mu,
\]
then for $f \in \text{Im}\ \mathcal{M}(\mathcal{W})$, 
\[
    \lvert\lvert f \rvert \rvert_{\phi} = \inf_{\mu: A_{\phi}\mu = f}\lvert \mu \rvert_{TV}. 
\]
for brevity we use the notation $A_z \defeq A_{\phi_z}$ and $A_q^i \defeq A_{\phi_{q_i}}$, and similarly $\mathcal{B}_z$ and $\mathcal{B}_q^i$ for the corresponding Barron spaces.

We further note the combined norm is the quotient norm of the map $S$, operating on the direct sum with \(\norm{\cdot}_1\):
\[
    S: \bigoplus \mathcal{B}_q^i \rightarrow \mathcal{F} 
\]
where $S(f_1 \oplus\dots \oplus f_D) = \sum_{i=1}^d f_i$.

We will also need to be able to differentiate under the integral sign:
\begin{lemma} \label{lemma:dom-convergence}
    Let $f = A_z\mu$. Then
    \[
        \partial_{x_i} f(\bm x)
        =
        \int_{\mathcal W}
        \partial_{x_i}\sigma(\neuronin)
        \,d\mu(\bm w).
    \]
\end{lemma}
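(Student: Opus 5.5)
The plan is to justify differentiation under the integral sign by dominated convergence, applied to difference quotients in the variable \(x_i\), using the regularity in \cref{ass:smooth} and the compactness in \cref{ass:bounded-weight}. Fix \(\bm x\) and \(i\), and write \(\mu=\mu^+-\mu^-\) via the Jordan decomposition. It then suffices to treat a finite positive measure \(\abs{\mu}\), since \(\abs{\mu}_{TV}<\infty\) by definition of \(\mathcal M(\mathcal W)\). For \(h\neq0\) we have
\[
    \frac{f(\bm x+h\bm e_i)-f(\bm x)}{h}
    =
    \int_{\mathcal W}
    \frac{\sigma(\bm w\cdot\bm x+hw_i)-\sigma(\bm w\cdot\bm x)}{h}
    \,d\mu(\bm w),
\]
where linearity of the integral is legitimate because each integrand is \(\abs{\mu}\)-integrable. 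The bound \(\abs{\sigma(t)}\leq A_\sigma(1+\abs t)\), together with \(\abs{\bm w\cdot\bm x}\leq \norm{\bm w}_1\norm{\bm x}_\infty\) and the compactness of \(\mathcal W\), gives a uniform bound at fixed \(\bm x\).

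Next, I would show pointwise convergence and domination. For each \(\bm w\), the difference quotient converges to \(w_i\sigma'(\bm w\cdot\bm x)=\partial_{x_i}\sigma(\bm w\cdot\bm x)\), since \(\sigma\) is differentiable (indeed \(C^1\) with \(\sigma''\) bounded). By the mean value theorem, the quotient equals \(w_i\sigma'(\xi)\) for some intermediate \(\xi\), so its absolute value is at most \(\abs{w_i}B_\sigma\leq C_wB_\sigma\) by \cref{ass:bounded-weight} (or \(C_1B_\sigma\) under \cref{ass:boundedl1-weight}). This bound is a constant, and constants are integrable against the finite measure \(\abs{\mu}\). Dominated convergence, applied along any sequence \(h_n\to0\), then yields the limit \(\int w_i\sigma'(\bm w\cdot\bm x)\,d\mu\), which establishes both the existence of the partial derivative and the stated formula. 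Measurability of the integrand in \(\bm w\) follows from its continuity.

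The only step that needs genuine care is the domination, and in particular ensuring the bound is uniform in \(\bm w\) over the whole support of \(\mu\). This is exactly where compactness of \(\mathcal W\) (bounded \(\abs{w_i}\)) and the global bound \(\abs{\sigma'}\leq B_\sigma\) enter. Without these, a measure with heavy mass at large \(\abs{w_i}\) could break the interchange.

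If the lemma is also meant to hold in \(L_2(p_x)\) rather than pointwise, the same bound \(\abs{\int w_i\sigma'\,d\mu}\leq C_wB_\sigma\abs{\mu}_{TV}\) shows the derivative is bounded, hence in \(L_2(p_x)\).
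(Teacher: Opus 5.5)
Your proposal is correct and follows essentially the same route as the paper: dominated convergence applied to the difference quotients, with a dominating constant that comes from compactness of \(\mathcal W\) and a bound on \(\sigma'\). The only difference is that you take the global bound \(\abs{\sigma'}\leq B_\sigma\) from \cref{ass:smooth}, which gives the constant dominator \(C_wB_\sigma\) directly. The paper instead bounds \(\abs{w_i\sigma'(\bm w\cdot\bm x)}\) by a linear-growth expression, takes its supremum over a compact neighbourhood \(U\times\mathcal W\), and then cites the standard differentiation-under-the-integral theorem.
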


\begin{proof}
    This follows from compactness of $\mathcal{W}$ and the linear growth condition. Consider,
    \[
        \abs{\partial_{x_i}\sigma(\neuronin)} = \abs{w_i \sigma'(\neuronin)} \leq \abs{w_i} \abs{1 + A \abs{\neuronin}} \defeq g_{\bm{x}}(\bm{w})
    \]
    Now for any $\bm{x_0}$ and a small closed ball $U$ around it, since $U \times \mathcal{W}$ is compact,
    \[
    g_{\bm{x}}(\bm{w}) \leq \sup_{\bm{x} \in U, \bm{w} \in \mathcal{W}
    } g_{\bm{x}}(\bm{w}) = \bar{C} < \infty.
    \] 
    Hence the integrand is locally uniformly dominated by $\bar{C} \in L_1(\mu)$, and standard differentiation under the integral applies.
\end{proof}

\subsection{Width}

\begin{quote}
\textbf{Definition~\ref{def:k-width} (Kolmogorov $n$-Width).}
Given normed space $X$ and $W\subset X$, the Kolmogorov $n$-width is
\[
    d_n(W, X)
    = \inf_{V\subset X:\dim(V)\leq n}
      \sup_{w \in W}
      \inf_{v \in V} \lVert v-w\rVert_X.
\]
\end{quote}

\subsection{Preliminary Lemmas}

We recall a standard lemma:
\begin{lemma} (Width contraction under bounded maps)
Let $T:X \to Y$ be a bounded linear operator between Banach spaces, and let $W \subset X$. Then
\[
d_n(T(W), Y)
\le
\|T\|_{\mathrm{op}} \, d_n(W, X).
\]
\end{lemma}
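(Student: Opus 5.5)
Fix an $n$-dimensional subspace $V\subset X$ that is nearly optimal, so that $\sup_{w\in W}\inf_{v\in V}\norm{v-w}_X\le d_n(W,X)+\varepsilon$ for an arbitrary $\varepsilon>0$. Its image $T(V)\subset Y$ is a linear subspace of dimension at most $n$, so it is admissible in the infimum defining $d_n(T(W),Y)$. The whole argument amounts to pushing approximants forward through $T$.

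Take any element of $T(W)$; it has the form $Tw$ with $w\in W$. Choose $v\in V$ with $\norm{v-w}_X\le d_n(W,X)+2\varepsilon$. Then $Tv\in T(V)$, and by linearity and boundedness
\[
\norm{Tv-Tw}_Y=\norm{T(v-w)}_Y\le \norm{T}_{\mathrm{op}}\norm{v-w}_X\le \norm{T}_{\mathrm{op}}\bigl(d_n(W,X)+2\varepsilon\bigr).
\]
Taking the infimum over $T(V)$ and then the supremum over $w\in W$ gives
\[
\sup_{y\in T(W)}\inf_{u\in T(V)}\norm{u-y}_Y\le\norm{T}_{\mathrm{op}}\bigl(d_n(W,X)+2\varepsilon\bigr).
\]
Since $T(V)$ is admissible, this bounds $d_n(T(W),Y)$ from above. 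Letting $\varepsilon\to0$ yields the claim.

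There is essentially no obstacle here. The only points needing care are that $\dim T(V)\le\dim V$, since a linear image cannot raise dimension, and that the infimum in the definition of $d_n$ need not be attained, which the $\varepsilon$-argument handles. Completeness of $X$ and $Y$ is not actually used; the argument works for arbitrary normed spaces. The lemma will later be applied with $T$ a synthesis- or differentiation-type map between the relevant variation spaces and $L_2(p_x)$, with the operator norm supplying the constants.
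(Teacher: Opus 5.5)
Your proof is correct, and it is exactly the standard push-forward argument that the paper takes for granted (the paper only ``recalls'' this lemma and gives no proof): $T(V)$ is admissible because $\dim T(V)\le\dim V$, the bound $\norm{T(v-w)}_Y\le\norm{T}_{\mathrm{op}}\norm{v-w}_X$ gives the estimate, and your $\varepsilon$-slack handles non-attainment of the infimum. One minor correction to your closing remark: the paper applies the lemma only to the canonical inclusion $\mathcal B_q^i\hookrightarrow L_2(p_x)$ induced by the synthesis map $A_q^i$ (norm at most $C_{\text{in}}$), and the differentiation step in \cref{lemma:large} is not an instance of it, because passing from $z_{\bm w}$ to $\sigma'(\bm w\cdot\bm x)$ requires dividing by the atom-dependent coordinate $\bm w_{0,i}$.
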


We now show that the canonical inclusion $\mathcal{B}_q^i \hookrightarrow L_2(p_x)$ is bounded.

\begin{lemma}
    \( A_q^i\) acts continuously, \(\exists C_{\text{in}} \leq \infty\),
    \[  
        \norm{A_q^i \mu} \leq C_{\text{in}} \abs{\mu}_{TV}.   
    \]
    As a consequence the canonical inclusion,
    \begin{align*}
        I: \ &\mathcal{B}_q^i \rightarrow L_2(p_x) \\
        & [\mu] \mapsto A_q^i\mu,
    \end{align*}
    has $\norm{I}_{op}\leq C_{\text{in}}$.
\end{lemma}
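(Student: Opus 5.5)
The plan is to bound the $L_2(p_x)$ norm of a single atom uniformly over $\mathcal W$, and then extend to arbitrary measures by the triangle inequality for Bochner-type integrals. For fixed $\bm w\in\mathcal W$ I will estimate
\[
    \norm{x_i\sigma'(\bm w\cdot\bm x)}_{L_2(p_x)}^2
    =
    \mathbb E\left[x_i^2\sigma'(\bm w\cdot\bm x)^2\right]
    \leq
    B_\sigma^2\,\mathbb E[x_i^2]
    \leq
    B_\sigma^2C_x^2,
\]
using $\abs{\sigma'}\leq B_\sigma$ from \cref{ass:smooth} and the second-moment bound of \cref{ass:input-second-moment}. This gives the candidate constant $C_{\mathrm{in}}=B_\sigma C_x$. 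It is independent of $\bm w$, $i$ and $D$, which is what later width-transfer constants require.

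For a general $\mu\in\mathcal M(\mathcal W)$, I write $A_q^i\mu=\int_{\mathcal W}\phi_{q^i}(\bm w,\cdot)\,d\mu(\bm w)$. I then apply Minkowski's integral inequality with respect to the product of $p_x$ and the total variation measure $\abs{\mu}$:
\[
    \norm{A_q^i\mu}_{L_2(p_x)}
    \leq
    \int_{\mathcal W}\norm{\phi_{q^i}(\bm w,\cdot)}_{L_2(p_x)}\,d\abs{\mu}(\bm w)
    \leq
    C_{\mathrm{in}}\abs{\mu}_{TV}.
\]
To justify Minkowski's inequality, I need joint measurability of $(\bm w,\bm x)\mapsto x_i\sigma'(\bm w\cdot\bm x)$, which holds because $\sigma'$ is continuous. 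I also need $\mathcal W$ to be compact, so that $\abs{\mu}$ is finite. If one prefers to avoid Minkowski's inequality, an equivalent route is to compute $\mathbb E[(A_q^i\mu)^2]$ as a double integral over $\mu\otimes\mu$, bound the inner expectation by Cauchy--Schwarz, and apply Fubini.

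The operator-norm statement then follows from the quotient structure. For $f\in\mathcal B_q^i$ and any representing measure $\mu$ with $A_q^i\mu=f$, the map $I$ is well defined: $I[\mu]=A_q^i\mu$ depends only on the class $[\mu]$, since the class is the function itself. We then have $\norm{f}_{L_2}\leq C_{\mathrm{in}}\abs{\mu}_{TV}$, and taking the infimum over representing $\mu$ gives $\norm{If}_{L_2}\leq C_{\mathrm{in}}\norm{f}_{\phi_{q^i}}$, so $\norm{I}_{op}\leq C_{\mathrm{in}}$.

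The only delicate point is measure-theoretic. We must ensure that the pointwise integral defining $A_q^i\mu$ agrees $p_x$-almost everywhere with an $L_2$ element, and that Fubini and Tonelli apply. This follows from the uniform domination $\abs{x_i\sigma'(\bm w\cdot\bm x)}\leq B_\sigma\abs{x_i}$, where $x_i\in L_2(p_x)$ and $\abs{\mu}$ is finite. I expect this step to be routine rather than a real obstacle.
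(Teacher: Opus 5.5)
Your proof is correct and gives the same constant $C_{\mathrm{in}}=B_\sigma C_x$ as the paper, but it applies the two bounds in the opposite order. The paper bounds $A_q^i\mu$ pointwise first, $\abs{A_q^i\mu(\bm x)}\leq \abs{x_i}\sup_{\bm w}\abs{\sigma'(\bm w\cdot\bm x)}\abs{\mu}_{TV}\leq B_\sigma\abs{x_i}\abs{\mu}_{TV}$, and only then takes the $L_2(p_x)$ norm using $\mathbb E[x_i^2]\leq C_x^2$. You instead bound each atom in $L_2(p_x)$ and integrate those norms against $\abs{\mu}$ via Minkowski's integral inequality.

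The envelope $\abs{x_i\sigma'(\bm w\cdot\bm x)}\leq B_\sigma\abs{x_i}$, which you invoke in your last paragraph only for measurability, already \emph{is} the paper's whole argument. So Minkowski and Fubini are not needed here.

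What your route buys is generality. It uses only $\sup_{\bm w}\norm{\phi(\bm w,\cdot)}_{L_2}$, which can be strictly smaller than $\norm{\sup_{\bm w}\abs{\phi(\bm w,\cdot)}}_{L_2}$. It would therefore still work for feature maps without a product-form pointwise envelope. The paper's route is more elementary, because the factorisation $x_i\cdot\sigma'(\cdot)$ with bounded $\sigma'$ makes the two quantities coincide.

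You also spell out the passage to the quotient norm by taking the infimum over representing measures, which the paper leaves implicit. One small inaccuracy: finiteness of $\abs{\mu}$ is part of the definition of $\mathcal M(\mathcal W)$, not a consequence of compactness of $\mathcal W$. Compactness plays no role in this lemma.
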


\begin{proof}
    \[
        \abs{A_q^i \mu} 
        = \abs{x_i\int \sigma'(\neuronin) \ d \mu}
        \leq
        \abs{\mu}_{TV}\abs{x_i}\abs{\sup_{\bm{w}\in \mathcal{W}} \sigma'(\neuronin)}.  
    \]
    Using,
    \[ 
      \abs{\sup_{\bm{w}\in \mathcal{W}} \sigma'(\neuronin)} 
      \leq B_{\sigma},  
    \]
    So we have, 
    \[
        C_{\text{in}} \leq  B_{\sigma}C_x.
    \]
\end{proof}

\begin{corollary}
        \(
            d_n(\mathcal{Q}^i, L_2) \leq C_{\text{in}}\cdot d_n(\mathcal{Q}^i, \mathcal{B}_q^i).
        \)
\end{corollary}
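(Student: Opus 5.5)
The corollary follows by composing the two preceding facts. The idea is to specialise the width-contraction lemma to the canonical inclusion \(I:\mathcal B_q^i\to L_2(p_x)\), whose operator norm was just bounded by \(C_{\mathrm{in}}\le B_\sigma C_x\).

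First, I will check that \(\mathcal Q^i\) is a subset of \(\mathcal B_q^i\). Each element \(x_i\sigma'(\bm w\cdot\bm x)\) equals \(A_q^i\delta_{\bm w}\), where \(\delta_{\bm w}\) is the Dirac mass at \(\bm w\in\mathcal W\). It therefore lies in \(\mathcal B_q^i\) with norm at most \(1\). Moreover \(I(\mathcal Q^i)=\mathcal Q^i\), viewed now as a subset of \(L_2(p_x)\).

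Second, I will apply the width-contraction lemma with \(X=\mathcal B_q^i\), \(Y=L_2(p_x)\), \(T=I\) and \(W=\mathcal Q^i\). This gives \(d_n(\mathcal Q^i,L_2)\le\norm{I}_{op}\,d_n(\mathcal Q^i,\mathcal B_q^i)\le C_{\mathrm{in}}\,d_n(\mathcal Q^i,\mathcal B_q^i)\).

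The only point that needs care is the contraction lemma itself, which is stated for Banach spaces. \(\mathcal B_q^i\) is a quotient of \(\mathcal M(\mathcal W)\), so completeness is not automatic, and \(I\) must be well defined on equivalence classes. Neither issue actually matters.

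For well-definedness, \(I\) sends \(f\) to itself, so it is simply the identity on functions. Its boundedness follows from \(\norm{f}_{L_2}\le C_{\mathrm{in}}\abs{\mu}_{TV}\) for every representing \(\mu\), after taking the infimum over \(\mu\).

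For completeness, the contraction argument does not use it. Given any subspace \(V\subset\mathcal B_q^i\) with \(\dim V\le n\), the image \(I(V)\) is a subspace of \(L_2\) of dimension at most \(n\). For any \(w\in W\) and \(v\in V\) we have \(\norm{w-I v}_{L_2}\le\norm{I}_{op}\norm{w-v}_{\mathcal B_q^i}\). Taking the infimum over \(v\), then the supremum over \(w\), then the infimum over \(V\) gives the claim.

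I would write out this three-line verification directly rather than rely on the Banach hypothesis. With that, the corollary is immediate.
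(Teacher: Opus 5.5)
Your proof is correct and follows the same route as the paper, which obtains the corollary implicitly by applying the width-contraction lemma to the canonical inclusion \(I:\mathcal B_q^i\to L_2(p_x)\), whose operator norm the preceding lemma bounds by \(C_{\mathrm{in}}\le B_\sigma C_x\). Your direct three-line verification, which avoids the Banach-space hypothesis (completeness of \(\mathcal B_q^i\) is never used), is a small but welcome tightening of the paper's appeal to that lemma.
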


We now proceed, as suggested in the main text, by breaking the neurons into big and small sets.

First we introduce a couple of auxiliary lemmas enabling us to switch ambient spaces.

\begin{lemma}[Combining coordinate queries] 
\label{lemma:union-bound}
    \( d_{D\cdot n}(\mathcal{Q}, \mathcal{B}_q) \leq \max_{i} d_{n}(\mathcal{Q}^i, \mathcal{B}_q^i) \).
\end{lemma}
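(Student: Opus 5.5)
The plan is to build the approximating space for $\mathcal Q$ as a direct sum of the coordinatewise approximating spaces, and to bound errors with the quotient-norm description of $\mathcal B_q$. For each $i\in[D]$, write $\epsilon_i \defeq d_n(\mathcal Q^i,\mathcal B_q^i)$ and $\epsilon \defeq \max_i \epsilon_i$. Fix $\eta>0$. By the definition of the Kolmogorov width, choose a subspace $V_i\subset\mathcal B_q^i$ with $\dim V_i\le n$ and
\[
\sup_{q\in\mathcal Q^i}\inf_{v\in V_i}\norm{q-v}_{\mathcal B_q^i}\le \epsilon_i+\eta .
\]
The candidate space is $V\defeq \sum_{i=1}^D V_i$, viewed inside $\mathcal B_q=\mathrm{span}\bigl(\bigcup_i\mathcal B_q^i\bigr)$. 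Then $\dim V\le \sum_i\dim V_i\le Dn$, which is admissible for $d_{Dn}$.

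Now take any $q\in\mathcal Q$. Since $\mathcal Q=\bigcup_i\mathcal Q^i$, there is some $i$ with $q\in\mathcal Q^i$. Pick $v\in V_i\subset V$ with $\norm{q-v}_{\mathcal B_q^i}\le\epsilon_i+\eta$. The key step is that the combined norm is dominated by each single-coordinate norm on its own summand. The combined norm is an infimum over decompositions $f=\sum_j f_j$ with $f_j\in\mathcal B_q^j$. For $f\in\mathcal B_q^i$, the trivial decomposition $f_i=f$ and $f_j=0$ for $j\neq i$ is admissible, so $\norm{f}_{\mathcal B_q}\le\norm{f}_{\mathcal B_q^i}$. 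In quotient-map language, $S$ restricted to the $i$-th summand of the $\ell_1$ direct sum is a contraction. Applying this to $f=q-v\in\mathcal B_q^i$ gives
\[
\inf_{u\in V}\norm{q-u}_{\mathcal B_q}\le \epsilon_i+\eta\le\epsilon+\eta .
\]

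Taking the supremum over $q\in\mathcal Q$ gives $d_{Dn}(\mathcal Q,\mathcal B_q)\le\epsilon+\eta$, and letting $\eta\to0$ yields the claim. If some infimum is attained, $\eta$ can be set to zero directly.

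There is no substantial obstacle; the only points needing care are bookkeeping ones. First, $V_i$ must be a genuine subspace of $\mathcal B_q$, which holds because each $\mathcal B_q^i$ embeds in the span. Second, the definition of $d_n$ allows $\dim V\le Dn$ rather than exact equality. Third, the norm comparison must use the infimum definition correctly, giving an inequality in the right direction.
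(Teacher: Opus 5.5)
Your proposal is correct and follows essentially the same route as the paper. Both take $V$ to be the span of the coordinatewise approximating spaces $V_i$, so that $\dim V \le Dn$, and both use the trivial decomposition $f_i = f$, $f_j = 0$ for $j \neq i$ to obtain $\norm{q-v}_{\mathcal B_q} \le \norm{q-v}_{\mathcal B_q^i}$; your $\eta$-slack additionally handles the case where the infimum in the width is not attained, which the paper's proof passes over silently.
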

\begin{proof}
    Let $(V_i, \epsilon_i)$ be the approximating subspace and their errors, and consider the set:
    \[
        V = \text{span}\left\{\bigcup_{i}^D \{e_{1:n}^i:  \langle e_{1:n}^i \rangle = V_i\} \right\}.
    \]
Then we have $\text{rank}(V)\leq D\cdot n$. 
Now consider $f\in Q$, $f$ belongs to at-least one $Q^i$ so we have $v \in V_i$ such that:
\[
    \norm{f - v}_q = \inf\{\sum \norm{f_i}_{q^i}: \sum_i f_i = f - v \} \leq \norm{f - v}_{q^i} \leq \epsilon_i
\]
Therefore the worst case approximation is bounded:
\[
    d_{D\cdot n}(\mathcal{Q}, \mathcal{B}_q) \leq \sup_{f \in \mathcal{Q}}\inf_{v\in V}\norm{f - v}_{q} \leq \max_i {\epsilon_i}. 
\]
\end{proof}

\begin{lemma} \label{lemma:to-tangent}
    Consider the feature map $t(\bm{w}, \bm{x}) = \sigma'(\neuronin)$ and the associated empirical features $\mathcal{T}=\{\sigma'(\bm{w}_p^t\cdot \bm{x}): p \in [P], t \in [T]\}$. We have:
    \[
        d_n(\mathcal{Q}^i, \mathcal{B}_q^i) \leq d_n(\mathcal{T}, \mathcal{B}_{t}) 
    \]
\end{lemma}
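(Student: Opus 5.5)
The plan is to realise multiplication by \(x_i\) as a norm-nonincreasing map between the two variation spaces. Once that is available, the statement follows from the width-contraction lemma applied to that map.

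Define \(M_i:\mathcal B_t\to\mathcal B_q^i\) by \(M_i(g)(\bm x)=x_i g(\bm x)\). First I will check that \(M_i\) is well defined and bounded at the level of measures. If \(g=A_t\mu\) for some \(\mu\in\mathcal M(\mathcal W)\), then
\[
x_i g(\bm x)=\int_{\mathcal W} x_i\sigma'(\bm w\cdot\bm x)\,d\mu(\bm w)=(A_q^i\mu)(\bm x).
\]
So \(M_i g\in\mathcal B_q^i\), and it is represented by the same measure. Taking the infimum over all representing measures \(\mu\) of \(g\) gives \(\norm{M_i g}_{q^i}\le \norm{g}_t\), hence \(\norm{M_i}_{op}\le 1\).

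Well-definedness also needs comment, since \(M_i\) is defined on functions rather than on measures. If \(A_t\mu=A_t\mu'\) as functions, then multiplying both by \(x_i\) gives \(A_q^i\mu=A_q^i\mu'\). The quotient structure is therefore respected, and \(M_i\) is linear on \(\mathcal B_t\). The argument is simply that \(A_q^i=M_i\circ A_t\) at the level of synthesis maps.

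Next, \(M_i\) maps \(\mathcal T\) onto \(\mathcal Q^i\) atom by atom: \(M_i(\sigma'(\bm w_p^t\cdot\bm x))=x_i\sigma'(\bm w_p^t\cdot\bm x)\). This gives \(\mathcal Q^i=M_i(\mathcal T)\). Applying the width-contraction lemma then yields
\[
d_n(\mathcal Q^i,\mathcal B_q^i)\le \norm{M_i}_{op}\,d_n(\mathcal T,\mathcal B_t)\le d_n(\mathcal T,\mathcal B_t).
\]
More concretely, if \(V\) is an \(n\)-dimensional approximating subspace for \(\mathcal T\), then \(x_iV\) has dimension at most \(n\), and it approximates \(\mathcal Q^i\) with the same error.

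The only step needing care is the quotient-norm bookkeeping. The contraction lemma is stated for Banach spaces, so one might worry about completeness of \(\mathcal B_t\). To avoid that, I will argue directly with subspaces: given \(V\) and some \(v\in V\) with \(\norm{\sigma'(\bm w\cdot\bm x)-v}_t\le\epsilon\), the element \(x_iv\) satisfies \(\norm{x_i\sigma'(\bm w\cdot\bm x)-x_iv}_{q^i}\le\epsilon\) by the measure identity above. This bypasses any completeness issue.
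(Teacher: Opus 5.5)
Your proposal is correct and matches the paper's proof: the paper likewise takes an approximating subspace $V$ for $\mathcal T$, sets $V_q = \{x_i v : v \in V\}$, and uses the identity $x_i A_t = A_q^i$ to carry each representing measure of $v - \sigma'(\bm w\cdot\bm x)$ over unchanged, giving $\norm{x_i v - q^i}_{q^i} \le \epsilon$. Your framing through a norm-nonincreasing multiplication map is a repackaging of the same argument. Your forward direction, which sends each tangent atom to its query instead of writing $q^i/x_i$ as the paper does, is if anything slightly cleaner.
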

\begin{proof}
    Let \( (V, \epsilon) \) be the approximating subspace and error for $\mathcal{T}$, we set $V_{q} \defeq \{x_i v: v \in V\}$ then for any $q^i \in Q^i$ we have $v\in V$ such that:
    \[
        \norm{v - \frac{q^i}{x_i}}_{t} \leq \epsilon 
        \implies \exists \mu \ \text{s.t.}\  A_{t}\mu = v - \frac{q^i}{x_i} \ \text{and} \ \norm{\mu}_{TV}\leq \epsilon
    \]
    Given $x_i A_t = A_q^i$, we have:
    \[
        \exists \mu \ \text{s.t.}\  A_q^i\mu = x_iv - q^i \ \text{and} \ \norm{\mu}_{TV}\leq \epsilon \implies \norm{x_i v - q}_{q^i} \leq \epsilon.
    \]
    
\end{proof}

\begin{lemma} \label{lemma:commuting}
    Let $\mathcal{W}_{-\alpha} = \mathcal{W}\setminus B_0(\alpha)$
    The following diagram commutes (as sets):
    \[
        \begin{tikzcd}
        \mathcal{M}(\mathcal{W}_{-\alpha}) \arrow[r, "A_z"] \arrow[d, "M_i"']
          & L_2(p_x) \arrow[d, "\partial_{x_i}"] \\
        \mathcal{M}(\mathcal{W}_{-\alpha})  \arrow[r, "A_t"']
          & L_2(p_x)
        \end{tikzcd}
    \]
    where:
    \[
        M_i: \mu \mapsto \nu \ \text{s.t.} \ \frac{d \nu}{d\mu} = w_i
    \]
\end{lemma}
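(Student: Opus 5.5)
The plan is to verify the identity \(\partial_{x_i}(A_z\mu) = A_t(M_i\mu)\) for every \(\mu\in\mathcal M(\mathcal W_{-\alpha})\), and to check that \(M_i\) really maps \(\mathcal M(\mathcal W_{-\alpha})\) into itself. Well-definedness of \(M_i\) is immediate. The density \(\bm w\mapsto w_i\) is continuous, hence Borel measurable, and it is bounded on the compact set \(\mathcal W\) (by \(C_w\) under \cref{ass:bounded-weight}). So \(\nu = w_i\,\mu\) is a finite signed measure with \(\abs{\nu}_{TV}\le \sup_{\mathcal W}\abs{w_i}\,\abs{\mu}_{TV}\). It is supported on the same set \(\mathcal W_{-\alpha}\) as \(\mu\), and \(M_i\) is linear. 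The restriction to \(\mathcal W_{-\alpha}\) plays no role in the commutation itself. It is kept only so that later arguments can invert \(M_i\), since \(w_i\) may vanish on small weights; the present lemma needs only the forward direction.

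For the main identity, I apply \cref{lemma:dom-convergence} to \(f = A_z\mu\). Its hypotheses (compactness of the support and linear growth of \(\sigma\)) hold because \(\mathcal W_{-\alpha}\subseteq\mathcal W\) is closed in a compact set. This gives
\[
    \partial_{x_i} f(\bm x) = \int_{\mathcal W_{-\alpha}} \partial_{x_i}\sigma(\bm w\cdot\bm x)\,d\mu(\bm w) = \int_{\mathcal W_{-\alpha}} w_i\,\sigma'(\bm w\cdot\bm x)\,d\mu(\bm w).
\]
By the definition of the Radon--Nikodym derivative, the right-hand side equals \(\int \sigma'(\bm w\cdot\bm x)\,d\nu(\bm w) = (A_t\nu)(\bm x)\), which holds pointwise in \(\bm x\). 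The interchange is justified because \(\abs{w_i\sigma'}\le C_w B_\sigma\) is \(\abs{\mu}\)-integrable.

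The remaining point, and the only mild obstacle, is the phrase ``as sets'' together with the codomain \(L_2(p_x)\). The right edge \(\partial_{x_i}\) is not a bounded operator on \(L_2(p_x)\), and for discrete \(p_x\) (e.g.\ Rademacher inputs) a derivative of an \(L_2\) class is not even defined. I would therefore interpret the diagram at the level of the representing functions on \(\mathbb R^D\). Both \(A_z\mu\) and \(A_t\nu\) are genuine \(C^1\), respectively continuous, functions defined everywhere, and the identity holds pointwise. It then passes to their \(p_x\)-equivalence classes. Membership of \(A_t\nu\) in \(L_2(p_x)\) follows from \(\abs{A_t\nu}\le B_\sigma\abs{\nu}_{TV}\), so both compositions land in \(L_2(p_x)\) and agree there, which is the claimed commutation.
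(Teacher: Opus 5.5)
Your proof is correct and follows the paper's argument. Define $M_i\mu$ via the density $w_i$, differentiate $A_z\mu$ under the integral by \cref{lemma:dom-convergence}, and recognise $\int w_i\sigma'(\bm w\cdot\bm x)\,d\mu$ as $A_t(M_i\mu)$. Your boundedness check on $M_i$, your domination bound $C_wB_\sigma$, and your pointwise reading of $\partial_{x_i}$ supply details the paper leaves implicit.

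The only inaccuracy is in a side remark that the proof does not use. For a fixed $i$, $w_i$ can still vanish on $\mathcal W_{-\alpha}$, since only some coordinate need exceed $\alpha$, so the restriction does not make $M_i$ invertible. \cref{lemma:large} instead divides by each neuron's own large coordinate $w_{0,k(0)}$.
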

\begin{proof}
    For \(\mu \in \mathcal{M}(\mathcal{W}_{-\alpha})\) define $M_i\mu$ by:
    \[
        (M_i\mu)(A) \defeq \int_{A} w_i \ d \mu
    \]
    Note by \cref{lemma:dom-convergence}, 
    \[
    \partial_{x_i}(A_z \mu) = \int \sigma'(\neuronin) \ w_i \ d \mu = \int \sigma'(\neuronin) \ d (M_i\mu) = (A_{t}M_i)\mu.
    \]
\end{proof}

\begin{lemma}[Width transfer for large neurons.] \label{lemma:large}
    Suppose each set of incoming weights $\bm{w}_p$ has some large coordinate, $\lvert \bm{w}_{p,k(p)} \rvert \geq \alpha$ then:
    \[
        d_{D^2\cdot n}(\mathcal{Q}, \mathcal{B}_q) \leq \frac{C_w}{\alpha} \cdot d_n(\mathcal{Z}, \mathcal{B}_z)
    \]
\end{lemma}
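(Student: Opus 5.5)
The argument transfers an $n$-dimensional Barron approximation of the activities $\mathcal Z$ into a $D^2 n$-dimensional approximation of the tangent features $\mathcal Q$. It goes through differentiation, the commuting diagram of \cref{lemma:commuting}, and the reduction to $\mathcal T$ in \cref{lemma:to-tangent}.

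Fix $\epsilon>d_n(\mathcal Z,\mathcal B_z)$ and an approximating subspace $V=\mathrm{span}\{v_1,\dots,v_n\}\subset\mathcal B_z$. For each neuron $p$, choose $v^{(p)}\in V$ and a measure $\mu_p$ with $A_z\mu_p=z_p-v^{(p)}$ and $\abs{\mu_p}_{TV}\le\epsilon$. Since $\mathcal W$ is compact and $z_p=A_z\delta_{\bm w_p}$, we may write each basis element $v_j=A_z\nu_j$, so that $v^{(p)}$ is represented by a measure as well.

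The key identity comes from differentiating in $x_{k}$ with $k=k(p)$. Since $\partial_{x_k}z_p(\bm x)=w_{p,k}\sigma'(\bm w_p\cdot\bm x)$, \cref{lemma:dom-convergence} and \cref{lemma:commuting} give
\[
\sigma'(\bm w_p\cdot\bm x)=\frac{1}{w_{p,k}}\partial_{x_k}v^{(p)}+\frac{1}{w_{p,k}}A_t(M_k\mu_p).
\]
We then multiply by $x_i$. This gives $q_{p,i}=x_i\sigma'(\bm w_p\cdot\bm x)$ equal to $w_{p,k}^{-1}x_i\partial_{x_k}v^{(p)}$ plus $w_{p,k}^{-1}A_q^i(M_k\mu_p)$.

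The approximating space is
\[
V_Q=\mathrm{span}\{x_i\partial_{x_k}v_j: i,k\in[D],\ j\in[n]\},
\]
which has dimension at most $D^2 n$. Its elements lie in $\mathcal B_q$ because $\partial_{x_k}v_j=A_t(M_k\nu_j)$ and $x_iA_t=A_q^i$; this is the argument of \cref{lemma:to-tangent}. The residual is $A_q^i$ applied to $w_{p,k}^{-1}M_k\mu_p$. Its total variation is at most
\[
\abs{w_{p,k}}^{-1}\sup_{\bm w\in\mathcal W}\abs{w_k}\,\abs{\mu_p}_{TV}\le\frac{C_w}{\alpha}\,\epsilon,
\]
using $\abs{w_k}\le C_w$ from \cref{ass:bounded-weight} and $\abs{w_{p,k}}\ge\alpha$. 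The $\mathcal B_q$ norm is dominated by the $\mathcal B_q^i$ norm of a single summand, as in \cref{lemma:union-bound}. Taking the supremum over $p,i$ and letting $\epsilon\downarrow d_n$ gives the claim.

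The main obstacle is the support of $\mu_p$. \cref{lemma:commuting} is stated on $\mathcal M(\mathcal W_{-\alpha})$, but $\mu_p$ is a measure on all of $\mathcal W$. In fact the lower bound $\alpha$ is only needed for dividing by $w_{p,k}$, which is a property of the neuron and not of the support of $\mu_p$. The multiplication map $M_k$ is bounded by $C_w$ on all of $\mathcal M(\mathcal W)$, and differentiation under the integral holds on all of $\mathcal W$ by \cref{lemma:dom-convergence}. So I would apply the commuting identity on $\mathcal M(\mathcal W)$ directly, noting that the proof of \cref{lemma:commuting} never uses the restriction to $\mathcal W_{-\alpha}$.

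A second, minor point is that $V$ must consist of functions representable by measures. Any near-optimal subspace can be intersected with, or projected into, $\mathcal B_z$ without loss, since the infimum defining $d_n$ is taken over subspaces of $\mathcal B_z$.
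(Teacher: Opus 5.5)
Your proof is correct and is the paper's argument with the intermediate lemmas inlined: the paper first builds $V'=\operatorname{span}\{\partial_{x_k}V\}$ to get $d_{Dn}(\mathcal T,\mathcal B_t)\le \frac{C_w}{\alpha}d_n(\mathcal Z,\mathcal B_z)$ from the residual $\frac{1}{w_{0,k(0)}}A_tM_{k(0)}\mu_e$, and then lifts to $\mathcal Q$ through \cref{lemma:to-tangent} and \cref{lemma:union-bound}, which produces exactly your space $\operatorname{span}\{x_i\partial_{x_k}v_j\}$. You are also right that \cref{lemma:commuting} is used on all of $\mathcal M(\mathcal W)$ rather than $\mathcal M(\mathcal W_{-\alpha})$; the paper does this without comment, and the proof of that lemma never uses the restriction.
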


\begin{proof}
    We will first bound each $d_n(\mathcal{T}, \mathcal{B}_t)$ and use \cref{lemma:union-bound} and \cref{lemma:to-tangent} to get the stated bound.

    Let $(V, \epsilon)$ be the approximating space and error for $\mathcal{Z}$. We will show that
    \[
        V' = \operatorname{span}\{ \partial_{x_1}V,\dots, \partial_{x_d}V\}
    \]
    is an approximating subspace for $\mathcal T$.
    
    Then for $t_0 \in \mathcal{T}$ and $z_{0}\in\mathcal{Z}$ generated by the empirical neuron measure $\delta_{\bm{w}_0}$, firstly from the approximating subspace:
    \[
        A_z\delta_{\bm{w}_0} = A_z\mu_{v} + A_z\mu_{e}, \quad \abs{\mu_e}_{TV} \leq \epsilon.
    \]
    Then by \cref{lemma:commuting}, for any $i$ we have:
    \[
        \bm{w}_{0,i} t_0= \bm{w}_{0,i} A_t \delta_{\bm{w}_0}  = \partial_{x_i}A_z \delta_{\bm{w}_0} = \partial_{x_i}A_z \mu_v + \partial_{x_i}A_z \mu_e, 
    \]
    therefore,
    \[
        t_0 = \frac{1}{\bm{w}_{0,i}}A_t M_i \mu_v + \frac{1}{\bm{w}_{0,i}}A_t M_i \mu_e
    \]
    with:
    \[
        \abs{\frac{1}{{\bm{w}_{0,i}}}M_i \mu_e}_{TV} \leq \frac{C_w}{\abs{\bm{w}_{0,i}}}\epsilon.
    \]
    Then choose $i = k(0)$, to produce the error bound $\frac{C_w}{\alpha}\epsilon$. Since $\dim V' \leq D \cdot \dim V$, we have $d_{D\cdot n}(\mathcal{T}, \mathcal B_t) \leq \frac{C_w}{\alpha}d_{n}(\mathcal Z, \mathcal B_z)$. 
    
    Finally, applying \cref{lemma:union-bound} and \cref{lemma:to-tangent},
    \[
        d_{D^2n}(\mathcal{Q}, \mathcal{B}_q) \leq \max_i d_{Dn}(\mathcal{Q}^i, \mathcal{B}_q^i) \leq d_{Dn}(\mathcal{T}, \mathcal{B}_t) \leq \frac{C_w}{\alpha}d_n(\mathcal{Z}, \mathcal{B}_z).
    \]
\end{proof}

\begin{lemma}[Width transfer for large unsigned neurons.] \label{lemma:unsigned-transfer}
    Suppose each set of incoming weights $\bm{w}_p$ has some large coordinate, $\lvert \bm{w}_{p,k(p)} \rvert \geq \alpha$. Additionally suppose $w_{p,j}\geq 0$ for all $j$ and,  
    \[
        \sup_{\bm{w} \in \mathcal{W}}\norm{\bm{w}}_1 \leq C_1
    \]
    Then:
    \[
        d_{D\cdot n}(\mathcal{Q}, \mathcal{B}_q) \leq \frac{C_1}{\alpha} \cdot d_n(\mathcal{Z}, \mathcal{B}_z).
    \]
\end{lemma}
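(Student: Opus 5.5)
The plan follows the proof of \cref{lemma:large}, with one change: instead of taking the $D$ coordinate derivatives $\partial_{x_1}V,\dots,\partial_{x_D}V$, we apply a single directional derivative along the all-ones direction,
\[
    L \defeq \sum_{j=1}^D \partial_{x_j}.
\]
For a nonnegative weight vector, $L\,\sigma(\bm w\cdot\bm x)=\big(\sum_j w_j\big)\sigma'(\bm w\cdot\bm x)=\norm{\bm w}_1\,\sigma'(\bm w\cdot\bm x)$. So one derivative recovers every tangent feature $t_{\bm w}$, with a prefactor that is bounded below. This is what should remove one factor of $D$.

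\textbf{Step 1 (commuting relation for $L$).} Define the multiplier $M\defeq\sum_j M_i$ on $\mathcal M(\mathcal W)$, i.e.\ $d(M\mu)/d\mu=s(\bm w)\defeq\sum_j w_j$. By linearity of \cref{lemma:commuting}, or directly from \cref{lemma:dom-convergence}, we get $L A_z\mu=A_tM\mu$. Here $\abs{s(\bm w)}\leq\norm{\bm w}_1\leq C_1$ on all of $\mathcal W$, so $\abs{M\mu}_{TV}\leq C_1\abs{\mu}_{TV}$. Note that the ambient dictionary, and hence the measures $\mu_v$ and $\mu_e$, may charge weights of either sign; this is harmless because only this absolute bound is used. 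Nonnegativity is needed only for the empirical neurons $\bm w_0$.

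\textbf{Step 2 (approximating $\mathcal T$ with $n$ dimensions).} Let $(V,\epsilon)$ be an approximating subspace and error for $\mathcal Z$ in $\mathcal B_z$, and set $V'\defeq LV=\{A_tM\mu_v\}\subset\mathcal B_t$, which has $\dim V'\leq n$. For $t_0=A_t\delta_{\bm w_0}\in\mathcal T$, write $A_z\delta_{\bm w_0}=A_z\mu_v+A_z\mu_e$ with $\abs{\mu_e}_{TV}\leq\epsilon$ and apply $L$:
\[
    \norm{\bm w_0}_1\, t_0 = A_tM\mu_v + A_tM\mu_e .
\]
Since $\bm w_0\geq 0$ has a coordinate of size at least $\alpha$, we have $\norm{\bm w_0}_1\geq\alpha$. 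Dividing through gives $\norm{t_0-v'}_t\leq \frac{C_1}{\alpha}\epsilon$ with $v'=\norm{\bm w_0}_1^{-1}A_tM\mu_v\in V'$. Hence
\[
    d_n(\mathcal T,\mathcal B_t)\leq \frac{C_1}{\alpha}\,d_n(\mathcal Z,\mathcal B_z).
\]

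\textbf{Step 3 (conclude).} \Cref{lemma:to-tangent} gives $d_n(\mathcal Q^i,\mathcal B_q^i)\leq d_n(\mathcal T,\mathcal B_t)$ for each $i$. \Cref{lemma:union-bound} then combines the $D$ coordinates into a $D n$-dimensional space, yielding $d_{Dn}(\mathcal Q,\mathcal B_q)\leq\frac{C_1}{\alpha}d_n(\mathcal Z,\mathcal B_z)$.

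The only delicate point is Step 1–2. In the signed case, $\sum_j w_j$ could cancel to near zero even when some coordinate is large, which is exactly why \cref{lemma:large} needed a coordinate-by-coordinate choice. Nonnegativity turns the divisor into $\norm{\bm w_0}_1\geq\abs{w_{0,k(0)}}\geq\alpha$. Meanwhile, the $\ell_1$ bound on the dictionary (rather than the $\ell_\infty$ bound $C_w$) is what controls the total-variation inflation of $M$, which explains why the constant becomes $C_1$.
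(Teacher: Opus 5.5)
Your proposal is correct and is essentially the paper's proof. The paper likewise replaces the $D$ coordinate derivatives by the single directional derivative $\Delta=\sum_i\partial_{x_i}$ with the summed multiplier $M_\Sigma$, takes $V'=\Delta V$ with $\dim V'\leq\dim V$, divides by $s_0=\sum_i w_{0,i}\geq\alpha$ to get the $C_1\epsilon/\alpha$ total-variation estimate, and concludes via \cref{lemma:to-tangent} and \cref{lemma:union-bound}. Your explicit justification that $s_0=\norm{\bm w_0}_1\geq\abs{w_{0,k(0)}}\geq\alpha$, and your remark that the possibly signed $\mu_v,\mu_e$ only need $\abs{s(\bm w)}\leq C_1$, make explicit two points the paper leaves implicit.
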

\begin{proof}
    Reinstating the objects of the previous proof. We consider instead a directional derivative and summed multiplier:
    \[
        \Delta \defeq \sum_{i=1}^D \partial_{x_i}, \qquad (M_{\Sigma}\mu)(A) = \int_A\left(\sum_{i=1}^D{w_i}\right) \ d\mu.
    \]
    With the inherited commutation relation $\Delta A_z \mu = A_t M_{\Sigma}\mu$. Then consider the approximating subspace, \( V' = \Delta V\). Let $s_0 \defeq \sum w_{0, i} \geq \alpha$, we have:
    \[
        t_0  = \frac{1}{s_0}\Delta A_z \mu_v + \frac{1}{s_0}\Delta A_z \mu_e
    \]
    and the estimate:
    \[
        \abs{\frac{1}{s_0}M_\Sigma \mu_e}_{TV} \leq \frac{C_1}{\alpha}\epsilon.
    \]
    so by the same logic as the previous proof except with $\dim V' \leq \dim V$ we have:
    \[
        d_{Dn}(\mathcal{Q}, \mathcal{B}_q) \leq \max_i d_{n}(\mathcal{Q}^i, \mathcal{B}_q^i) \leq d_{n}(\mathcal{T}, \mathcal{B}_t) \leq \frac{C_1}{\alpha}d_n(\mathcal{Z}, \mathcal{B}_z).
    \]
\end{proof}

\begin{lemma}[Width of small neurons] \label{lemma:small} 
Suppose each incoming weight has $\max_{i}\abs{\bm{w}_{p,i}} \leq \alpha$ then:
\[
    d_D(\mathcal{Q}, L_2) \leq C_s D^{1/2} \alpha
\]
where $C_{s} \leq \infty$ is dimension independent.
\end{lemma}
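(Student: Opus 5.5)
We linearise the tangent features of small neurons around \(\bm w=0\) and take as approximating space the span of \(\{x_i\sigma'(0): i\in[D]\}\), which has dimension at most \(D\). For each small neuron \(\bm w\) with \(\norm{\bm w}_\infty\le\alpha\) and each coordinate \(i\), we approximate \(q_{\bm w,i}(\bm x)=x_i\sigma'(\bm w\cdot\bm x)\) by \(x_i\sigma'(0)\). The whole proof then reduces to bounding this single residual in \(L_2(p_x)\), uniformly over \(\bm w\) and \(i\).

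By the mean value theorem and \cref{ass:smooth},
\[
    \abs{\sigma'(\bm w\cdot\bm x)-\sigma'(0)}\le C_\sigma\abs{\bm w\cdot\bm x}.
\]
Multiplying by \(x_i\), squaring and taking expectations gives
\[
    \norm{q_{\bm w,i}-x_i\sigma'(0)}_{L_2}^2\le C_\sigma^2\,\mathbb E\left[x_i^2(\bm w\cdot\bm x)^2\right].
\]

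Next we write \(\bm w=\alpha\bm u\) with \(\norm{\bm u}_\infty\le1\). The second half of \cref{ass:input-second-moment} then yields
\[
    \mathbb E\left[x_i^2(\bm w\cdot\bm x)^2\right]=\alpha^2\,\mathbb E\left[x_i^2(\bm u\cdot\bm x)^2\right]\le\alpha^2\kappa_x^2D.
\]
Hence every \(q\in\mathcal Q\) lies within \(C_\sigma\kappa_x D^{1/2}\alpha\) of the fixed subspace, which gives \(d_D(\mathcal Q,L_2)\le C_sD^{1/2}\alpha\) with \(C_s=C_\sigma\kappa_x\). This constant is independent of \(D\). If \(\alpha=0\) the statement is trivial.

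There is no serious obstacle here. The only point needing care is that the bound must be uniform over all \(i\) and all small \(\bm w\), and this uniformity is exactly what the supremum in \cref{ass:input-second-moment} provides. The subspace also does not depend on the particular neurons, which is what later allows it to be combined additively with the large-neuron subspace from \cref{lemma:large}; that combination accounts for the "\(+D\)" in the dimension count of \propref{prop:main-width-transfer}.
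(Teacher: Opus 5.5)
Your proposal is correct and follows the paper's proof essentially line for line. Both use the mean-value linearisation of $\sigma'$ at $0$, the rescaling $\bm w=\alpha\bm u$ fed into the second part of \cref{ass:input-second-moment}, and the neuron-independent subspace $\operatorname{span}\{x_i\sigma'(0): i\in[D]\}$, giving $C_s=C_\sigma\kappa_x$ (your separate treatment of $\alpha=0$ is a minor tidy-up the paper omits).
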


\begin{proof}
    Write,
    \(\bm w=\alpha\bm u \) so that \(\norm{\bm u}_\infty\leq1. \)
    By the MVT we have:
    \[
        \abs{\sigma'(\neuronin)-\sigma'(0)} 
        \leq \norm{\sigma''}_{\infty}\abs{\neuronin} 
        \leq \alpha\norm{\sigma''}_{\infty}\abs{\bm{u}\cdot\bm{x}}
    \]
    Therefore:
    \[
        \norm{x_i\sigma'(\neuronin) - x_i \sigma'(0)}_{L_2(p_x)} \leq  \alpha C_{\sigma}\kappa_x D^{1/2}
    \]
    Therefore setting $C_{s} \defeq C_{\sigma}\kappa_x$ taking $V = \text{span}\{x_i\sigma'(0): i \in [D] \}$ gives:
    \[
        d_D(\mathcal{Q}, L_2) \leq C_s D^{1/2}\alpha
    \]
\end{proof}

Finally we give a lemma for combining the two bounds:
\begin{lemma} \label{lemma:combine}
Suppose $Q = Q^{-}\cup Q^{+}$
    \[
        d_{n+m}(Q, X) \leq \max{\{d_n(Q^+,X), d_m(Q^{-},X) \}}.
    \]
\end{lemma}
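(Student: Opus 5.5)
The plan is to build the approximating subspace for $Q$ directly as the sum of the two optimal subspaces and then check the error separately on each piece. Fix $\varepsilon>0$. By the definition of the Kolmogorov width (\cref{def:k-width}), there are subspaces $V^+,V^-\subset X$ with $\dim V^+\le n$ and $\dim V^-\le m$ such that
\[
\sup_{q\in Q^+}\inf_{v\in V^+}\norm{q-v}_X\le d_n(Q^+,X)+\varepsilon,
\qquad
\sup_{q\in Q^-}\inf_{v\in V^-}\norm{q-v}_X\le d_m(Q^-,X)+\varepsilon .
\]
We need these near-optimal subspaces because the infimum in the width need not be attained.

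Set $V\defeq V^++V^-$. Then $\dim V\le n+m$, so $V$ is admissible in the infimum defining $d_{n+m}(Q,X)$. Now take any $q\in Q=Q^+\cup Q^-$; it lies in at least one of the two pieces. If $q\in Q^+$, then since $V^+\subseteq V$,
\[
\inf_{v\in V}\norm{q-v}_X\le\inf_{v\in V^+}\norm{q-v}_X\le d_n(Q^+,X)+\varepsilon .
\]
The case $q\in Q^-$ gives the same bound with $d_m(Q^-,X)+\varepsilon$.

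Taking the supremum over $q\in Q$ gives
\[
d_{n+m}(Q,X)\le\max\{d_n(Q^+,X),d_m(Q^-,X)\}+\varepsilon .
\]
Letting $\varepsilon\to0$ yields the claim.

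This is the same subspace-union device as in \cref{lemma:union-bound}, so I expect no real obstacle. The only points needing care are the $\varepsilon$-slack for non-attained infima and the fact that $\dim(V^++V^-)\le\dim V^++\dim V^-$. If a piece is empty, its width is interpreted as $0$, and the argument is unchanged.
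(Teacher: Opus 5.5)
Your proof is correct and matches the paper's: take the span of approximating subspaces for $Q^{+}$ and $Q^{-}$, which has dimension at most $n+m$, and bound each $q$ by the error of whichever piece contains it. Your $\varepsilon$-slack for non-attained infima is a small improvement in rigour, since the paper tacitly assumes the widths are attained.
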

\begin{proof}
    Let $V^{+}, \epsilon^{+}$ and $V^{-}, \epsilon^{-}$ be approximating subspaces and errors for $Q^{+}$ and $Q^{-}$ respectively. Then take $V\defeq\operatorname{span}\{V^{+}, V^{-}\}$ so $\dim V \leq n + m$.

    For any $q \in Q$ then either $q \in Q^{-}$ or $q\in Q^{+}$ therefore:
    \[
        \text{dist}(q, V) \leq \max{\{\epsilon^{-}, \epsilon^{+}\}}.
    \]
\end{proof}

\subsection{Main transfer result}
Combining the small and large lemmas gives us:
\begin{proposition}[Width transfer] \label{prop:transfer} there exists a dimension-free constant $C_{tr} < \infty$ such that:
\[
    d_{D^2n+D}(\mathcal{Q}, L_2) \leq C_{tr}D^{1/4} d_n(\mathcal{Z}, \mathcal{B}_z)^{1/2}.
\] 
\end{proposition}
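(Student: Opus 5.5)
The plan is to split the weight family at a threshold \(\alpha>0\) to be optimised, handle each group with the appropriate earlier lemma, and merge the two approximating spaces with \cref{lemma:combine}. Write \(W=W^{+}\cup W^{-}\), where \(W^{+}\) consists of neurons with \(\max_i\abs{w_{p,i}}\geq\alpha\) and \(W^{-}\) consists of those with \(\max_i\abs{w_{p,i}}<\alpha\). This splits the query family as \(\mathcal Q=\mathcal Q(W^{+})\cup\mathcal Q(W^{-})\).

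\emph{Large neurons.} Since \(\mathcal Z(W^{+})\subseteq\mathcal Z(W)\), any approximating subspace for \(\mathcal Z(W)\) also serves for the subfamily, so \(d_n(\mathcal Z(W^{+}),\mathcal B_z)\leq d_n(\mathcal Z(W),\mathcal B_z)=:\epsilon\). Applying \cref{lemma:large} with \(C_w\) from \cref{ass:bounded-weight} gives
\[
d_{D^2n}(\mathcal Q(W^{+}),\mathcal B_q)\leq \tfrac{C_w}{\alpha}\,\epsilon .
\]
We then move from \(\mathcal B_q\) to \(L_2(p_x)\) using the bounded inclusion \(\norm{A_q^i\mu}_{L_2}\leq C_{\mathrm{in}}\abs{\mu}_{TV}\). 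This needs one check: the combined norm on \(\mathcal B_q\) is an infimum over decompositions \(\sum_i A_q^i\mu_i\), and the triangle inequality gives \(\norm{\sum_i A_q^i\mu_i}_{L_2}\leq C_{\mathrm{in}}\sum_i\abs{\mu_i}_{TV}\). Hence the inclusion \(\mathcal B_q\hookrightarrow L_2\) also has operator norm at most \(C_{\mathrm{in}}=B_\sigma C_x\), and the width contraction lemma yields
\[
d_{D^2n}(\mathcal Q(W^{+}),L_2)\leq C_{\mathrm{in}}\tfrac{C_w}{\alpha}\,\epsilon .
\]

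\emph{Small neurons.} \Cref{lemma:small} directly gives \(d_D(\mathcal Q(W^{-}),L_2)\leq C_sD^{1/2}\alpha\).

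\emph{Combining and optimising.} By \cref{lemma:combine},
\[
d_{D^2n+D}(\mathcal Q,L_2)\leq \max\Bigl\{\tfrac{C_{\mathrm{in}}C_w\epsilon}{\alpha},\,C_sD^{1/2}\alpha\Bigr\}.
\]
We balance the two terms by choosing \(\alpha=\bigl(C_{\mathrm{in}}C_w\epsilon/(C_sD^{1/2})\bigr)^{1/2}\). Both terms then equal \((C_{\mathrm{in}}C_wC_s)^{1/2}D^{1/4}\epsilon^{1/2}\), so the constant \(C_{\mathrm{tr}}=(C_{\mathrm{in}}C_wC_s)^{1/2}\) is dimension-free.

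The degenerate cases need separate treatment. If \(\epsilon=0\), we take \(\alpha\downarrow0\) and pass to the limit; equivalently, every neuron is either exactly zero or large for any \(\alpha>0\), so the bound holds for every \(\alpha\). If the optimal \(\alpha\) exceeds \(C_w\), every neuron is small. Then \(C_sD^{1/2}C_w\) is at most the balanced value, and the bound still follows from \cref{lemma:small} alone.

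The main obstacle is the norm-switching step, specifically that \cref{lemma:large} is stated in \(\mathcal B_q\) while the target is \(L_2\). A secondary concern is that \cref{lemma:large} implicitly requires the Barron-space decomposition \(\mu_v+\mu_e\) to live on \(\mathcal W\) and be differentiated through \cref{lemma:commuting}. The commuting diagram is stated on \(\mathcal W_{-\alpha}\), but the approximant \(\mu_v\) may charge small weights. I would check that the identity \(\partial_{x_i}A_z=A_tM_i\) holds on all of \(\mathcal M(\mathcal W)\) via \cref{lemma:dom-convergence}. Only the division by \(w_{0,i}\) at the single atom needs \(\abs{w_{0,i}}\geq\alpha\), and that is guaranteed on \(W^{+}\).
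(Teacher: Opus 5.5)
Your proposal is correct and follows the paper's proof essentially step for step. It uses the same split at threshold $\alpha$ into large and small neurons, \cref{lemma:large} plus the bounded inclusion for the large group, \cref{lemma:small} for the small group, \cref{lemma:combine} to merge, and the same balancing choice of $\alpha$, giving $C_{\mathrm{tr}}=\sqrt{C_{\mathrm{in}}C_wC_s}$. Your extra checks fill small points the paper leaves implicit without changing the route: the inclusion $\mathcal B_q\hookrightarrow L_2$ for the combined space still has norm at most $C_{\mathrm{in}}$; the identity $\partial_{x_i}A_z=A_tM_i$ is needed on all of $\mathcal M(\mathcal W)$ via \cref{lemma:dom-convergence} rather than only on $\mathcal M(\mathcal W_{-\alpha})$; and the $\epsilon=0$ case follows by letting $\alpha\downarrow0$.
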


\begin{proof}
    Choose $\alpha > 0$ and let $\mathcal{Q}_{\alpha}^{+} = \{ x_i\sigma'(\bm{w}_p\cdot\bm{x}): \norm{\bm{w}_p}_{\infty} \geq \alpha, \ i \in [D], p\in[P] \}$ and $\mathcal{Q}_{\alpha}^{-} = \{ x_i\sigma'(\bm{w}_p\cdot\bm{x}): \norm{\bm{w}_p}_{\infty} <\alpha, \ i \in [D], p\in[P] \}$.

    By \cref{lemma:large}, we have:
    \[
        d_{D^2\cdot n}(\mathcal{Q}^{+}_{\alpha}, L_2) \leq \frac{C_{in}C_w}{\alpha}d_{n}(\mathcal{Z}^{+}_{\alpha}, \mathcal{B}_z) 
        \leq \frac{C_{in}C_w}{\alpha}d_{n}(\mathcal{Z}, \mathcal{B}_z) 
    \]
    and by \cref{lemma:small}, we have:
    \[
        d_{D}(\mathcal{Q}_{\alpha}^{-}, L_2) \leq C_s D^{1/2}\alpha.
    \]
    so using \cref{lemma:combine} to combine the bounds,
    \[
        d_{D^2n+ D}(\mathcal{Q}, L_2) 
        \leq 
        \max{
            \{ C_sD^{1/2}\alpha, C_{in}C_{w}d_{n}(\mathcal{Z}, \mathcal{B}_z)\alpha^{-1}\} 
        }
    \]
    optimising over $\alpha$, gives 
    \[
    \alpha^*=\sqrt{\frac{C_{in}C_wd_{n}(\mathcal{Z}, \mathcal{B}_z)}{C_sD^{1/2}}}
    \]
    and plugging $\alpha^*$ into the combined bound gives the result with $C_{tr}\defeq \sqrt{C_{in}C_wC_s}$.
\end{proof}

\begin{corollary}
    Suppose all the weights are non-negative and the dictionary is bounded in $l_1$ (\cref{ass:boundedl1-weight}). Then there exists dimension-free $C_{tr}^{+} <\infty$ such that,
    \[
        d_{Dn+D}(\mathcal{Q}, L_2) \leq C_{tr}^{+}D^{1/4} d_n(\mathcal{Z}, \mathcal{B}_z)^{1/2},
    \]
    where $C_{tr}^{+} = \sqrt{C_{in}C_1C_s}$.
\end{corollary}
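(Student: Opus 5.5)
The plan is to repeat the proof of \propref{prop:transfer} verbatim, swapping \cref{lemma:large} for \cref{lemma:unsigned-transfer} on the large-neuron part. Fix $\alpha>0$ and split the (non-negative) weights as before: $\mathcal Q_\alpha^{+}$ collects the tangent features of neurons with $\norm{\bm w_p}_\infty\geq\alpha$, and $\mathcal Q_\alpha^{-}$ collects those with $\norm{\bm w_p}_\infty<\alpha$. Write $\mathcal Z_\alpha^{+}\subseteq\mathcal Z$ for the matching activities.

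\textbf{Large neurons.} All coordinates are non-negative, so the summed multiplier satisfies $s_0=\sum_i w_{0,i}\geq\max_i w_{0,i}\geq\alpha$. This is the only place non-negativity is used: it rules out cancellation in $s_0$. \cref{lemma:unsigned-transfer} therefore applies to this subfamily and gives $d_{Dn}(\mathcal Q_\alpha^+,\mathcal B_q)\leq \frac{C_1}{\alpha}d_n(\mathcal Z_\alpha^+,\mathcal B_z)$.

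To pass to $L_2(p_x)$, I use width contraction under the inclusion $\mathcal B_q\hookrightarrow L_2$. Its norm is at most $C_{\mathrm{in}}$, because the combined norm is the quotient norm of $\bigoplus_i\mathcal B_q^i$ and each summand embeds with constant $C_{\mathrm{in}}$. Monotonicity of $d_n$ under passing to the subset $\mathcal Z_\alpha^+\subseteq\mathcal Z$ (the infimum over $V$ of a supremum over a smaller set) then yields
\[
d_{Dn}(\mathcal Q_\alpha^+,L_2)\leq \frac{C_{\mathrm{in}}C_1}{\alpha}d_n(\mathcal Z,\mathcal B_z).
\]

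\textbf{Small neurons.} \cref{lemma:small} gives $d_D(\mathcal Q_\alpha^-,L_2)\leq C_sD^{1/2}\alpha$ directly. It uses only \cref{ass:smooth,ass:input-second-moment}, with the $D$-dimensional space spanned by $x_i\sigma'(0)$.

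\textbf{Combining.} \cref{lemma:combine} gives $d_{Dn+D}(\mathcal Q,L_2)\leq\max\{C_sD^{1/2}\alpha,\ C_{\mathrm{in}}C_1 d_n(\mathcal Z,\mathcal B_z)\alpha^{-1}\}$. I then balance the two terms by choosing $\alpha^\ast=\sqrt{C_{\mathrm{in}}C_1d_n/(C_sD^{1/2})}$. Both terms become $\sqrt{C_{\mathrm{in}}C_1C_s}\,D^{1/4}d_n^{1/2}$, which gives $C_{\mathrm{tr}}^+=\sqrt{C_{\mathrm{in}}C_1C_s}$. None of $C_{\mathrm{in}}=B_\sigma C_x$, $C_1$ or $C_s=C_\sigma\kappa_x$ depends on $D$. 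If $d_n=0$, I let $\alpha\to0$, so the bound holds trivially.

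\textbf{Main obstacle.} The delicate point is the dimension count in \cref{lemma:unsigned-transfer}. The single operator $\Delta$ gives an $n$-dimensional space $\Delta V$ approximating the tangent family $\mathcal T$. Lifting to $\mathcal Q=\bigcup_i\mathcal Q^i$ via \cref{lemma:to-tangent} and \cref{lemma:union-bound} multiplies $x_i$ into $\Delta V$ separately for each $i$, giving the $Dn$ count rather than $D^2n$.

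I would also check that the $\mu_v$ part of the decomposition is handled correctly. The approximant $\frac{1}{s_0}\Delta A_z\mu_v$ depends on the neuron through the scalar $1/s_0$, but it still lies in the fixed linear space $\Delta V$. So only the error term needs the TV bound $\abs{M_\Sigma\mu_e}_{TV}\leq C_1\abs{\mu_e}_{TV}$, which follows from $\sup_{\bm w\in\mathcal W}\norm{\bm w}_1\leq C_1$.
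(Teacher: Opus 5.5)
Your proposal is correct and follows the paper's intended argument exactly: rerun the proof of \propref{prop:transfer} with \cref{lemma:unsigned-transfer} in place of \cref{lemma:large}, pass to $L_2(p_x)$ via the inclusion constant $C_{\mathrm{in}}$, handle small neurons with \cref{lemma:small}, combine with \cref{lemma:combine}, and balance at $\alpha^\ast$ to obtain $C_{\mathrm{tr}}^+=\sqrt{C_{\mathrm{in}}C_1C_s}$. Your extra checks fill in details the paper leaves implicit: the combined space $\mathcal B_q$ embeds in $L_2$ with norm at most $C_{\mathrm{in}}$, the $\ell_1$ bound is only needed on the dictionary for the TV estimate, and the degenerate case $d_n=0$ is covered.
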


We end the section with a lemma to invert the approximation curves, leaving the ambient spaces implicit for readability.
\begin{lemma} \label{lemma:inv-transfer}
    If $d_{n\cdot a}(X) \leq C d_{n}(Y)^{1/2}$ and $R(X, \epsilon)\geq R^*$ then $R(Y,\left(\frac{\epsilon}{C}\right)^2) \geq \left\lceil \frac{R^*}{a} \right\rceil$
\end{lemma}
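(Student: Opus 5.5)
We prove the statement by contraposition, working directly from the definition $R(W,\delta)=\min\{n: d_n(W)\le\delta\}$. Let $n_0 \defeq R(Y,(\epsilon/C)^2)$. We will show $n_0 a \ge R^*$, which is the same as $n_0 \ge \lceil R^*/a\rceil$ because $n_0$ is an integer.

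First, the definition of $n_0$ as a minimum gives $d_{n_0}(Y)\le(\epsilon/C)^2$. The hypothesis $d_{n\cdot a}(X)\le C\,d_n(Y)^{1/2}$ holds for every $n$, so we apply it at $n=n_0$. Since $t\mapsto t^{1/2}$ is monotone on $[0,\infty)$, this yields
\[
    d_{n_0 a}(X)\le C\left(\frac{\epsilon}{C}\right)=\epsilon .
\]
Second, $R(X,\epsilon)$ is the least $m$ with $d_m(X)\le\epsilon$. The index $m=n_0a$ is admissible, so $R^*\le R(X,\epsilon)\le n_0 a$. Dividing by $a>0$ and using integrality of $n_0$ gives $n_0\ge\lceil R^*/a\rceil$, which is the claim.

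Only a few side conditions need attention, and we do not expect any of them to be a real obstacle. We need $C>0$ and $a$ a positive integer; in the intended use $a=D^2$ or $a=D$. Then $n_0 a$ is a valid index, so no ceiling appears on the $X$ side. The inequality must hold at the specific index $n_0$, which is fine because it is assumed for all $n$. If the intended statement has the offset form $d_{na+D}$, as in \cref{prop:transfer}, the same argument gives $n_0a+D\ge R^*$, i.e. $n_0\ge\lceil (R^*-D)/a\rceil$. The $+D$ term is then absorbed into the constants when this lemma is combined with \cref{prop:transfer} to prove \thmref{thm:critical-initial-width}. Finally, if no finite $n$ attains $d_n(Y)\le(\epsilon/C)^2$, then $n_0=\infty$ and the bound holds trivially.
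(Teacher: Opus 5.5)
Your proof is correct and is essentially the paper's argument. The paper shows that every $n<R^*/a$ is inadmissible: $na<R^*$ forces $d_{na}(X)>\epsilon$, and hence $d_n(Y)\ge (d_{na}(X)/C)^2>(\epsilon/C)^2$. You run the same implication forward from the minimiser $n_0$ to get $R^*\le R(X,\epsilon)\le n_0a$.

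Your closing aside, that a $+D$ offset is simply ``absorbed into the constants'', is not part of this lemma. It is also not automatic, since it needs $R^*$ large compared with $D$, though nothing in your proof of the stated lemma depends on it.
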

\begin{proof}
    First note by definition of $R$ we have: for any $m < R^*$ that $d_m(X) >\epsilon$. Then:
    \[
      R(Y,\left(\frac{\epsilon}{C}\right)^2) = \min \{n: d_n(Y) \leq (\frac{\epsilon}{C})^2 \}
    \]
    Since for any $n < R^*/a$, we have $d_n(Y) \geq (\frac{d_{n \cdot a}(X)}{C})^2 > (\frac{\epsilon}{C})^2$ and $n$ cannot be minimal, we must have $R(Y,\left(\frac{\epsilon}{C}\right)^2) \geq \left\lceil \frac{R^*}{a} \right\rceil$.
\end{proof}

\section{SQ Bounds on Width}
We recall the standard model of a correlational-statistical query (CSQ) oracle, and SQ Dimension based lower bounds.

\begin{definition}[CSQ oracle] For a target $f$ a query $q:\mathcal{X}\rightarrow [-1, 1]$ and a tolerance parameter $\tau$, the CSQ oracle $\operatorname{CSQ}(q,\tau)$ returns a value in the range:
\[
    [\mathbb{E}[q(x)f(x)] - \tau, \mathbb{E}[q(x)f(x)]+\tau].
\]
    
\end{definition}

\begin{definition}[SQ dimension]
The quantity \(\SQdim(\mathcal F,\gamma)\) is the largest integer \(d\)
for which there exist
\(h_1,\ldots,h_d\in\mathcal F\) satisfying
\[
    \|h_i\|_{L_2}=1,
    \qquad
    |\langle h_i,h_j\rangle|
    \leq\gamma
    \quad(i\neq j).
\]
\end{definition}

We recall a standard argument for lower bounding CSQ algorithms, which we will adapt to work with width.

We will consider measuring performance under the total squared loss:
\[
    \mathcal{L}_2(f, f') \defeq \norm{f - f'}^2_{L_2}
\]

\begin{lemma} Suppose a concept class has $\SQdim(\mathcal{F}, \gamma)=d$ then the number of queries, to learn the concept class to squared loss smaller than $\frac{1-\gamma}{2}$ any CSQ algorithm must make:
\[
    M \geq \frac{(d-1)\tau^2}{1 + (d-1)\gamma},
\]
calls to the query oracle.
\end{lemma}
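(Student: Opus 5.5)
The plan is to run the standard adversary argument against the all-zero response. Fix the nearly orthogonal family \(h_1,\ldots,h_d\in\mathcal F\) with \(\|h_j\|_{L_2}=1\) and \(|\langle h_j,h_l\rangle|\le\gamma\) for \(j\neq l\). The adversary answers every query \(q_1,\ldots,q_M\) with \(0\). This answer is legal for target \(h_j\) whenever \(|\langle q_m,h_j\rangle|\le\tau\) for all \(m\).

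The first step is to count, for a single query \(q\) with \(\|q\|_{L_2}\le 1\) (implied by \(q:\mathcal X\to[-1,1]\)), how many targets it can rule out. Let \(A_q=\{j:|\langle q,h_j\rangle|>\tau\}\), and choose signs \(s_j=\operatorname{sign}\langle q,h_j\rangle\). Then
\[
|A_q|\,\tau<\Big\langle q,\sum_{j\in A_q}s_jh_j\Big\rangle\le\Big\|\sum_{j\in A_q}s_jh_j\Big\|\le\sqrt{|A_q|+|A_q|(|A_q|-1)\gamma}.
\]
This gives \(|A_q|\,\tau^2<1+(|A_q|-1)\gamma\), which rearranges to \(|A_q|(\tau^2-\gamma)<1-\gamma\).

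Here a cleaner form is needed, because the target bound has denominator \(1+(d-1)\gamma\) rather than \(\tau^2-\gamma\). I would instead use the averaged version of the same Cauchy–Schwarz step. Summing over all \(j\in[d]\) with signs,
\[
\sum_j|\langle q,h_j\rangle|\le\Big\|\sum_j s_jh_j\Big\|\sqrt{\ldots}
\]
or, more directly, apply the Gram-matrix bound
\[
\sum_j\langle q,h_j\rangle^2\le\lambda_{\max}(G)\,\|q\|^2\le 1+(d-1)\gamma,
\]
which follows from Gershgorin. Hence each query eliminates fewer than \((1+(d-1)\gamma)/\tau^2\) targets.

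The second step is to argue that if the learner succeeds, at least \(d-1\) targets must be eliminated. If two targets \(h_j,h_l\) both survive, the learner produces the same output \(f\) for both. The identity \(\|h_j-h_l\|^2\ge 2-2\gamma\) and the triangle inequality then give \(\max(\mathcal L_2(f,h_j),\mathcal L_2(f,h_l))\ge\frac{1-\gamma}{2}\), which contradicts uniform success.

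Combining the two steps, \(M(1+(d-1)\gamma)/\tau^2\ge d-1\), which yields the stated bound
\[
M\ge\frac{(d-1)\tau^2}{1+(d-1)\gamma}.
\]
The fact that the adversary's answers are fixed in advance also handles adaptivity: because every answer is zero, the queries themselves are determined independently of the target, so counting eliminations per query is valid.

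The main obstacle is getting exactly the denominator \(1+(d-1)\gamma\). The route through the Gram-matrix eigenvalue bound via \(\sum_j\langle q,h_j\rangle^2\) is the step to verify carefully. In particular I must check that \(\|q\|_{L_2}\le1\) is the only normalization needed and that the strict and non-strict inequalities align, so that \(\ge\) holds rather than \(>\).
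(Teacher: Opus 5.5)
Your proposal is correct and follows the paper's proof essentially step for step: the all-zero adversary, the per-query bound $\sum_j \langle q,h_j\rangle^2 \le \lambda_{\max}(G)\,\|q\|^2 \le 1+(d-1)\gamma$ via Gershgorin, the requirement of $d-1$ eliminations, and the triangle-inequality argument giving loss at least $\frac{1-\gamma}{2}$ when two targets survive. The only difference is minor: you get the Gram bound directly as the operator norm of the synthesis map $c\mapsto\sum_j c_jh_j$, whereas the paper projects $q$ onto the span, writes $Pq=\sum_j c_jh_j$ and uses $a=Gc$ with a pseudo-inverse; in a write-up, also delete the abandoned signed-sum estimate and the unfinished display containing $\sqrt{\ldots}$, since neither is used.
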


\begin{proof}
    Let $\{h_i\}_{i=1}^d \in \mathcal{F}$ satisfy the claim, and consider,
    \[
        a(q) \defeq (\ip{q}{h_1}, \cdots, \ip{q}{h_d}) \in \mathbb{R}^d.
    \]
    We show $\norm{a}_2^2 \leq 1 + (d-1)\gamma$, and since this mass must be distributed among all the hypothesis, a single query can only eliminate a small number of hypotheses, leading to the desired bound.

    We first write the orthogonal projection of a query, with $\norm{q}_{L_2}\leq 1$ onto the space spanned by the hypotheses \( Pq = \sum_j c_j h_j\) and note:
    \[
        1 \geq \norm{Pq}_2^2 = \sum_{i,j} c_i c_j \ip{h_i}{h_j} = c^t G c,
    \]
    where $G_{ij}\defeq \ip{h_i}{h_j}$. Now by Gershgorin's circle theorem: $\lambda(G)\in [1 - (d-1)\gamma, 1 + (d-1)\gamma]$. Observing $a = Gc$,
    \[
        \frac{\norm{a}^2_2}{\lambda_{\max}(G)}\leq a^tG^{\dagger}a = c^tGG^{\dagger}Gc = c^t G c\leq 1,
    \]
    therefore,
    \[
        \norm{a}^2_2 \leq {1 + (d-1)\gamma}.
    \]
    Now note the number of detectable hypotheses by a single query is $\abs{\{i: \abs{\ip{q}{h_i}}\geq \tau\}} \leq \frac{1 + (d-1)\gamma}{\tau^2}$. Therefore the total number of detectable hypotheses is also bounded:
    \[
        \abs{\{i: \exists m \leq M, \abs{\ip{q_m}{h_i}}\geq \tau\}} \leq M\frac{1+(d-1)\gamma}{\tau^2}.
    \]
    Since we require eliminating $(d-1)$ hypotheses:
    \[
        M \geq \frac{\tau^2(d-1)}{1+(d-1)\gamma}.
    \]
    With fewer queries, a CSQ oracle that answers \(0\) identically leaves at least two hypotheses (w.l.o.g. $h_1, h_2$), indistinguishable so the learner produces the same $f$ under either target. For any $f\in L_2(p_x)$:
    \[
         \norm{h_1 - h_2}_{L_2} \leq \norm{h_1 - f}_{L_2} + \norm{h_2 - f}_{L_2} \leq 2 \max_i \norm{h_i - f}_{L_2}, 
    \]
    but, we also have:
    \[
         \norm{h_1 - h_2}^2_{L_2} \geq 2(1 - \gamma).   
    \]
    Therefore
    \[
               \max_i \mathcal{L}_2(f, h_i) \geq \frac{1-\gamma}{2}.
    \]
    Thus the learner cannot achieve loss lower than $\frac{1- \gamma}{2}$,
    independently of the queries proposed by the learner.
\end{proof}

\subsection{Intrinsic Complexity bounds from CSQ bounds}
The previous proof bounds the number of queries, we would instead like to bound the intrinsic complexity of the queries. This allows us to assess whether an algorithm uses the information it requests efficiently, even if it is inefficient in the way it requests that information perhaps as a result of some un-modelled constraint.

\begin{lemma}[Approximating Transcripts]\label{lemma:transcript-approx}
    Given a concept class $\mathcal{F}$, $d=\SQdim(\mathcal{F}, \gamma)$, and let \[
    \mathcal{T_{A}} = \{Q = (q_1,.., q_M): \text{$Q$ is a query transcript realised by $\mathcal{A}$} \},
    \]
    and suppose
    \[ \sup_{q\in\mathcal Q\subset \mathcal{T_A}} \norm{q}_{L_2(p_x)} \leq B_q. \]
    
    then any CSQ algorithm $\mathcal{A}$ learning to squared loss less than $\frac{1-\gamma}{2}$ must have:
    \[
        \max_{Q \in \mathcal{T_A}}R_{L_2}(Q, \epsilon)  \geq \frac{(d-1) (\tau - \epsilon)^2}{B_q^2[1 + (d-1)\gamma]}
    \]
    for $\epsilon <\tau$.
\end{lemma}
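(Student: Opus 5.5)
The argument adapts the query-counting lemma above, with the number of queries $M$ replaced by the effective width of the transcript. Suppose, for contradiction, that every realised transcript $Q\in\mathcal T_{\mathcal A}$ satisfies $R_{L_2}(Q,\epsilon)=R<\frac{(d-1)(\tau-\epsilon)^2}{B_q^2[1+(d-1)\gamma]}$. Fix the hard family $h_1,\dots,h_d$ witnessing $\SQdim(\mathcal F,\gamma)=d$. Run $\mathcal A$ against the oracle that answers $0$ to every query. This produces a single target-independent transcript $Q^0\in\mathcal T_{\mathcal A}$, with $R_{L_2}(Q^0,\epsilon)\le R$. So there is a subspace $V$ with $\dim V\le R$ such that every $q\in Q^0$ is within $\epsilon$ of some $v_q\in V$ in $L_2$.

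\textbf{Step 1 (reduce to a basis).} Choose an orthonormal basis $e_1,\dots,e_R$ of $V$. For each $q\in Q^0$, Cauchy--Schwarz gives $|\langle q,h_i\rangle|\le|\langle v_q,h_i\rangle|+\epsilon$, since $\|h_i\|_{L_2}=1$. Hence if the zero answer is invalid for $h_i$, i.e.\ $|\langle q,h_i\rangle|>\tau$, then $|\langle v_q,h_i\rangle|>\tau-\epsilon$.

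\textbf{Step 2 (count hypotheses detected by $V$).} Apply the Gershgorin bound from the previous lemma to the map $h\mapsto(\langle v,h_i\rangle)_i$ restricted to $V$. Writing $v_q=\sum_r c_re_r$ gives $\langle v_q,h_i\rangle^2\le\|c\|^2\sum_r\langle e_r,h_i\rangle^2$. Summing over $i$ and using $\sum_i\langle e_r,h_i\rangle^2\le 1+(d-1)\gamma$ for each unit $e_r$, the total ``detection mass'' of $V$ is at most $R[1+(d-1)\gamma]$.

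The main obstacle is controlling $\|c\|=\|v_q\|$ uniformly. Since $\|q\|\le B_q$ and $\|v_q-q\|\le\epsilon$, we may replace $v_q$ by the orthogonal projection of $q$ onto $V$, which is at least as close and has norm at most $B_q$. Then
\[
\sup_{v\in V,\ \|v\|\le B_q}\langle v,h_i\rangle^2 = B_q^2\,\|P_Vh_i\|^2 .
\]
So hypothesis $h_i$ can be detected only if $B_q^2\|P_Vh_i\|^2>(\tau-\epsilon)^2$. Because $\sum_i\|P_Vh_i\|^2=\sum_r\sum_i\langle e_r,h_i\rangle^2\le R[1+(d-1)\gamma]$, the number of detectable hypotheses is at most $R\,B_q^2[1+(d-1)\gamma]/(\tau-\epsilon)^2<d-1$.

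\textbf{Step 3 (indistinguishability).} By Step 2, at least two hypotheses $h_1,h_2$ satisfy $|\langle q,h_j\rangle|\le\tau$ for every $q\in Q^0$. The all-zero answer sequence is therefore a valid oracle response under both targets. The algorithm then follows the same transcript $Q^0$ and outputs the same $f$ in both cases. As in the previous lemma, $\|h_1-h_2\|^2\ge2(1-\gamma)$ forces $\max_j\mathcal L_2(f,h_j)\ge\frac{1-\gamma}{2}$, which is a contradiction.

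For adaptive algorithms, note that the zero-answer run fixes the queries independently of the target. This makes the argument non-adaptive, so the maximum over $\mathcal T_{\mathcal A}$ dominates the width of this particular transcript, giving the stated bound.
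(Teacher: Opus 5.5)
Your proposal is correct and follows essentially the same route as the paper. You project each query onto the $R$-dimensional approximating space, keeping the coefficient norm at most $B_q$. You bound the total detection mass $\sum_i \|P_V h_i\|^2 \le R[1+(d-1)\gamma]$ via the Gershgorin estimate, count the hypotheses whose response can exceed $\tau-\epsilon$, and finish with the all-zero oracle indistinguishability argument. Fixing the target-independent transcript $Q^0$ generated by the all-zero oracle is a useful clarification of the adaptive case, which the paper leaves implicit.
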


\begin{proof}
Let $\mathcal{H} = \{h_i\}_{i=1}^d \in \mathcal{F}$ satisfy the claim.
First suppose $\{v_1,..., v_{R}\}\subset L_2(p_x)$ are orthonormal, and let $v=\sum_r \alpha_r v_r$ then by Cauchy-Schwartz (in $\mathbb{R}^r$ not $L_2(p_x)$):
\[
    \abs{\ip{v}{h}} \leq \norm{\alpha}_2\sqrt{s_h}, \quad s_h\defeq \sum_{r}\ip{v_r}{h}^2.
\]    
Further,
\[
    \sum_{h\in\mathcal{H}}s_{h} = 
     \sum_r \norm{{a(v_r)}}_2^2, 
     \quad a(v_r)\defeq (\ip{v_r}{h_1}),\cdots \ip{v_r}{h_d}) \in \mathbb{R}^d. 
\]
applying the logic from the previous proof:
\[
    \sum_{h\in\mathcal{H}} s_h \leq R[1+ (d-1)\gamma]
\]
and so the number of hypotheses with large response is bounded:
\[
    \abs{\left\{h: \exists v \in V,\   \abs{\ip{v}{h}}\geq \tau \right\}}
    \leq \abs{\left\{h: s_h \geq \frac{\tau^2}{\norm{\alpha}_2^2}\right\}} \leq \frac{R[1 + (d-1)\gamma] \norm{\alpha}^2_2}{\tau^2}. 
\]
Now suppose $d_{R}(\mathcal{Q}, L_2)\leq\epsilon$ then by assumption every $q\in\mathcal{Q}$ satisfies $\norm{q}_{L_2}^2 \leq B_q^2$ and we have the orthogonal decomposition:
\[
    q = v + e, \quad v \in V, \quad e \in V^{\perp}, \quad \norm{e}_{L_2} \leq \epsilon. 
\]
Then by Pythagoras,
\[
    \norm{\alpha}_2^2 = \norm{v}_{L_2}^2 =  \norm{q}_{L_2}^2 - \norm{e}_{L_2}^2 \leq B_q^2,
\]
and by CS,
\[
    \abs{\ip{q}{h}} \geq \tau \implies \abs{\ip{v}{h}}\geq \tau - \epsilon,
\]
and we have shown the maximum number of hypotheses that can satisfy this is:
\begin{align*}
    \abs{\left\{h: \exists q \in \mathcal{Q} \ \text{s.t.} \ \abs{\ip{q}{h}}     \geq \tau\right\}} 
    \leq &\abs{\left\{h: \exists v \in V,\ \norm{v}_{L_2} \leq B_q, \  \abs{\ip{v}{h}}\geq \tau - \epsilon \right\}} \\ 
    \leq &\frac{RB_q^2[1 + (d-1)\gamma]}{(\tau-\epsilon)^2}. 
\end{align*}
again, if the quantity on the left is less than $d-1$ then a CSQ oracle that answers \(0\) identically leaves at least two hypotheses indistinguishable which by the previous argument result in a loss above the tolerance. Therefore we must have:
\[
    R \geq \frac{(d-1)(\tau - \epsilon)^2}{B_q^2[1+ (d-1)\gamma]},
\]
provided $\epsilon < \tau$.
\end{proof}

\begin{lemma}[Initial-width obstruction for zero-respecting learners]
\label{lemma:initial-width-obstruction}
Let \( d=\SQdim(\mathcal F,\gamma)\)
and let \(\mathcal Z^0\) denote the initial set of neuron activities and suppose
\[
    B_q \geq \sup_{\substack{ \bm w\in\mathcal W\\ i\in[D] }}\norm{x_i\sigma'(\bm{w}\cdot\bm{x})}_{L_2(p_x)}.
\]
Consider a learner whose first-layer weights evolve according to
\[
    W^{t+1}
    =
    W^t+\phi(W^t,\widehat G^t),
\]
where the update rule is zero-respecting:
\[
    \phi(W,0_{P\times D})
    =
    0_{P\times D}
    \qquad
    \text{for every }W.
\]
If the learner achieves squared loss below $\frac{1-\gamma}{2}$ uniformly over \(\mathcal F\), then, for
 \(0\leq\epsilon<\tau\),
\[
    R_{\mathcal B_z}
    \left(
        \mathcal Z_0,
        \frac{\epsilon^2}{C_{tr}^2D^{1/2}}
    \right)
    \geq
    \frac{
        (d-1)(\tau-\epsilon)^2
    }{
        C_qB_q^2D^2[1+(d-1)\gamma]
    }.
\]
\end{lemma}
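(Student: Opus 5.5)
The argument is a contrapositive: assume the initial effective width is small, run the learner on an all-zero transcript, and use \cref{lemma:transcript-approx} together with the width transfer to show that two targets remain indistinguishable. Set $\delta\defeq \epsilon^2/(C_{tr}^2D^{1/2})$ and suppose $n\defeq R_{\mathcal B_z}(\mathcal Z_0,\delta)$, so that $d_n(\mathcal Z_0,\mathcal B_z)\leq\delta$.

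\textbf{Step 1: the zero transcript freezes the queries.} Consider the adversarial oracle that answers $0$ to every coordinate query for as long as this is consistent with tolerance $\tau$. While the answers are all zero, zero-respecting gives $\phi(W^t,0_{P\times D})=0$, so $W^{t}=W^0$ for all $t$. The realised transcript is then the single static family $\mathcal Q(W^0)=\{x_i\sigma'(\bm w_p^0\cdot\bm x)\}$, repeated indefinitely. Any such repetition lies in the same approximating subspace, so the effective width of the whole transcript equals that of $\mathcal Q(W^0)$.

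\textbf{Step 2: bound the query width.} Apply \cref{prop:transfer} to $\mathcal Z_0$:
\[
d_{D^2n+D}(\mathcal Q(W^0),L_2)\leq C_{tr}D^{1/4}\delta^{1/2}=\epsilon.
\]
Hence $R_{L_2}(\mathcal Q(W^0),\epsilon)\leq D^2n+D$.

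\textbf{Step 3: invoke the transcript lemma.} \cref{lemma:transcript-approx} says that if this width is below $(d-1)(\tau-\epsilon)^2/(B_q^2[1+(d-1)\gamma])$, at least two hypotheses $h_1,h_2$ satisfy $|\langle q,h_j\rangle|<\tau$ for every $q\in\mathcal Q(W^0)$. For these two targets the all-zero answer is a legal oracle response at every round, so the zero transcript can indeed persist, which justifies Step 1. Two points need care here.

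\emph{Scaling of the queries.} The lemma is stated for CSQ queries bounded by $1$, whereas here the queries have $L_2$ norm at most $B_q$ and are answered to tolerance $\tau$ unnormalised. The lemma's proof only uses $\|q\|_{L_2}\leq B_q$, so it applies directly.

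\emph{Loss lower bound.} The learner's output depends only on the transcript and on target-independent state, so it produces the same $f$ under $h_1$ and $h_2$. The earlier SQ argument then gives loss at least $(1-\gamma)/2$ for one of them, contradicting success.

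\textbf{Step 4: conclude.} Success therefore forces
\[
D^2n+D\geq \frac{(d-1)(\tau-\epsilon)^2}{B_q^2[1+(d-1)\gamma]}.
\]
If $n\geq1$, then $D^2n+D\leq 2D^2n$, so $n\geq \frac{(d-1)(\tau-\epsilon)^2}{2B_q^2D^2[1+(d-1)\gamma]}$, which is the claim with $C_q=2$. Equivalently, this is \cref{lemma:inv-transfer} with $a\approx D^2$.

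The additive $D$ is the main obstacle when $n=0$. In that case $\mathcal Z_0$ is within $\delta$ of $\{0\}$, and the transfer still gives $D$-dimensional query width $\epsilon$ from the small-neuron span $\{x_i\sigma'(0)\}$. The right-hand side of the claim would then need to be below $1$, which is not automatic. I would handle this by assuming $n\geq1$ and arguing that any nontrivial initialisation, i.e.\ $\mathcal Z_0$ not $\delta$-close to $0$, has $n\geq1$. Alternatively, the constant $C_q$ can absorb the $D$ term, since $D^2n+D\leq D^2(n+1)$ and $n+1\leq 2\max(n,1)$; with the convention $R\geq1$ this again gives $C_q=2$.
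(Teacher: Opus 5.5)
Your proposal is correct and follows essentially the same route as the paper. Both argue by contrapositive: the forward--backward transfer gives a $(D^2n+D)$-dimensional $\epsilon$-approximation of $\mathcal Q(W^0)$, the counting argument of \cref{lemma:transcript-approx} leaves two witness hypotheses compatible with the all-zero answer, and zero-respecting updates freeze that transcript indefinitely.

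Your explicit bookkeeping of the additive $D$ is more careful than the paper's proof, which passes directly from the assumed bound on $R_{\mathcal B_z}$ to a query width below $(d-1)(\tau-\epsilon)^2/(B_q^2[1+(d-1)\gamma])$ without accounting for it. If $R=0$ is permitted, the statement genuinely fails: a zero-initialised $\tanh$ network learns $\mathcal F=\{x_1,\dots,x_D\}$ in one round while $\mathcal Z_0=\{0\}$. So your convention $R\geq 1$ with $C_q=2$ is needed rather than cosmetic. One small correction: in the literal $R=0$ case the claim would require the right-hand side to be at most $0$, not merely below $1$.
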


\begin{proof}
We prove the contrapositive. Suppose that
\[
    R_{\mathcal B_z}
    \left(
        \mathcal Z_0,
        \frac{\epsilon^2}{C_{tr}^2D^{1/2}}
    \right)
    <
    \frac{
        (d-1)(\tau-\epsilon)^2
    }{
        C_qB_q^2D^2[1+(d-1)\gamma]
    }.
\]

By the width-transfer result, the corresponding initial tangent-query
family \(\mathcal Q^0\) has intrinsic dimension strictly below
\[
    R_{L_2(p_x)}
    \left(
        \mathcal{Q}_0,
        \epsilon
    \right)
    < \frac{
        (d-1)(\tau-\epsilon)^2
    }{
        B_q^2[1+(d-1)\gamma]
    }.
\]
The SQ-dimension argument therefore implies that fewer than \(d-1\)
concepts in an SQ-witness family
\[
    \mathcal H=\{h_1,\ldots,h_d\}\subseteq\mathcal F
\]
can produce a query response whose magnitude exceeds \(\tau\).
Consequently, there exist at least two distinct targets
\(h_i,h_j\in\mathcal H\) such that, for every query
\(q\in\mathcal Q_0\),
\[
    \abs{\ip{q}{h_i}}\leq\tau
    \qquad\text{and}\qquad
    \abs{\ip{q}{h_j}}\leq\tau.
\]

For either target, the coordinatewise tolerance-\(\tau\) oracle may
therefore return the all-zero feedback matrix:
\(
    \widehat G^0=0_{P\times D}.
\)
Since the learner is zero-respecting,
\[
    W^1
    =
    W^0+\phi(W^0,0_{P\times D})
    =
    W^0.
\]

The representation and query family are unchanged.
Thus, under either target, the oracle may return zero again at the
next round. Inductively,
\[
    W^t=W^0
    \qquad\text{and}\qquad
    \widehat G^t=0_{P\times D}
    \qquad
    \text{for every }t.
\]

Hence the complete oracle transcript is identical under the two
distinct targets \(h_i\) and \(h_j\). A target-independent learner must
therefore produce the same output under both targets and cannot
identify both correctly. Thus it cannot identify every target in
\(\mathcal F\), proving the contrapositive.
\end{proof}

\begin{corollary} If additionally \cref{ass:boundedl1-weight} holds, and
\[
    \abs{\ip{q_{\bm w, i}}{h}}=\abs{\ip{q_{\abs{\bm w},i}}{h}}, \quad \forall \bm w\in \mathcal W, \ \forall h \in \mathcal{F}
\]
then any zero-respecting learner achieving squared loss below $\frac{1-\gamma}{2}$ uniformly over \(\mathcal F\), has
\[
    R_{\mathcal B_z}
    \left(
        \mathcal Z_0^{+},
        \frac{\epsilon^2}{(C_{tr}^{+})^2D^{1/2}}
    \right)
    \geq
    \frac{
        (d-1)(\tau-\epsilon)^2
    }{
        C_q^{+}B_q^2D[1+(d-1)\gamma]
    }.
\]    
\end{corollary}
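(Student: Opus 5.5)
We prove the corollary by rerunning the contrapositive argument of \cref{lemma:initial-width-obstruction}, with two changes. The queries actually issued are replaced by their unsigned counterparts, and the signed transfer of \cref{prop:transfer} is replaced by the unsigned corollary, which has \(D\)-inflation.

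\emph{Reducing to unsigned queries.} Let \(\mathcal Q_0=\{q_{\bm w,i}\}\) be the initial tangent-query family and \(\mathcal Q_0^+=\{q_{\abs{\bm w},i}\}\) its unsigned version. By the symmetry hypothesis, for every \(h\in\mathcal F\), \(\bm w\in\mathcal W\) and \(i\in[D]\),
\[
\abs{\ip{q_{\bm w,i}}{h}}\le\tau \iff \abs{\ip{q_{\abs{\bm w},i}}{h}}\le\tau .
\]
So the set of targets for which the all-zero feedback matrix is a legal oracle answer is the same whether we use \(\mathcal Q_0\) or \(\mathcal Q_0^+\). It therefore suffices to show that fewer than \(d-1\) witness concepts can have some \(q\in\mathcal Q_0^+\) with \(\abs{\ip{q}{h}}>\tau\). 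The norm bound \(B_q\) carries over, because \(\abs{\bm w}\) stays in the \(\ell_1\) ball and \(B_\sigma C_x\) bounds every tangent feature regardless of sign.

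\emph{Width transfer.} Now suppose, contrapositively, that
\[
R_{\mathcal B_z}\left(\mathcal Z_0^+,\frac{\epsilon^2}{(C_{tr}^+)^2D^{1/2}}\right)
< \frac{(d-1)(\tau-\epsilon)^2}{C_q^+B_q^2D[1+(d-1)\gamma]} .
\]
\(\mathcal Z_0^+\) is generated by nonnegative weights \(\abs{\bm w}\) with \(\norm{\abs{\bm w}}_1\le C_1\). The unsigned transfer corollary (built from \cref{lemma:unsigned-transfer}, \cref{lemma:small} and \cref{lemma:combine}) therefore gives
\[
d_{Dn+D}(\mathcal Q_0^+,L_2)\le C_{tr}^+D^{1/4}\,d_n(\mathcal Z_0^+,\mathcal B_z)^{1/2}.
\]
We then apply \cref{lemma:inv-transfer} with \(a\) of order \(D\), absorbing the additive \(+D\) into \(C_q^+\): use \(Dn+D\le 2Dn\) for \(n\ge1\), and note that the bound is trivial when \(n=0\). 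This yields
\[
R_{L_2}(\mathcal Q_0^+,\epsilon)<\frac{(d-1)(\tau-\epsilon)^2}{B_q^2[1+(d-1)\gamma]} .
\]

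\emph{Freezing the learner.} With this bound, \cref{lemma:transcript-approx} shows that at least two witness targets \(h_i,h_j\) have \(\abs{\ip{q}{h}}\le\tau\) for every \(q\in\mathcal Q_0^+\), and hence, by the symmetry equivalence, for every \(q\in\mathcal Q_0\). The oracle may then return \(0_{P\times D}\). Because the learner is zero-respecting, \(W^1=W^0\), and by induction the transcript is identically zero under both targets. The learner's output is therefore the same for \(h_i\) and \(h_j\), and the loss bound \(\tfrac{1-\gamma}{2}\) from the CSQ lemma gives the contradiction. Substituting \(\delta=\epsilon^2/((C_{tr}^+)^2D^{1/2})\), i.e.\ \(\epsilon=C_{tr}^+D^{1/4}\sqrt\delta\), gives the stated range of \(\delta\).

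\emph{Main obstacle.} The delicate step is the transfer. \cref{lemma:unsigned-transfer} uses \(\Delta=\sum_i\partial_{x_i}\) and \(s_0=\sum_i w_{0,i}\ge\alpha\), which requires nonnegative weights; this is why we must pass to \(\abs{\bm w}\) before transferring, rather than transferring the signed family. We also need the large/small split to be valid for \(\abs{\bm w}\): the large part needs \(\norm{\abs{\bm w}}_\infty\ge\alpha\), so that \(s_0\ge\alpha\), and the small part needs \(\norm{\abs{\bm w}}_\infty<\alpha\), which \cref{lemma:small} handles since it ignores signs. Finally, we must check that the factor-\(D\) bookkeeping, with \(\dim\Delta V\le\dim V\) and the union over \(D\) coordinates, gives \(Dn+D\) rather than \(D^2n\).
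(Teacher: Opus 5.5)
Your route is the paper's. The sign-symmetry hypothesis makes the set of targets compatible with an all-zero transcript the same for $\mathcal Q_0$ and $\mathcal Q_0^+$, and one then reruns \cref{lemma:initial-width-obstruction} with the unsigned, $D$-inflation transfer. The paper's proof is exactly this two-sentence reduction, and your symmetry and freezing steps are sound.

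\textbf{The gap: absorbing the additive $+D$.} This is the step you spell out beyond the paper, and it is where the argument breaks. Put $m\defeq R_{\mathcal B_z}(\mathcal Z_0^+,\delta)$ with $\delta=\epsilon^2/((C_{tr}^+)^2D^{1/2})$. The transfer gives $R_{L_2}(\mathcal Q_0^+,\epsilon)\le D(m+1)$, so the counting in \cref{lemma:transcript-approx} yields only
\[
    m+1\ \geq\ \frac{(d-1)(\tau-\epsilon)^2}{B_q^2D[1+(d-1)\gamma]}.
\]
For $m\ge1$ this gives the stated bound with $C_q^+=2$. The case $m=0$ is not trivial, however. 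There the claimed inequality reads $0\ge$ (a positive number), i.e.\ it asserts that no learner succeeds. The transfer only gives $R_{L_2}(\mathcal Q_0^+,\epsilon)\le D$, and the hypothesis of \cref{lemma:inv-transfer} at $n=0$ is not available.

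\textbf{The claim is actually false at $m=0$.} On the uniform cube take $\sigma=\tanh$ and $\mathcal F=\{x_1,x_2\}$. Then $d=2$, $\gamma=0$, and the symmetry hypothesis holds, as it does for every Walsh target via $\bm x\mapsto\operatorname{sign}(\bm w)\odot\bm x$. A single neuron with $\bm w_0=0$ has $\mathcal Z_0^+=\{0\}$, hence width $0$ at every resolution. Yet its queries $x_i\tanh'(0)=x_i$ identify the target whenever $\tau<1/2$. Since that feedback is nonzero, a zero-respecting update can then move $\bm w$ to $c\,\bm{e}_j$ and fit $x_j$ exactly.

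\textbf{What the argument does prove.} It gives the bound only with $\max\{R_{\mathcal B_z}(\mathcal Z_0^+,\delta),1\}$ on the left, or equivalently with $n\ge1$ in the definition of effective width. The paper's proof inherits the same silent absorption from \cref{lemma:initial-width-obstruction} (``by the width-transfer result''). So this is a defect of the statement, not only of your write-up.

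For $k$-sparse parity with $k\ge2$ you can close the $m=0$ case by opening the transfer. With $V=\{0\}$, every query lies within $\epsilon$ of $\operatorname{span}\{x_i\sigma'(0)\}$ (\cref{lemma:small}), which is orthogonal to every $\chi_S$. So all responses are at most $\epsilon<\tau$ and the learner stays frozen.
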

\begin{proof}
    By the symmetry assumption, a target $h$ is compatible with the all zero-response for the signed queries $\{q_{\bm w,i}\}$ if and only if it is compatible with queries $\{q_{\abs{\bm{w}},i}\}$. Thus we can complete the analysis with the symmetrised neurons and use the tighter transfer (Lemma~\ref{lemma:unsigned-transfer}).
\end{proof}

\newpage 

\section{Upper Bounds}

\subsection{Forward Approximation}
Here we demonstrate the existence of low width neural networks to approximate sparse parities.

We first recall a simple construction when the dictionary elements are threshold functions. Let:
\[
    \threshold{x} = \begin{cases}
        1, &x\geq 0 \\
        -1,  &x < 0
    \end{cases}
\]
\begin{lemma} [Threshold Construction of k-Parity]
    \label{lemma:k-parity-construction}
    There exists an activity set,
    \[
        \mathcal{Z}=\{\threshold{\bm{w}_p \cdot \bm{x} + b_p}: p \in [k+1]\}
    \]
    and second layer weights $\bm{a} \in \mathbb{R}^{k+1}$ with $\norm{a}_2 = \sqrt{k+1}$
\end{lemma}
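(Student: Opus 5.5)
The plan is to build the parity as a staircase in the single statistic $s(\bm x)\defeq\sum_{i\in S}x_i$, using threshold neurons placed between its consecutive values. I read the (truncated) claim as: for every $S$ with $\abs S=k$ there are $\bm w_p,b_p,\bm a$ with $\sum_{p}a_p\,\threshold{\bm w_p\cdot\bm x+b_p}=\chi_S(\bm x)$ for all $\bm x\in\{\pm1\}^D$. The construction takes every first-layer weight to be $\bm w_p=\bm 1_S$, the indicator vector of $S$.

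\textbf{Step 1: reduce to a univariate problem.} The statistic $s$ takes the $k+1$ values $s_j=-k+2j$ for $j=0,\dots,k$, where $j$ is the number of $+1$ coordinates in $S$. The number of $-1$ coordinates is then $k-j$, so
\[
    \chi_S(\bm x)=(-1)^{k-j}=:g(s_j).
\]
It therefore suffices to represent the function $g$ on the $k+1$ points $s_0<\dots<s_k$, which alternates in sign between consecutive points.

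\textbf{Step 2: place the jumps.} For $p=1,\dots,k$, put a threshold neuron at the midpoint $c_p=-k+2p-1$, namely $\threshold{s-c_p}$, with bias $b_p=-c_p$. Because $c_p$ lies strictly between $s_{p-1}$ and $s_p$, the boundary convention of $\sigma^{\pm}$ at $0$ never matters. Crossing $c_p$ raises this neuron from $-1$ to $+1$, a jump of $2a_p$, and leaves the others unchanged. We need
\[
    2a_p=g(s_p)-g(s_{p-1})=2(-1)^{k-p},
\]
so $a_p=(-1)^{k-p}$. The staircase then has the correct increments; only its offset remains to be fixed.

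\textbf{Step 3: fix the offset and count the norm.} Add the $(k+1)$-st neuron as a constant, for instance $\threshold{0\cdot\bm x+1}\equiv1$, with coefficient $a_0$. At $s=s_0$ every threshold neuron outputs $-1$, so matching $g(s_0)=(-1)^k$ requires
\[
    a_0=(-1)^k+\sum_{p=1}^k(-1)^{k-p}.
\]
The sum equals $0$ when $k$ is even and $1$ when $k$ is odd, which gives $a_0=1$ when $k$ is even and $a_0=0$ when $k$ is odd. Every nonzero coefficient is $\pm1$, so for even $k$ we get $\norm{\bm a}_2=\sqrt{k+1}$ exactly.

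\textbf{Main obstacle: odd $k$.} The difficulty is the parity of $k$ in the norm bookkeeping: the naive construction gives $\norm{\bm a}_2=\sqrt k\le\sqrt{k+1}$, which suffices if the lemma is read as a bound. If equality is genuinely intended, I would rebalance the offset as follows. Split one unit of offset between the constant neuron and a threshold placed outside the range, for instance $\threshold{s+k+1}\equiv1$, taking coefficients $+1$ and $-1$ so that they cancel. This keeps the realised function unchanged and restores $k+1$ coefficients of modulus one, at the cost of one extra neuron. Alternatively, take $a_0=\pm1$ and shift one jump coefficient, checking that the two choices are consistent. Everything else is a direct verification on the $k+1$ values of $s$.
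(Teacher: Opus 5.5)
Your construction is the paper's: $\bm w_p=\bm 1_S$, midpoint biases $b_p=k+1-2p$ for $p\le k$, jump coefficients $a_p=(-1)^{k-p}$, and one constant neuron for the offset. The paper realises that neuron as $\threshold{\bm 1_S\cdot\bm x-(k+1)}\equiv-1$ with coefficient $-((-1)^k+1)/2$, and it likewise concludes only $\norm{\bm a}_2\le\sqrt{k+1}$, with equality exactly when $k$ is even.

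Drop your odd-$k$ patches, though. The cancelling $\pm1$ pair gives $k+2$ neurons with $\norm{\bm a}_2=\sqrt{k+2}$. Resetting $a_0=\pm1$ is also impossible, because the $(k+1)\times(k+1)$ staircase evaluation matrix on this neuron set is invertible, so the coefficients are forced.
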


\begin{proof}
    First define $t=\sum_{i\in S} x_i$, and note that $\chi_S(x)=(-1)^{(k-t)/2}$, where $t \in \{-k, -k + 2, \dots, k \}$. So let 
    \[
    t_r = -k + 2r,  \quad r \in \{0,\dots,k\},
    \]
    enumerate the grid. 
    
    Now we choose $\bm{w}_p = \bm{1}_S$ to compute the sum, and the biases
    \[
        b_p = k+1-2p, \quad p \in \{1,\dots,k+1\},
    \]
    Therefore the interpolating matrix is $B_{rp} \defeq \threshold{t_r + b_p}$. Which is given by,
    \[ 
    B_{rp} = \begin{cases} 1, & p\leq r,\\ -1, & p>r. \end{cases}
    \]
    Suppose $y=Ba$ then we have:
    \[
        a_r = \frac{y_{r} - y_{r-1}}{2} \ \text{for} \ r\in \{1,...,k\}, \qquad a_{k+1} = -\frac{y_0 + y_k}{2}.
    \]
    For parity $y_{r}=(-1)^{k-r}$ so $\abs{a_r}\leq 1$.
    We therefore have $\norm{a}_2 \leq \sqrt{k+1}$.    
\end{proof}

We now use a perturbation technique to show the existence of a well conditioned basis of finite weight $\text{tanh}$ functions.

\begin{lemma}
    There exists an activity set,
    \[
        \mathcal{Z}=\{\tanh(\bm{w}_p \cdot \bm{x} + b_p): p \in [k+1]\}
    \]
    and second layer weights $\bm{a} \in \mathbb{R}^{k+1}$ with $\norm{a}_2 = O(\sqrt{k+1})$
\end{lemma}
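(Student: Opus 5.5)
We perturb the threshold construction of \cref{lemma:k-parity-construction}. Take the same weights \(\bm w_p=\beta\bm 1_S\) and biases \(\beta b_p\) with \(b_p=k+1-2p\), and define the \(\tanh\) interpolation matrix
\[
    B^{(\beta)}_{rp}\defeq\tanh\bigl(\beta(t_r+b_p)\bigr),\qquad r\in\{0,\dots,k\},\ p\in[k+1].
\]
Since \(\chi_S(\bm x)\) depends on \(\bm x\) only through \(t=\sum_{i\in S}x_i\in\{t_0,\dots,t_k\}\), exact representation \(\chi_S=\sum_p a_p\tanh(\beta\,\bm 1_S\cdot\bm x+\beta b_p)\) is equivalent to the finite linear system \(B^{(\beta)}\bm a=\bm y\) with \(y_r=(-1)^{k-r}\).

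The key observation is that the biases are odd integers while every \(t_r\) has the parity of \(k\). Hence \(t_r+b_p\) is never zero; indeed \(\abs{t_r+b_p}\geq1\). Therefore
\[
    \abs{B^{(\beta)}_{rp}-B_{rp}}\leq 1-\tanh(\beta)\leq 2e^{-2\beta}.
\]
It follows that \(\norm{B^{(\beta)}-B}_{\mathrm{op}}\leq 2(k+1)e^{-2\beta}\), using the Frobenius norm as the bound.

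Next we control the conditioning of the threshold matrix \(B\), which is lower-triangular-like: \(1\) for \(p\le r\), \(-1\) otherwise. The explicit inverse read off in the proof of \cref{lemma:k-parity-construction} is a difference operator plus a rank-one correction with entries of size at most \(1/2\). This gives \(\norm{B^{-1}}_{\mathrm{op}}\leq C\) for an absolute constant \(C\), or at worst \(O(\sqrt{k+1})\) via the Frobenius norm. The threshold solution \(\bm a^{0}=B^{-1}\bm y\) already satisfies \(\norm{\bm a^0}_2\le\sqrt{k+1}\).

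Now choose \(\beta\) large enough that \(\norm{B^{-1}}_{\mathrm{op}}\norm{B^{(\beta)}-B}_{\mathrm{op}}\leq\tfrac12\); it suffices to take \(\beta=O(\log k)\). A Neumann-series argument then shows that \(B^{(\beta)}\) is invertible with \(\norm{(B^{(\beta)})^{-1}}_{\mathrm{op}}\leq 2\norm{B^{-1}}_{\mathrm{op}}\). It also gives
\[
    \norm{\bm a^{(\beta)}-\bm a^0}_2\leq 2\norm{B^{-1}}^2_{\mathrm{op}}\norm{B^{(\beta)}-B}_{\mathrm{op}}\norm{\bm y}_2,
\]
which is \(O(\sqrt{k+1})\) and can be made small for larger \(\beta\). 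Hence \(\norm{\bm a^{(\beta)}}_2=O(\sqrt{k+1})\), as claimed. The weights have finite size \(\beta\), so the construction lies in a compact dictionary, and its \(\ell_1\)-norm \(\beta k\) grows only logarithmically in \(k\).

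The main obstacle is a clean bound on \(\norm{B^{-1}}_{\mathrm{op}}\) that does not blow up with \(k\). If the explicit inverse is awkward to use directly, I would bound \(\norm{B^{-1}}_{\mathrm{op}}\le\norm{B^{-1}}_F=O(\sqrt k)\) from its \(O(k)\) nonzero entries of size at most \(1/2\)–\(1\). This only requires \(\beta=O(\log k)\) to absorb the perturbation, and it still yields \(\norm{\bm a}_2=O(\sqrt{k+1})\) via the triangle inequality with \(\bm a^0\).
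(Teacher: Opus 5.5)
Your argument is correct and is essentially the paper's: both sharpen the $\tanh$ units toward the threshold matrix $B$ and control the inverse of the perturbed interpolation matrix (you via a Neumann series, the paper via Weyl's inequality). The explicit inverse you cite has exactly two entries $\pm\frac12$ in each row and column, so $\norm{B^{-1}}_{\mathrm{op}}\le 1$; this is the bound $\sigma_{\min}(B)\ge 1$ that the paper uses without proof. The paper differs in using the decay $\abs{E_{rp}}\le 2e^{-2\lambda\abs{1+2(r-p)}}$ to get a $k$-independent row/column-sum bound, so a constant scale $\lambda$ suffices, whereas your uniform Frobenius bound forces $\beta\sim\log k$, which is harmless here. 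Two small slips: $b_p$ is odd only when $k$ is even, and the correct reason for $\abs{t_r+b_p}\ge 1$ is $t_r+b_p=2(r-p)+1$; also the weight $\ell_1$-norm $\beta k$ grows like $k\log k$, not logarithmically in $k$.
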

\begin{proof}
    First note for $u \geq 0$,
    \[
        0 \leq 1 - \tanh(\lambda u ) = \frac{2}{e^{2\lambda u} + 1} \leq 2e^{-2\lambda u} 
    \]

    Consider $A_{rp}(\lambda) = \tanh(\lambda(t_r + b_p))$ and consider for $r > p$:
    \[
        E_{rp} = B_{rp} - A_{rp} = 1 - \tanh(\lambda(t_r + b_p)) \leq 2e^{-2 \lambda [1 + 2(r-p)]}
    \]
    and by symmetry: $\abs{E_{rp}} \leq 2e^{-2\lambda \abs{1 + 2(r-p)}}$. Therefore every column has,
    \[
        \sum_{p} \abs{E_{rp}} \leq 2\sum_{m \in \mathbb{Z}} e^{-2\lambda\abs{1+2m}} = \frac{4e^{-2\lambda}}{1 - e^{-4\lambda}}\defeq \delta_{\gamma}.
    \]
    So $\norm{E}_1 \leq \delta_{\gamma}$, applying Weyl's inequality:
    \[
        \sigma_{\min}(A) \geq \sigma_{\min}(B) - \norm{E}_2 \geq 1- \delta_{\gamma}.
    \]
    So for sparse parity we have $\norm{a}_2 \leq \frac{1}{1 - \delta_{\gamma}}\sqrt{k+1}$. 
\end{proof}

\begin{corollary}
    There exists a backwards dictionary,
    \[
        \mathcal{T}=\{\sech^2(\bm{w}_p \cdot \bm{x} + b_p): p \in [k+1]\}
    \]
    and second layer weights $\bm{a} \in \mathbb{R}^{k+1}$ with $\norm{a}_2 = O(\sqrt{k+1})$
\end{corollary}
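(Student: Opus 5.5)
The plan is to obtain the backwards dictionary by differentiating the forward construction of the preceding lemma, and to reuse its perturbation argument. I take the same weights \(\bm w_p=\lambda\bm 1_S\) and biases \(\lambda b_p\), so every feature depends on \(\bm x\) only through \(t=\sum_{i\in S}x_i\in\{t_0,\dots,t_k\}\). Representing a parity \(\chi_S\), or any target on the grid, then reduces to solving the \((k+1)\times(k+1)\) system \(\widetilde A\bm a=\bm y\), where \(\widetilde A_{rp}=\sech^2(\lambda(t_r+b_p))\) and \(y_r=(-1)^{k-r}\). The bound \(\norm{\bm a}_2\le\norm{\bm y}_2/\sigma_{\min}(\widetilde A)\) with \(\norm{\bm y}_2=\sqrt{k+1}\) gives the claim once \(\sigma_{\min}(\widetilde A)\) is bounded below by a constant.

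I will pick the biases so that \(\widetilde A\) is a perturbation of a well-conditioned matrix. Since \(\sech^2\) is a bump of height \(1\) at the origin that decays like \(4e^{-2\lambda\abs{u}}\), the right choice is \(b_p=-t_p\), i.e. \(b_p=k-2p\) for \(p\in\{0,\dots,k\}\), rather than the staggered thresholds of \cref{lemma:k-parity-construction}. This gives \(\widetilde A=I+E\) with \(E_{rp}=\sech^2(2\lambda(r-p))\) for \(r\ne p\). Then \(\abs{E_{rp}}\le 4e^{-4\lambda\abs{r-p}}\), so each row and column sum of \(\abs{E}\) is at most \(8e^{-4\lambda}/(1-e^{-4\lambda})=:\delta'_\lambda\). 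By \(\norm{E}_2\le\sqrt{\norm{E}_1\norm{E}_\infty}\) and Weyl's inequality, \(\sigma_{\min}(\widetilde A)\ge1-\delta'_\lambda\). Fixing a dimension-independent \(\lambda\) with \(\delta'_\lambda\le1/2\) yields \(\norm{\bm a}_2\le2\sqrt{k+1}=O(\sqrt{k+1})\).

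Alternatively, to stay literally with the previous lemma's dictionary, the approach is to differentiate: \(\partial_t\tanh(\lambda(t+b_p))=\lambda\sech^2(\lambda(t+b_p))\), mirroring \cref{lemma:commuting}. Differentiation on a discrete grid is not directly meaningful, though, so the direct diagonal-dominance argument above is the route I would write down.

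The main obstacle is this conditioning step. I must check the sign and magnitude bookkeeping in the off-diagonal decay, since the grid spacing is \(2\) and so the decay rate is \(4\lambda\). I must also ensure \(\lambda\) is chosen independently of \(k\) and \(D\), which holds because the geometric row-sum bound is uniform in \(k\). As a fallback, the columns are nonnegative and the matrix is symmetric Toeplitz, so its symbol can be bounded below as well.
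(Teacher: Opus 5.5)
Your proposal is correct and matches the paper's argument. The paper also centres the biases at the grid points, so that $A_{rp}=\sech^2(\lambda(t_r-t_p))$, and bounds the off-diagonal entries by $4e^{-4\lambda\abs{r-p}}$. It then sums the geometric series to get $\delta_\lambda=8e^{-4\lambda}/(1-e^{-4\lambda})$ and applies Weyl to conclude $\norm{\bm a}_2\le\sqrt{k+1}/(1-\delta_\lambda)$. Your bound $\norm{E}_2\le\sqrt{\norm{E}_1\norm{E}_\infty}$ is a slightly more explicit justification of the paper's step $\norm{A-I}_2\le\norm{A-I}_\infty$, which relies on the symmetry of $A$.
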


The same finite-grid interpolation argument applies to derivative features, except with convergence towards Kronecker deltas. 
Let \(t_r=-k+2r\), \(r=0,\ldots,k\), and define 
    \[ A_{rp}(\lambda) \defeq \sech^2 \left( \lambda(t_r-t_p) \right). 
\] 
Since \(A_{rr}(\lambda)=1\) and 
\[ 
    \sech^2(u) \leq 4e^{-2\abs{u}}, 
\] 
the off-diagonal entries satisfy 
\[ 
    \abs{A_{rp}(\lambda)} \leq 4e^{-4\lambda\abs{r-p}}, \qquad r\neq p. 
\] 
Consequently, 
\[ 
    \norm{ A(\lambda)-I }_2 \leq \norm{ A(\lambda)-I }_\infty \leq \frac{ 8e^{-4\lambda} }{ 1-e^{-4\lambda} } \defeq \delta_\lambda. 
\] 
Choosing \(\lambda\) so that \(\delta_\lambda<1\) gives \[ \sigma_{\min} \left( A(\lambda) \right) \geq 1-\delta_\lambda. \] Hence every function on the \(k+1\) possible values of \(\sum_{i\in S}x_i\), and in particular \(k\)-parity, is represented by \(k+1\) such derivative features with coefficient norm \[
    \norm{\bm a}_2 \leq \frac{\sqrt{k+1}}{1-\delta_\lambda} = O(\sqrt{k+1}).
\]

\begin{lemma} [Walsh coefficients of Sum Neuron]
    Let $m \geq 1$ be fixed, and $U\subset[D]$ with $\abs{U}=s$. Define,
    \[
        h_U(\bm{x}) \defeq \sech^2(\frac{1}{\sqrt{s}}\sum_{i \in U} x_i - b_m)
    \]
    For any $T\subset U$ with $\abs{T}=m$, and define
    \(
        \lambda_{m,s} \defeq \widehat{h_U}(T)=\ip{h_U}{\chi_{T}}.
    \)
    There exists $b_m$ and a constant $c_m \neq 0$ such that,
    \[
        \lambda_{m,s} = c_ms^{-m/2} + o(s^{-m/2}).
    \]
    In particular, for any $S\subset [D]$ with $\abs{S}=k$,
    \[
        \ip{x_d h_U}{\chi_S} = \begin{cases}
            c_{k-1}s^{-(k-1)/2} + o(s^{-(k-1)/2}), &d \in S, S \setminus \{d\} \subseteq U \\
            O(s^{-(k+1)/2}), &d \notin S, S \cup \{d\} \subseteq U \\
            0 & \text{otherwise}.
        \end{cases}
    \]
    
\end{lemma}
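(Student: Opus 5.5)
The plan is to reduce everything to a one-dimensional Hermite-type expansion of \(g(u)\defeq\sech^2(u-b_m)\), evaluated at the normalised sum \(u=\frac1{\sqrt s}\sum_{i\in U}x_i\). Since \(h_U\) depends on \(\bm x\) only through the coordinates in \(U\) and is symmetric in them, \(\widehat{h_U}(T)=0\) unless \(T\subseteq U\). For \(T\subseteq U\) with \(\abs T=m\), the coefficient depends only on \(m\) and \(s\). We then condition on the coordinates in \(T\). Write \(u=\frac{1}{\sqrt s}(\sigma_T+\sigma_{U\setminus T})\), where \(\sigma_T=\sum_{i\in T}x_i\). Then
\[
\lambda_{m,s}=\mathbb E\Big[\chi_T(\bm x)\,G_s\big(\tfrac{\sigma_T}{\sqrt s}\big)\Big],\qquad G_s(v)\defeq\mathbb E\big[g\big(v+\tfrac{1}{\sqrt s}\textstyle\sum_{i\in U\setminus T}x_i\big)\big].
\]
The function \(G_s\) is smooth and, by the CLT, converges with all derivatives, uniformly on compacts, to \(G(v)=\mathbb E[g(v+Z)]\) with \(Z\sim N(0,1)\). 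This follows from a local CLT/Edgeworth bound, or more simply from the fact that \(g\) is analytic with bounded derivatives.

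Next we Taylor expand \(G_s\) in \(v\) around \(0\) to order \(m\), with \(v=\sigma_T/\sqrt s=O(m/\sqrt s)\). The key identity is that \(\mathbb E[\chi_T\,\sigma_T^j]=0\) for \(j<m\), since each monomial in \(\sigma_T^j\) misses some coordinate of \(T\). For \(j=m\) we get \(\mathbb E[\chi_T\,\sigma_T^m]=m!\), since only the multilinear terms survive. Therefore
\[
\lambda_{m,s}=\frac{G_s^{(m)}(0)}{m!}\,m!\,s^{-m/2}+O\big(s^{-(m+1)/2}\big)=G^{(m)}(0)\,s^{-m/2}+o(s^{-m/2}).
\]
The remainder bound uses \(\sup\abs{G_s^{(m+1)}}\le\norm{g^{(m+1)}}_\infty\). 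The constant is \(c_m=G^{(m)}(0)=\mathbb E[g^{(m)}(Z)]=\mathbb E[\sech^2(Z-b_m)\,He_m(Z)]\) by Gaussian integration by parts. Parity cancellations mean the remainder is really \(O(s^{-(m+2)/2})\), but \(o\) suffices.

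\textbf{Main obstacle:} showing that \(b_m\) can be chosen so that \(c_m\neq0\). My plan is to view \(c_m(b)=\frac{d^m}{db^m}\,\mathbb E[\sech^2(Z-b)]\) up to sign, i.e. the \(m\)-th derivative of the Gaussian smoothing \(\Phi(b)\) of \(\sech^2\). The function \(\Phi\) is a nonconstant real-analytic function that tends to \(0\) at \(\pm\infty\), so it is not a polynomial, and hence \(\Phi^{(m)}\not\equiv0\). By analyticity, \(\Phi^{(m)}\) vanishes only on a discrete set, so any \(b_m\) outside that set works. An alternative route to the same conclusion is Fourier: the transform of \(\Phi\) is \(e^{-\xi^2/2}\,\widehat{\sech^2}(\xi)\), which is nonzero, so \((i\xi)^m\) times it is not identically zero.

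For the stated consequence, we use \(x_d\chi_S=\chi_{S\triangle\{d\}}\), so \(\ip{x_dh_U}{\chi_S}=\widehat{h_U}(S\triangle\{d\})\).
\begin{itemize}
\item If \(d\in S\), then \(S\triangle\{d\}=S\setminus\{d\}\) has size \(k-1\). This gives \(\lambda_{k-1,s}\) when it lies in \(U\), and \(0\) otherwise.
\item If \(d\notin S\), the set \(S\cup\{d\}\) has size \(k+1\). The same expansion at order \(k+1\), using the bias \(b_{k-1}\), gives \(O(s^{-(k+1)/2})\) whenever it lies in \(U\), and \(0\) otherwise.
\end{itemize}
The \(O\) bound holds regardless of whether the constant vanishes, because it comes only from the Taylor remainder controlled by \(\norm{g^{(k+1)}}_\infty\).
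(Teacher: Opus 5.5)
Your proposal is correct and follows essentially the same route as the paper: condition on the coordinates in $U\setminus T$, reduce $\lambda_{m,s}$ to $s^{-m/2}\,\mathbb E[g^{(m)}(R/\sqrt s)]$ plus a remainder, pass to the Gaussian limit by the CLT (boundedness and continuity of $g^{(m)}$ suffice, so no local CLT is needed), and choose $b_m$ because the Gaussian smoothing of $\sech^2$ is not a polynomial. The only difference is how the $m$-th derivative is extracted. The paper uses the exact central-difference identity $\lambda_{m,s}=\mathbb E_R[D^m_{s^{-1/2}}f(R/\sqrt s)]$, which gives an $O(s^{-(m+2)/2})$ remainder, whereas you Taylor-expand and use $\mathbb E[\chi_T\sigma_T^j]=0$ for $j<m$ and $m!$ for $j=m$; this gives an $O(s^{-(m+1)/2})$ remainder, which is equally sufficient.
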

\begin{proof}
    Let $R \defeq \sum_{i \in U \setminus T} x_i$ and let $f(u)=\sech^2(u -b)$. Conditioning on coordinates outside of $T$:
    \[
        \lambda_{m,s} = \mathbb{E}_R \mathbb{E}_{\bm{x}_T}\left[ \chi_T(\bm{x}) f\left(\frac{R + \sum_{i\in T}x_i}{\sqrt{s}}\right) \right]
    \]
    Let the central difference operator be:
    \[
        D_h f(u) \defeq \frac{f(u + h) - f(u-h)}{2}
    \]
    which can be expressed using a single Rademacher variable $\epsilon$, $D_h f(u) = \mathbb{E}_{\epsilon}[\epsilon f(u + \epsilon h)]$, applying this identity successively with $h=s^{-1/2}$ yields:
    \[
        \lambda_{m,s} = \mathbb{E}_R\left[D^m_{s^{-1/2}}f \left(\frac{R}{\sqrt{s}} \right)\right].
    \]
    Since $f$ has bounded derivatives at all orders,
    \[
        D^m_h f(u) = h^m f^{(m)}(u) + O(h^{m+2}),
    \]
    uniformly in $u$. Hence,
    \[
        \lambda_{m,s} = s^{-m/2}\mathbb{E}_R\left[f^{(m)}\left(\frac{R}{\sqrt{s}}\right)\right] + O(s^{-(m+2)/2}).
    \]
    Now $R$ is the sum of $s-m$ independent Rademacher variables, so
    \[
        \frac{R}{\sqrt{s}} \rightarrow^{d} Z \sim N(0, 1)
    \]
    because $f^{(m)}$ is bounded and continuous,
    \[
        \mathbb{E}_R\left[f^{(m)}\left(\frac{R}{\sqrt{s}}\right)\right] \rightarrow \mathbb{E}_Z\left[f^{(m)}(Z)\right] = \mathbb{E}[H_m(Z)\sech^2(Z - b)] \defeq c_{m}(b)
    \]
    where $H_m$ is the probabilists' Hermite polynomial.
    
    It remains to choose \(b\) such that \(c_m(b)\neq 0\). Define 
    \[ F(b) \defeq \mathbb{E}\left[\sech^2(Z-b)\right]. \]
    Differentiation under the expectation gives 
    \[ F^{(m)}(b) = (-1)^m c_m(b). \]
    The function \(F\) is positive and tends to zero as \(\abs{b}\rightarrow\infty\). In particular, \(F\) is not a polynomial of degree at most \(m-1\), and hence \(F^{(m)}\) cannot vanish identically. It follows that there exists \(b_m\in\mathbb{R}\) such that \[ c_m(b_m)\neq 0. \]
\end{proof}

\subsubsection{Covering}
The required pool family is a classical \((D,s,k-1)\)-covering design \citep{gordonCoverings2006}. Denoting its minimum size by the covering number \(C(D,s,k-1)\), we have \[ C(D,s,k-1) \geq \frac{\binom D{k-1}}{\binom s{k-1}}, \] while standard covering-design bounds give  
\[ 
    C(D,s,k-1) \leq \frac{\binom D{k-1}}{\binom s{k-1}} \left( 1+\log\binom s{k-1} \right).
\]
In our regime, \(k\) is fixed and \(s=\Theta(D^\alpha)\), so
\(
    \log\binom s{k-1}
    =
    \Theta_k(\log D).
\)
The sharper covering-design bound therefore does not remove the
logarithmic factor from our asymptotic width estimate. We retain the
elementary probabilistic proof below for completeness.

\begin{lemma}[Covering family of sparse pools] 

\label{lemma:sparse-pool-cover} Let \(k\leq s<D\). There exists a family 
\[ 
    \mathcal U \subseteq \left\{ U \subset [D]: |U|=s \right\} 
\] 
of cardinality \[ |\mathcal U| = O_k \left( \left( \frac Ds \right)^{k-1} \log D \right) \] such that every \((k-1)\)-subset \(A\subseteq[D]\) is contained in at least one \(U\in\mathcal U\). 
\end{lemma}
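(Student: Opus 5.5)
The plan is the standard probabilistic covering argument, as the text above anticipates. Fix \(m\) to be chosen later and draw \(U_1,\ldots,U_m\) independently and uniformly from the \(s\)-subsets of \([D]\). For a fixed \((k-1)\)-subset \(A\subseteq[D]\), the probability that a single random pool contains \(A\) is
\[
    p \defeq \frac{\binom{D-k+1}{s-k+1}}{\binom Ds}
    = \frac{\binom s{k-1}}{\binom D{k-1}}
    = \prod_{j=0}^{k-2}\frac{s-j}{D-j}.
\]
First I will lower bound \(p\) by \(c_k (s/D)^{k-1}\) with \(c_k>0\) depending only on \(k\). Since \(s\ge k\), each factor satisfies \(\frac{s-j}{D-j}\geq \frac{s-j}{D}\geq \frac{s}{kD}\) for \(j\le k-2\), because \(s-j\geq s/k\) when \(s\geq k>j\). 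This gives \(p\geq k^{-(k-1)}(s/D)^{k-1}\).

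Next, independence gives that a fixed \(A\) is covered by none of the pools with probability
\[
    (1-p)^m\leq e^{-pm}.
\]
A union bound over the \(\binom D{k-1}\leq D^{k-1}\) subsets bounds the probability that some \(A\) is uncovered by \(D^{k-1}e^{-pm}\). I will choose
\[
    m=\left\lceil \frac{k\log D}{p}\right\rceil
    \leq k^{k}\left(\frac Ds\right)^{k-1}\log D+1 .
\]
Then the failure probability is at most \(D^{-1}<1\), since \(D>s\geq k\geq 1\) gives \(D\geq 2\). So some realisation covers every \((k-1)\)-subset. Removing duplicate pools only shrinks the family, which gives \(|\mathcal U|=O_k\bigl((D/s)^{k-1}\log D\bigr)\); the additive \(1\) is absorbed because \((D/s)^{k-1}\log D\geq \log 2\).

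The main obstacle is the uniform lower bound on \(p\), since the constant has to depend only on \(k\) and not on \(s\) or \(D\); the hypothesis \(s\geq k\) is exactly what makes the factorwise estimate work. The degenerate case \(k=1\) is trivial: then \(A=\emptyset\), and any single pool suffices.
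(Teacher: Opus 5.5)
Your proof is correct and takes the same route as the paper: independent uniform \(s\)-subsets, the exact covering probability \(\binom{s}{k-1}/\binom{D}{k-1}\), a union bound over all \((k-1)\)-subsets, and \(P\) of order \(\log D\) divided by that probability. Your factorwise bound \(p\geq k^{-(k-1)}(s/D)^{k-1}\), which uses \(s\geq k\), supplies the step the paper leaves implicit when it replaces \(\binom{D}{k-1}/\binom{s}{k-1}\) by \(O_k\bigl((D/s)^{k-1}\bigr)\).
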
 

\begin{proof} Draw \(P\) subsets \(U_1,\ldots,U_P\) independently and uniformly from the size-\(s\) subsets of \([D]\). For a fixed \(A\subseteq[D]\), with \(|A|=k-1\), \[ q_{D,s,k} \defeq \Pr(A\subseteq U_p) = \frac{ \binom{D-k+1}{s-k+1} }{ \binom Ds } = \frac{ \binom{s}{k-1} }{ \binom{D}{k-1} }. \] Hence \[ \Pr \left( A \not\subseteq U_p \text{ for every }p\in[P] \right) = (1-q_{D,s,k})^P \leq \exp(-Pq_{D,s,k}). \] Taking a union bound over the \(\binom{D}{k-1}\) possible sets \(A\) gives \[ \Pr \left( \exists A,\ |A|=k-1: A\not\subseteq U_p \text{ for every }p \right) \leq \binom{D}{k-1} \exp(-Pq_{D,s,k}). \] This probability is strictly smaller than one whenever \[ P > \frac{ \log\binom{D}{k-1} }{ q_{D,s,k} }. \] Therefore, a covering family exists with \[ \begin{aligned} P &= O_k \left( \frac{ \binom{D}{k-1} }{ \binom{s}{k-1} } \log D \right) \\ &= O_k \left( \left( \frac Ds \right)^{k-1} \log D \right). \end{aligned} \]

\end{proof}

\begin{theorem}[One-round sparse-parity upper bound]
\label{thm:one-round-parity-upper-full}
Let \(k\) be fixed, and suppose \(k\leq s<D\) satisfies
\[
    2\tau
    <
    \min
    \left\{
        \abs{\lambda_{k-1,s}},
        \abs{
            \lambda_{k-1,s}
            -
            \lambda_{k+1,s}
        }
    \right\}.
\]
Then there exists a network of width
\[
    P
    =
    O_k
    \left(
        \left(
            \frac Ds
        \right)^{k-1}
        \log D
    \right)
\]
whose coordinatewise gradient transcript identifies every
\(k\)-sparse parity in one non-adaptive oracle round.
\end{theorem}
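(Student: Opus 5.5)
The plan is to build the network from the covering family of \cref{lemma:sparse-pool-cover}. For each pool we place one pooled \(\tanh\) neuron, read off the gradient transcript using the Walsh coefficients of the sum neuron, and decode the support of the parity by interval membership.

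\textbf{Construction.} Let \(\mathcal U\) be a \((k-1)\)-cover by size-\(s\) pools with \(|\mathcal U|=O_k((D/s)^{k-1}\log D)\). For each \(U\in\mathcal U\) take a neuron with weight \(\bm w_U=s^{-1/2}\bm 1_U\) and bias \(-b_{k-1}\), where \(b_{k-1}\) is the bias chosen in the Walsh-coefficient lemma with \(m=k-1\). The bias can be carried by a constant input coordinate, or absorbed into the dictionary; this is routine. Since \(\sigma=\tanh\) we have \(\sigma'=\sech^2\). The population gradient entries for target \(\chi_S\) are therefore
\[
g_{U,d}=\ip{x_dh_U}{\chi_S}=\widehat{h_U}(S\,\triangle\,\{d\}).
\]

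\textbf{Case analysis of the transcript.} Because \(h_U\) is a symmetric function of \(\bm x_U\), its Walsh coefficient at a set \(T\) is nonzero only if \(T\subseteq U\), and it then depends only on \(|T|\).
\begin{itemize}
\item If \(d\in S\), then \(S\triangle\{d\}=S\setminus\{d\}\) has size \(k-1\). Hence \(g_{U,d}=\lambda_{k-1,s}\) when \(S\setminus\{d\}\subseteq U\), and \(g_{U,d}=0\) otherwise. This holds whether or not \(d\in U\).
\item If \(d\notin S\), then \(S\triangle\{d\}=S\cup\{d\}\) has size \(k+1\). Hence \(g_{U,d}\in\{0,\lambda_{k+1,s}\}\).
\end{itemize}
By the covering property, every \(d\in S\) has at least one witness pool \(U\supseteq S\setminus\{d\}\).

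\textbf{Decoding rule.} Declare \(d\in\widehat S\) if and only if some \(U\in\mathcal U\) has \(\abs{\hat g_{U,d}-\lambda_{k-1,s}}\le\tau\).
\begin{itemize}
\item Every target coordinate passes at its witness pool.
\item A non-target coordinate, or a non-witness pool, produces an observation in \([-\tau,\tau]\cup[\lambda_{k+1,s}-\tau,\lambda_{k+1,s}+\tau]\).
\item The hypothesis \(2\tau<\min\{\abs{\lambda_{k-1,s}},\abs{\lambda_{k-1,s}-\lambda_{k+1,s}}\}\) makes both of these closed intervals disjoint from \([\lambda_{k-1,s}-\tau,\lambda_{k-1,s}+\tau]\).
\end{itemize}
So \(\widehat S=S\) exactly, for every \(S\) and every admissible noise realisation, from a single non-adaptive round.

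\textbf{Zero-respecting update.} Define \(\phi(W,\widehat G)\) to move a reserved block of \(k+1\) neurons onto the representation of \(\chi_{\widehat S}\) given by the finite-width \(\tanh\) construction (weights \(\bm 1_{\widehat S}\), suitable biases). Set \(\phi=0\) when \(\widehat S=\emptyset\). Under the all-zero transcript no pool matches, since \(\abs{\lambda_{k-1,s}}>2\tau>\tau\); thus \(\widehat S=\emptyset\) and \(\phi(W,0)=0\), as required. The total width is \(|\mathcal U|+k+1=O_k((D/s)^{k-1}\log D)\), using \(s<D\) and \(\log D\ge1\).

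\textbf{Expected obstacle.} The delicate step is the Walsh case analysis. It requires the exact statement that coefficients of \(h_U\) outside \(U\) vanish and depend only on cardinality. It also requires that the decoder use the \emph{known} signed level \(\lambda_{k-1,s}\) rather than a magnitude threshold. This is what makes the hypothesis on \(\abs{\lambda_{k-1,s}-\lambda_{k+1,s}}\), rather than on \(\abs{\lambda_{k-1,s}}-\abs{\lambda_{k+1,s}}\), sufficient.
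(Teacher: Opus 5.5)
Your proposal is correct and follows the paper's proof step for step. Both use the covering family of size-$s$ pools with one pooled $\tanh$ neuron per pool, then the Walsh case analysis of $\ip{x_d h_U}{\chi_S}=\widehat{h_U}(S\triangle\{d\})$, then decoding by membership in the $\tau$-interval around the signed level $\lambda_{k-1,s}$. Your explicit zero-respecting update with a reserved block of $k+1$ neurons goes beyond the appendix proof, which only shows identification; it is consistent, since the all-zero transcript decodes to $\widehat S=\emptyset$ and the extra neurons do not change the width order.
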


\begin{proof}
Let
\[
    \mathcal U
    =
    \{U_1,\ldots,U_P\}
\]
be the covering family supplied by
\cref{lemma:sparse-pool-cover}. For each pool \(U_p\), initialise one
pooled \(\tanh\) neuron whose derivative feature is
\[
    h_{U_p}(\bm x)
    \defeq
    \sech^2
    \left(
        \frac{1}{\sqrt s}
        \sum_{j\in U_p}x_j
        -
        b_{k-1}
    \right).
\]

Let the unknown target be \(\chi_S\), where \(|S|=k\). The population
response of pool \(U_p\) in coordinate \(d\) is
\[
\begin{aligned}
    g_{p,d}(S)
    &\defeq
    \ip{
        x_dh_{U_p}
    }{
        \chi_S
    }
    \\
    &=
    \widehat h_{U_p}
    \left(
        S\mathbin{\triangle}\{d\}
    \right).
\end{aligned}
\]
By permutation symmetry of \(h_{U_p}\), this response satisfies
\[
    g_{p,d}(S)
    =
    \begin{cases}
        \lambda_{k-1,s},
        &
        d\in S
        \text{ and }
        S\setminus\{d\}\subseteq U_p,
        \\[1mm]
        \lambda_{k+1,s},
        &
        d\notin S
        \text{ and }
        S\cup\{d\}\subseteq U_p,
        \\[1mm]
        0,
        &
        \text{otherwise}.
    \end{cases}
\]
For every target coordinate \(d\in S\), the set \(S\setminus\{d\}\)
has cardinality \(k-1\). By the covering property, there exists some
pool \(U_p\) containing this set. Consequently,
\[
    g_{p,d}(S)
    =
    \lambda_{k-1,s}.
\]
Whereas if \(d\notin S\), every response \(g_{p,d}(S)\) belongs to \(
    \left\{
        0,
        \lambda_{k+1,s}
    \right\}
\).
The oracle returns
\[
    \widehat g_{p,d}(S)
    =
    g_{p,d}(S)+\xi_{p,d},
    \qquad
    \abs{\xi_{p,d}}
    \leq
    \tau.
\]
The condition
\[
    2\tau
    <
    \min
    \left\{
        \abs{\lambda_{k-1,s}},
        \abs{
            \lambda_{k-1,s}
            -
            \lambda_{k+1,s}
        }
    \right\}
\]
ensures that the tolerance interval centred at
\(\lambda_{k-1,s}\) is disjoint from those centred at \(0\) and
\(\lambda_{k+1,s}\).

The support can therefore be recovered by declaring
\(
    d\in\widehat S
\)
if and only if there exists \(p\in[P]\) such that
\(
    \abs{
        \widehat g_{p,d}(S)
        -
        \lambda_{k-1,s}
    }
    \leq
    \tau
\)
Every \(d\in S\) is accepted by its witness pool, while no
\(d\notin S\) can be accepted. Hence \(\widehat S=S.\)
All pool-coordinate responses are requested simultaneously, so the
procedure is non-adaptive and uses one oracle round.
\end{proof}

\subsection{Explicit Endpoints}\label{sec:endpoints}

\begin{lemma}
    If $\tau, k$ are fixed and $\tau < \frac{1}{2Ck}$ where $C>0$ only depends on $C_w$, then there is a network with,
    \[
        P = O_k(D^{k-1})
    \]
    that identifies every \(k\)-sparse parity with one non-adaptive round (a single call to the oracle).
\end{lemma}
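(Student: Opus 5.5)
The plan is to realise the constant-precision endpoint as a static dictionary that contains, for every $(k-1)$-subset $A\subseteq[D]$, a bounded-weight neuron whose tangent feature $\sigma'(\bm w_A\cdot\bm x+b)$ has an order-one correlation with $\chi_A$. We then read off the support from the coordinatewise gradient transcript exactly as in \thmref{thm:one-round-parity-upper-full}.

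The construction takes the degenerate pool size $s=k-1$ in the pooled $\sech^2$ neurons, i.e.\ $h_A(\bm x)=\sech^2(\lambda\sum_{i\in A}x_i-b)$ with $\bm w_A=\lambda\bm 1_A$. We use $\binom{D}{k-1}=O_k(D^{k-1})$ neurons, one per subset, so no covering logarithm is needed. The scale $\lambda$ is capped by $C_w$, and this is where the dependence of $C$ on $C_w$ enters. For target $\chi_S$ the response of neuron $A$ in coordinate $d$ is $\ip{x_dh_A}{\chi_S}=\widehat h_A(S\triangle\{d\})$. Since $h_A$ depends only on the coordinates in $A$ and $|A|=k-1$, this Walsh coefficient vanishes unless $S\triangle\{d\}\subseteq A$. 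Because $|S\triangle\{d\}|\in\{k-1,k+1\}$, the only nonzero case is $S\triangle\{d\}=A$, i.e.\ $d\in S$ and $A=S\setminus\{d\}$. This kills the $\lambda_{k+1}$ confounder entirely, so the transcript separates $0$ from $\mu\defeq\widehat h_A(A)$, the top Walsh coefficient of a symmetric function of $k-1$ bits.

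The key step is to show that $|\mu|\ge 1/(Ck)$ for a suitable bias and $\lambda\le C_w$. First, $\mu=2^{-(k-1)}\sum_t\binom{k-1}{(k-1-t)/2}(-1)^{(k-1-t)/2}g(t)$ with $g(t)=\sech^2(\lambda t-b)$. Using the finite-grid interpolation argument above (the Kronecker-delta limit of $\sech^2$), I would instead represent the parity on the $k$ grid values directly. Equivalently, choose $b$ so that $g$ is close to a delta at one grid point; then $\mu\approx 2^{-(k-1)}\binom{k-1}{r}$. That choice is exponentially small in $k$, which is still acceptable since $k$ is fixed, but it matches the claimed $1/(Ck)$ only loosely. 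For an honest $\Theta(1/k)$ bound I would allow one neuron per grid shift (the $k$ biases $b=t_r\lambda$), which is still $O_k(D^{k-1})$ neurons. Combining them at the readout, or simply using the neuron with the largest response, gives $\max_r|\mu_r|\ge\frac1k\sum_r|\mu_r|$, and the interpolation lemma gives $\sum_r|\mu_r|\gtrsim 1/(1+\delta_\lambda)$. Pinning down this constant is the main obstacle. If the $1/k$ form resists, I would settle for a $k$-dependent constant $c_k>0$ and absorb it, since the statement only fixes $k$.

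Given $2\tau<|\mu|$, recovery follows. Declare $d\in\widehat S$ iff some neuron has $|\widehat g_{A,d}|>\tau$. Target coordinates have a witness $A=S\setminus\{d\}$ with $|\widehat g|\ge|\mu|-\tau>\tau$, while non-target coordinates have $|\widehat g|\le\tau$. All queries are issued at once, so this is one non-adaptive round, and the weights are positive, so the unsigned width bound also applies.
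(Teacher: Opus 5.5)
Your argument is correct, but your decoding differs from the paper's. The paper places, for each $(k-1)$-subset $T$, the $k$ grid-shifted $\sech^2$ units that interpolate $\chi_T$ exactly, and decodes \emph{linearly}. Combining the $k$ responses with the interpolating coefficients $\bm a_T$ yields $\ip{\chi_{T\triangle\{d\}}}{\chi_S}+\xi=\mathbf{1}[S=T\triangle\{d\}]+\xi$ with $\abs{\xi}\le\norm{\bm a_T}_1\tau\le Ck\tau$. The signal is therefore a clean $0/1$, but the noise is amplified by $\norm{\bm a_T}_1$, and this amplification is where the threshold $1/(2Ck)$ comes from.

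You instead threshold each neuron separately. You use that the Walsh spectrum of $h_A$ is supported on subsets of $A$, so every non-target response is exactly zero, and all you need is a lower bound on one top Walsh coefficient. This is the degenerate $s=k-1$ analogue of \thmref{thm:one-round-parity-upper-full}, with the $\lambda_{k+1}$ confounder and the covering logarithm gone, and it needs only one neuron per subset. The paper's route reuses the representation lemma verbatim and matches its picture of a static dictionary of all $(k-1)$-parities. Yours avoids the noise amplification.

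Your key step is easier than you suggest. You should not fall back on an unspecified $c_k$, because the statement requires $C$ to depend only on $C_w$. Two observations settle it.

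The delta choice is exponentially small only at the extreme grid points. If you centre the bias at the binomial mode, the off-diagonal leakage is at most $\delta_\lambda$ times the largest binomial weight. Hence $\abs{\mu}\ge 2^{-(k-1)}\binom{k-1}{\lfloor (k-1)/2\rfloor}(1-\delta_\lambda)\ge(1-\delta_\lambda)/k$, and the middle quantity is in fact of order $k^{-1/2}$.

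Your max-over-shifts route also closes in one line. Write $h_{A,r}$ for the unit on $A$ with bias at the $r$-th grid point. Pairing $\chi_A=\sum_r a_r h_{A,r}$ with $\chi_A$ gives $1=\sum_r a_r\mu_r$. A Neumann series for the grid matrix, whose off-diagonal row sums are at most $\delta_\lambda$, gives $\norm{\bm a}_\infty\le(1-\delta_\lambda)^{-1}$. Hence $\sum_r\abs{\mu_r}\ge 1-\delta_\lambda$, not $1/(1+\delta_\lambda)$.

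Either way $C=(1-\delta_\lambda)^{-1}$, which depends only on $\lambda\le C_w$. As in the paper, this tacitly requires $C_w$ large enough that $\delta_\lambda<1$.
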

\begin{proof}
    By our previous construction there are neural networks of $\sech^2$ units with width $k$ and $\norm{\bm{a}_{T}}_2\leq C\sqrt{k}$ to represent each parity $\chi_{T}$ on $\abs{T}=k-1$ coordinates. Therefore using the hidden units in parallel, gives a network of width $k {D\choose k-1}$. To decode we combine responses of each subnetwork using the identified second layer weights $\bm{a}_T$, allowing us to decode:
    \[
        \ip{\chi_{T \Delta \{d\}}}{\chi_S} + \xi, \quad \abs{\xi}\leq C k \tau
    \]
    Since $\ip{\chi_{T\Delta\{d\}}}{\chi_S} = 1[S = T\Delta \{d\}]$, $\tau < \frac{1}{2Ck}$ suffices to distinguish a $0$ response from a hit, and thus we can identify the target parity.
\end{proof}

\begin{lemma}[Single-neuron identification] \label{lemma:single-neuron-identification} Let \(k\) be fixed, and consider the full-support derivative feature
\[ h_D(\bm{x}) = \sech^2\left( \frac{1}{\sqrt D}\sum_{i=1}^D x_i-b_{k-1} \right). \] 
Define \[ \Delta_{D,k} \defeq \lambda_{k-1,D}-\lambda_{k+1,D}. \] 

If \( 2\tau<\abs{\Delta_{D,k}}, \) then this single neuron identifies every \(k\)-sparse parity in one non-adaptive round. 
In particular, for fixed \(k\), there exists \(c_k>0\) such that a single neuron suffices whenever \[ \tau<c_kD^{-(k-1)/2} \] for all sufficiently large \(D\).
\end{lemma}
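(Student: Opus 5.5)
The plan is to treat the single neuron as the degenerate case $U=[D]$, $s=D$ of the pooled construction in \thmref{thm:one-round-parity-upper-full}, where the covering family is trivially $\{[D]\}$. Every $(k-1)$-subset $A\subseteq[D]$ is contained in $[D]$, so the covering property holds automatically and the width is $P=1$. The proof then has two parts: an exact response calculation with thresholding decoding, and an asymptotic bound on $\tau$ obtained from the Walsh coefficient lemma.

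\textbf{Response calculation and decoding.} For a target $\chi_S$ with $|S|=k$ and any coordinate $d$, I would write
\[
g_d(S)=\ip{x_d h_D}{\chi_S}=\widehat{h_D}(S\mathbin{\triangle}\{d\}).
\]
Since $h_D$ is invariant under all coordinate permutations, its Walsh coefficients depend only on the size of the index set. Hence $g_d(S)=\lambda_{k-1,D}$ when $d\in S$, and $g_d(S)=\lambda_{k+1,D}$ when $d\notin S$; the ``otherwise'' case cannot arise because $U=[D]$. The hypothesis $2\tau<|\Delta_{D,k}|$ makes the intervals $[\lambda_{k-1,D}\pm\tau]$ and $[\lambda_{k+1,D}\pm\tau]$ disjoint. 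I would therefore decode by $\widehat S=\{d:\ |\widehat g_d-\lambda_{k-1,D}|\le\tau\}$, which equals $S$ exactly. All $D$ coordinate responses come from one oracle call, so identification is one-round and non-adaptive. There is no separate requirement against $0$ as in the theorem, since the response $0$ never occurs here.

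\textbf{Asymptotic claim.} I would invoke the Walsh coefficient lemma with $m=k-1$, $s=D$, and bias $b_{k-1}$, giving $\lambda_{k-1,D}=c_{k-1}D^{-(k-1)/2}+o(D^{-(k-1)/2})$ with $c_{k-1}\neq 0$. For the $(k+1)$ coefficient I need only an upper bound: the central-difference representation in that lemma's proof gives $|\lambda_{k+1,D}|\le \|f^{(k+1)}\|_\infty D^{-(k+1)/2}+O(D^{-(k+3)/2})=O_k(D^{-(k+1)/2})$. This bound requires no nondegeneracy, because it holds at any bias, including $b_{k-1}$. Consequently
\[
|\Delta_{D,k}|\ge |c_{k-1}|D^{-(k-1)/2}\bigl(1-o(1)\bigr),
\]
and setting $c_k=|c_{k-1}|/4$ gives $2\tau<|\Delta_{D,k}|$ for all sufficiently large $D$ whenever $\tau<c_kD^{-(k-1)/2}$.

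\textbf{Main obstacle.} The delicate point is confirming that the error terms in the Walsh lemma are uniform enough to support a strict inequality, and in particular that $b_{k-1}$ is independent of $D$. In the lemma $b_m$ depends only on $m$, and the Taylor remainder for $D_h^m f$ is uniform in $u$ because all derivatives of $\sech^2$ are bounded. The $o(1)$ term comes from the CLT convergence $\mathbb E f^{(k-1)}(R/\sqrt D)\to \mathbb E f^{(k-1)}(Z)$, which is deterministic in $D$ and so creates no difficulty. I would also note that $k-1\ge 1$ is needed for the lemma to apply; the case $k=1$ can instead be handled directly, since then $\lambda_{0,D}=\mathbb E h_D$ is of constant order.
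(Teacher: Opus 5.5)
Your proposal is correct and follows the paper's proof: the permutation-invariance computation $g_d(S)=\widehat{h_D}(S\triangle\{d\})\in\{\lambda_{k-1,D},\lambda_{k+1,D}\}$, decoding via the disjoint tolerance intervals, and the Walsh-coefficient asymptotics giving $c_k=\abs{c_{k-1}}/4$. You also bound $\lambda_{k+1,D}$ explicitly at bias $b_{k-1}$ rather than $b_{k+1}$, and you treat the case $k=1$ separately; the paper leaves both points implicit, so these additions make the argument more careful.
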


\begin{proof}
Let the unknown target be \(\chi_S\), where \(\abs{S}=k\). For each coordinate \(d\in[D]\), the population-gradient response is
\[ 
    g_d(S) \defeq \ip{x_dh_D}{\chi_S} = \ip{h_D}{\chi_{S\triangle\{d\}}}. 
\]
Since \(h_D\) is invariant under coordinate permutations, its Walsh coefficient depends only on the order of the corresponding Walsh function. Therefore, 
\[ 
    g_d(S) = \begin{cases} \lambda_{k-1,D}, & d\in S,\\[1mm] \lambda_{k+1,D}, & d\notin S. \end{cases} 
\] 
The coordinate-wise gradient oracle returns 
\[ 
    \widehat g_d(S) = g_d(S)+\xi_d, \qquad \abs{\xi_d}\leq\tau. 
\] 
The assumption \[ 2\tau < \abs{ \lambda_{k-1,D}-\lambda_{k+1,D} } \] ensures that these intervals are disjoint. The learner can therefore recover \(S\) by thresholding the coordinates at 
\[ 
    \theta_{D,k} \defeq \frac{ \lambda_{k-1,D}+\lambda_{k+1,D} }{2}, 
\] 
with the direction of the threshold determined by the sign of \(\Delta_{D,k}\). Finally, the Walsh-coefficient estimate gives 
\[ 
    \lambda_{k-1,D} = c_{k-1}D^{-(k-1)/2} + o\left(D^{-(k-1)/2}\right), \quad \lambda_{k+1,D} = O(D^{-(k+1)/2})
\] 
where \(c_{k-1}\neq0\),  thus, for sufficiently large \(D\), 
\[ 
\abs{\Delta_{D,k}} \geq \frac{\abs{c_{k-1}}}{2} D^{-(k-1)/2}. 
\] 
For example, the sufficient condition \[ \tau < \frac{\abs{c_{k-1}}}{4} D^{-(k-1)/2}
\] 
guarantees exact recovery.
\end{proof}

\newpage

\section{Generalisation and Interpolation}

\subsection{Variation Balls}

\begin{proposition}[Rademacher complexity of a variation-norm ball]
\label{prop:variation-ball-rademacher}
Let
\[
    \mathcal B_\phi(A)
    \defeq
    \left\{
        f\in\mathcal B_\phi:
        \norm{f}_{\mathcal B_\phi}
        \leq
        A
    \right\}.
\]
be the variation ball of radius A.

For any fixed sample
\(
    \bm X
    =
    (X_1,\ldots,X_N),
\)
the empirical Rademacher complexity satisfies
\[
\begin{aligned}
    \widehat{\mathfrak R}_{\bm X}
    \left(
        \mathcal B_\phi(A)
    \right)
    &\leq
    \frac{A}{N}
    \mathbb E_{\bm\varepsilon}
    \left[
        \sup_{\bm w\in\mathcal W}
        \left|
            \sum_{i=1}^N
            \varepsilon_i
            \phi_{\bm w}(X_i)
        \right|
    \right].
\end{aligned}
\]
Equivalently,
\[
    \widehat{\mathfrak R}_{\bm X}
    \left(
        \mathcal B_\phi(A)
    \right)
    \leq
    A\,
    \widehat{\mathfrak R}_{\bm X}
    \left(
        \Phi_\pm
    \right),
\]
where
\[
    \Phi_\pm
    \defeq
    \Phi\cup(-\Phi).
\]
\end{proposition}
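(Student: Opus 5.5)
The plan is to use the standard convex-hull argument for variation-norm balls. The key point is that $\mathcal B_\phi(A)$ is essentially $A$ times the closed absolutely convex hull of the dictionary $\Phi=\{\phi_{\bm w}:\bm w\in\mathcal W\}$. A linear functional of the sample, $f\mapsto\sum_i\varepsilon_i f(X_i)$, cannot exceed its supremum over the extreme points of that hull.

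\textbf{Step 1: bound the sample functional for one $f$.} Fix a sample $\bm X$ and a sign vector $\bm\varepsilon\in\{\pm1\}^N$. Take $f\in\mathcal B_\phi(A)$ and any $\eta>0$. By the definition of the quotient norm in \defref{def:Barron}, there is a signed measure $\mu$ with $f=A_\phi\mu$ and $\abs{\mu}_{TV}\leq A+\eta$.

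Evaluate pointwise, which is legitimate since $f(X_i)=\int\phi(\bm w,X_i)\,d\mu$ for each sample point. By linearity of the integral,
\[
    \sum_{i=1}^N\varepsilon_i f(X_i)=\int_{\mathcal W}\sum_{i=1}^N\varepsilon_i\phi_{\bm w}(X_i)\,d\mu(\bm w).
\]
Bounding the integrand by its supremum gives
\[
    \left|\sum_{i=1}^N\varepsilon_i f(X_i)\right|\leq\abs{\mu}_{TV}\sup_{\bm w\in\mathcal W}\left|\sum_{i=1}^N\varepsilon_i\phi_{\bm w}(X_i)\right|.
\]
The right-hand supremum is finite. It is bounded by $N\sup_{\bm w}\max_i\abs{\phi_{\bm w}(X_i)}$, which is finite because $\mathcal W$ is compact and $\phi$ is continuous (the linear growth in \cref{ass:smooth}).

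\textbf{Step 2: take suprema and expectations.} Let $\eta\to0$. The bound from Step 1 is uniform over $f\in\mathcal B_\phi(A)$, so we may take the supremum over $f$ first. Then take the expectation over $\bm\varepsilon$ and divide by $N$. This yields the first display, with the absolute value retained inside the supremum.

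\textbf{Step 3: rewrite in terms of $\Phi_\pm$.} Observe that
\[
    \sup_{\bm w}\left|\sum_i\varepsilon_i\phi_{\bm w}(X_i)\right|=\sup_{g\in\Phi\cup(-\Phi)}\sum_i\varepsilon_i g(X_i).
\]
Hence the expectation divided by $N$ equals $\widehat{\mathfrak R}_{\bm X}(\Phi_\pm)$ by definition, which gives the equivalent form.

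\textbf{Main obstacle.} There is no real obstacle. The only points needing care are measurability of the supremum over the uncountable set $\mathcal W$ and the attainment of the infimum in the norm. The first is handled by continuity of $\bm w\mapsto\phi_{\bm w}(X_i)$, which reduces the supremum to a countable dense subset of $\mathcal W$. The second is handled by the $\eta$-slack in Step 1, so we never need the infimum to be attained.
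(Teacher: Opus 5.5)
Your proposal is correct and follows the paper's proof essentially step for step. You represent $f$ by a near-optimal measure with $\abs{\mu}_{TV}\leq A+\eta$, exchange the finite sum with the integral, bound by $\abs{\mu}_{TV}$ times the supremum over $\bm w$, then take the supremum over $f$, average over $\bm\varepsilon$, and let $\eta\to0$. Your Step 3 spells out the identity with $\widehat{\mathfrak R}_{\bm X}(\Phi_\pm)$, which the paper only asserts. Your measurability concern is moot, since $\bm\varepsilon$ ranges over the finite set $\{\pm1\}^N$ and the expectation is a finite average.
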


\begin{proof} 
Fix \(f\in\mathcal B_\phi(A)\) and \(\epsilon>0\). By the definition of the variation norm, \(f\) admits a representation 
\[ 
    f = \int_{\mathcal W} \phi(\bm{w}, \cdot) \,d\mu 
\] 
with \(
\abs{\mu}_{\mathrm{TV}} \leq A+\epsilon. 
\) 
Therefore, 
\[ 
\begin{aligned} 
    \sum_{i=1}^N \varepsilon_i f(X_i) 
    &= \int_{\mathcal W} \left( \sum_{i=1}^N \varepsilon_i \phi(\bm w, X_i) \right) d\mu \\ 
    &\leq \abs{\mu}_{\mathrm{TV}} \sup_{\bm w\in\mathcal W} \left| \sum_{i=1}^N \varepsilon_i \phi_{\bm w}(X_i) \right|. \end{aligned} 
\] 
Taking the supremum over \(f\in\mathcal B_\phi(A)\), averaging over the Rademacher variables, dividing by \(N\), and sending \(\epsilon\downarrow0\) proves the claim.
\end{proof}

Let  $\Phi_p \defeq \{\tanh(\bm{w}\cdot \bm{x}): w \in  B_{p}(0, C_p)\}$ 
\begin{lemma}
    For bounded input $\abs{X_{i,j}} \leq 1$ we have:
    \[
        \widehat{\mathfrak{R}}(\Phi_{1,\pm}) \leq C_1\sqrt{\frac{\log(2D)}{N}}
    \]
    and
    \[
        \widehat{\mathfrak{R}}(\Phi_{\infty,\pm}) \leq C_\infty{\frac{D}{\sqrt N}}
    \]
\end{lemma}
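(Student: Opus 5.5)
The two bounds are the standard Rademacher estimates for a Lipschitz activation composed with a linear class, once we pass from the symmetrised dictionary to the linear predictors. Since $\tanh$ is odd, $-\tanh(\bm w\cdot\bm x)=\tanh((-\bm w)\cdot\bm x)$. Both $B_1(0,C_1)$ and $B_\infty(0,C_\infty)$ are symmetric, so $\Phi_{p,\pm}=\Phi_p$ and no symmetrisation cost is incurred. We then apply Talagrand's contraction lemma with the $1$-Lipschitz map $\tanh$, which satisfies $\tanh(0)=0$:
\[
\widehat{\mathfrak R}_{\bm X}(\Phi_p)\le \widehat{\mathfrak R}_{\bm X}\left(\{\bm x\mapsto \bm w\cdot\bm x:\ \bm w\in B_p(0,C_p)\}\right).
\]
If one prefers to keep the absolute value inside the supremum, as in \propref{prop:variation-ball-rademacher}, the contraction inequality costs at most a factor $2$. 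That factor is absorbed into the constants $C_1,C_\infty$, which we read as radius times an absolute constant.

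For the $\ell_1$ ball we use duality:
\[
\sup_{\norm{\bm w}_1\le C_1}\sum_i\varepsilon_i\,\bm w\cdot X_i=C_1\max_{j\in[D]}\Big|\sum_i\varepsilon_i X_{i,j}\Big|.
\]
This is a maximum over the $2D$ vectors $\pm(X_{1,j},\dots,X_{N,j})$, each of Euclidean norm at most $\sqrt N$ because $|X_{i,j}|\le1$. Massart's finite-class lemma then gives an expectation of at most $\sqrt N\sqrt{2\log(2D)}$. Dividing by $N$ yields $C_1\sqrt{2\log(2D)/N}$, and the factor $\sqrt2$ is absorbed into the constant.

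For the $\ell_\infty$ ball, duality instead gives
\[
C_\infty\,\norm{\textstyle\sum_i\varepsilon_iX_i}_1=C_\infty\sum_j\Big|\sum_i\varepsilon_iX_{i,j}\Big|.
\]
By Jensen's inequality, each coordinate has expectation at most $\big(\sum_iX_{i,j}^2\big)^{1/2}\le\sqrt N$. Summing over the $D$ coordinates and dividing by $N$ gives $C_\infty D/\sqrt N$.

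The main point requiring care is the contraction step. The paper's empirical Rademacher complexity is defined with an absolute value inside the supremum, and the standard contraction lemma is stated without it. We handle this either through oddness, so that the class is closed under negation and the absolute value is free, or through the factor-$2$ version of the lemma. Everything else is a routine duality computation.
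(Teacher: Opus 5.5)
Your proposal is correct and follows the paper's proof. The paper also contracts through the $1$-Lipschitz $\tanh$, applies $\ell_1$/$\ell_\infty$ norm duality, and then uses a sub-Gaussian maximal bound for the $\ell_1$ ball and coordinatewise Jensen for the $\ell_\infty$ ball. Your oddness/symmetric-ball observation supplies the justification the paper omits for contracting with the absolute value inside at no factor-$2$ cost. The $\sqrt{2}$ you absorb into $C_1$ is likewise present in the paper's own endpoint $C_1\sqrt{2\log D/N}$, so in both arguments the stated constant must be read as radius times an absolute constant.
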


\begin{proof}
Since $\tanh$ is $1$-Lipschitz,
\[
        \frac{1}{N}\mathbb{E}_{\epsilon}\left[\sup_{\bm w \in \mathcal{W}}\abs{\sum_{i=1}^N \epsilon_i \tanh(\bm{w}\cdot X_i)} \right] 
        \leq \frac{1}{N}\mathbb{E}_{\epsilon}\left[\sup_{\bm w \in \mathcal{W}}\abs{\sum_{i=1}^N \epsilon_i \bm{w}\cdot X_i} \right] 
\]
By norm duality
\[
    \begin{aligned}
        \frac{1}{N}\mathbb{E}_{\epsilon}\left[\sup_{\bm w \in \mathcal{W}}\abs{\sum_{i=1}^N \epsilon_i \bm{w}\cdot X_i} \right] 
        &= \frac{1}{N}\mathbb{E}_{\epsilon}\left[\sup_{\bm w \in \mathcal{W}}\abs{\bm{w} \cdot \sum_{i=1}^N \epsilon_i  X_i} \right] \\
        &= \frac{C_1}{N}\mathbb{E}_{\epsilon}\left[\norm{\sum_{i=1}^N \epsilon_i  X_i}_{\infty} \right] \\
        &\leq C_1\sqrt{\frac{2\log D}{N}}
    \end{aligned}
\]
where the last step is via a sub-gaussian maximal inequality.

For the second claim
\[
    \begin{aligned}
        \frac{1}{N}\mathbb{E}_{\epsilon}\left[\sup_{\bm w \in \mathcal{W}}\abs{\sum_{i=1}^N \epsilon_i \bm{w}\cdot X_i} \right] 
        &= \frac{1}{N}\mathbb{E}_{\epsilon}\left[\sup_{\bm w \in \mathcal{W}}\abs{\bm{w} \cdot \sum_{i=1}^N \epsilon_i  X_i} \right] \\
        &= \frac{C_\infty}{N}\mathbb{E}_{\epsilon}\left[\norm{\sum_{i=1}^N \epsilon_i  X_i}_{1} \right] \\
        &\leq C_{\infty}\frac{D}{\sqrt{N}}
    \end{aligned}
\]
where the final step is by Jensen's inequality.
\end{proof}

\begin{lemma}[Rademacher complexity of a finite-dimensional class]
\label{lemma:finite-dimensional-rademacher}
Let
\(V\subseteq L_2(p_x)\)
be a linear subspace with
\(\dim V\leq R\), and define
\[
    \mathcal F_{V,B}
    \defeq
    \left\{
        f\in V:
        \norm{f}_{L_2(p_x)}
        \leq
        B
    \right\}.
\]
Then
\[
    \mathfrak R_N
    \left(
        \mathcal F_{V,B}
    \right)
    \leq
    B
    \sqrt{
        \frac RN
    }.
\]
\end{lemma}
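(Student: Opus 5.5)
The plan is to pass to coordinates and apply the standard orthonormal-basis argument for Rademacher complexity of a linear class. Let \(\mathfrak R_N\) denote the expected Rademacher complexity over an i.i.d.\ sample \(X_1,\ldots,X_N\sim p_x\). Choose an \(L_2(p_x)\)-orthonormal basis \(v_1,\ldots,v_{R'}\) of \(V\), where \(R'=\dim V\leq R\). Every \(f\in\mathcal F_{V,B}\) can then be written as \(f=\sum_r\alpha_r v_r\) with \(\norm{\alpha}_2=\norm{f}_{L_2}\leq B\). If some basis element is not square integrable, or \(V\) is only defined up to null sets, I would fix representatives; this is routine and does not affect the argument.

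Next I will linearise the supremum. For fixed signs \(\varepsilon\),
\[
    \sup_{f\in\mathcal F_{V,B}}\frac1N\sum_{i=1}^N\varepsilon_if(X_i)
    =\sup_{\norm{\alpha}_2\leq B}\frac1N\sum_{r}\alpha_r\sum_i\varepsilon_iv_r(X_i)
    =\frac BN\norm{\bm s}_2,
\]
where \(s_r\defeq\sum_i\varepsilon_iv_r(X_i)\). This holds by Cauchy--Schwarz in \(\mathbb R^{R'}\), which is the same device used in \cref{lemma:transcript-approx}. The class is symmetric under \(f\mapsto-f\), so the absolute-value version of the complexity gives the same quantity.

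I then bound the expectation by Jensen:
\[
    \mathbb E\norm{\bm s}_2\leq\sqrt{\mathbb E\norm{\bm s}_2^2}
    =\sqrt{\sum_r\sum_i\mathbb E[v_r(X_i)^2]}=\sqrt{NR'},
\]
since the cross terms vanish by independence of the \(\varepsilon_i\) and \(\mathbb E[v_r(X)^2]=1\). Combining the two displays gives \(\mathfrak R_N(\mathcal F_{V,B})\leq B\sqrt{R'/N}\leq B\sqrt{R/N}\).

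The only subtle point is that the bound holds for the expected complexity, not the empirical complexity \(\widehat{\mathfrak R}_{\bm X}\) at a fixed sample. The empirical version would need the empirical Gram matrix of the \(v_r\) rather than the identity. I would therefore state the result for \(\mathfrak R_N\), as the lemma does, and take the expectation over \(\bm X\) jointly with \(\varepsilon\) in the Jensen step. This is the step I would check most carefully when \(\mathfrak R_N\) is combined with the variation-ball bound in \propref{prop:width-generalisation}, because the two bounds are not stated for the same notion of complexity.
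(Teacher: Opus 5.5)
Your proof is correct and follows the paper's argument essentially step for step. Both expand in an $L_2(p_x)$-orthonormal basis of $V$, use Cauchy--Schwarz in coefficient space to reduce the supremum to $\frac{B}{N}\norm{\sum_i\varepsilon_i\bm v(X_i)}_2$, and then apply Jensen with vanishing cross terms to obtain $\sqrt{NR}$. Your closing caution about empirical versus expected complexity is harmless: the variation-ball estimate in \cref{prop:variation-ball-rademacher} holds for every fixed sample, so it also bounds $\mathfrak R_N$ after averaging over $\bm X$.
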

\begin{proof}
Let
\(u_1,\ldots,u_r\)
be an orthonormal basis of \(V\) in \(L_2(p_x)\). Every
\(f\in\mathcal F_{V,B}\) can be written as
\[
    f
    =
    \sum_{j=1}^r b_ju_j
\]
for some \(\bm b\in\mathbb R^r\) satisfying
\[
    \norm{\bm b}_2
    =
    \norm{f}_{L_2(p_x)}
    \leq
    B.
\]
Therefore,
\[
\begin{aligned}
    \mathfrak R_N
    \left(
        \mathcal F_{V,B}
    \right)
    &=
    \frac1N
    \mathbb E_{\bm X,\bm\epsilon}
    \left[
        \sup_{\norm{\bm b}_2\leq B}
        \sum_{i=1}^N
        \epsilon_i
        \sum_{j=1}^r
        b_ju_j(X_i)
    \right]
    \\
    &=
    \frac{B}{N}
    \mathbb E_{\bm X,\bm\epsilon}
    \left\|
        \sum_{i=1}^N
        \epsilon_i
        \bm u(X_i)
    \right\|_2,
\end{aligned}
\]
By Jensen's inequality,
\[
\begin{aligned}
    \mathfrak R_N
    \left(
        \mathcal F_{V,B}
    \right)
    &\leq
    \frac{B}{N}
    \left(
        \mathbb E_{\bm X,\bm\epsilon}
        \left\|
            \sum_{i=1}^N
            \epsilon_i\bm u(X_i)
        \right\|_2^2
    \right)^{1/2}.
\end{aligned}
\]
Cross terms vanish, giving
\[
    \mathbb E_{\bm X,\bm\epsilon}
    \left\|
        \sum_{i=1}^N
        \epsilon_i\bm u(X_i)
    \right\|_2^2
    =
    N
    \sum_{j=1}^r
    \mathbb E
    \left[
        u_j(X)^2
    \right]
    \leq
    NR.
\]
Consequently,
\[
    \mathfrak R_N
    \left(
        \mathcal F_{V,B}
    \right)
    \leq
    B
    \sqrt{
        \frac RN
    }.
\]
\end{proof}

\subsection{Generalisation Bound}

\begin{proposition}[Generalisation from effective-width approximation] \label{prop:width-generalisation} 
Assume that
\(
    M_\sigma
    \defeq
    \sup_{\bm w\in\mathcal W,\bm x\in\mathcal X}
    \abs{\sigma(\bm w\cdot\bm x)}
    <\infty
\).
Let 
\(
    \mathcal Z = \{z_1,\ldots,z_P\} \subseteq \mathcal B_z 
\) 
be fixed independently of an i.i.d.\ sample \( \mathcal S = \{(X_i,Y_i)\}_{i=1}^N. \)
Suppose there exist a subspace \(V\subseteq\mathcal B_z\), with \(\dim V\leq R\), and approximants \(v_p\in V\) satisfying 
\[ 
    \norm{z_p-v_p}_{\mathcal B_z} \leq \delta \qquad \text{for every }p\in[P]. 
\] 
For \(A>0\), define the projected readout class
\[ 
    \mathcal F_{V,A} \defeq \left\{ \sum_{p=1}^P a_pv_p: \norm{\bm a}_1\leq A \right\}. 
\] 
Let \(\ell:\mathcal Y\times\mathbb R\to[0,M]\) be \(L\)-Lipschitz in its second argument. Then, with probability at least \(1-\eta\), simultaneously for every \(\bm a\) satisfying \(\norm{\bm a}_1\leq A\),
\[ 
    \begin{aligned} \abs{ \mathcal L(f_{\bm a}) - \widehat{\mathcal L}_{\mathcal S}(f_{\bm a}) } \leq\;& C L \mathfrak R_N \left( \mathcal F_{V,A} \right) + 2LM_\sigma A\delta \\ &+ CM \sqrt{ \frac{\log(2/\eta)}N }.
    \end{aligned} 
\]
where $\mathfrak R_N$ denotes the expected Rademacher complexity. 
\end{proposition}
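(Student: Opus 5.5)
The plan is a standard three-way split of the deviation into a projection error, a uniform deviation over the projected class, and a second projection error. For each readout \(\bm a\) with \(\norm{\bm a}_1\leq A\), write \(f_{\bm a}=\sum_p a_pz_p\) and its projected counterpart \(g_{\bm a}\defeq\sum_p a_pv_p\in\mathcal F_{V,A}\). Then
\[
\abs{\mathcal L(f_{\bm a})-\widehat{\mathcal L}_{\mathcal S}(f_{\bm a})}\leq \abs{\mathcal L(f_{\bm a})-\mathcal L(g_{\bm a})}+\abs{\mathcal L(g_{\bm a})-\widehat{\mathcal L}_{\mathcal S}(g_{\bm a})}+\abs{\widehat{\mathcal L}_{\mathcal S}(g_{\bm a})-\widehat{\mathcal L}_{\mathcal S}(f_{\bm a})}.
\]
The middle term carries the Rademacher and confidence contributions. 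The two outer terms each contribute \(LM_\sigma A\delta\), which together give the \(2LM_\sigma A\delta\) term.

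For the outer terms, use that \(\ell\) is \(L\)-Lipschitz. This reduces both terms to bounding \(\sup_{\bm x}\abs{f_{\bm a}(\bm x)-g_{\bm a}(\bm x)}\), which is at most \(\sum_p\abs{a_p}\,\sup_{\bm x}\abs{(z_p-v_p)(\bm x)}\). The key step is to convert the variation-norm bound into a sup-norm bound. Pick a measure \(\mu_p\) with \(A_z\mu_p=z_p-v_p\) and \(\abs{\mu_p}_{TV}\leq\delta+\epsilon\). Pointwise, \(\abs{(z_p-v_p)(\bm x)}\leq\abs{\mu_p}_{TV}\sup_{\bm w}\abs{\sigma(\bm w\cdot\bm x)}\leq(\delta+\epsilon)M_\sigma\). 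Sending \(\epsilon\downarrow0\) gives \(\norm{z_p-v_p}_\infty\leq M_\sigma\delta\), and therefore \(\norm{f_{\bm a}-g_{\bm a}}_\infty\leq M_\sigma A\delta\). Since this bound holds pointwise, it controls the population and empirical losses alike. It is deterministic and uniform in \(\bm a\), so it needs no probability.

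For the middle term, apply the standard uniform-deviation theorem to the loss class \(\{(x,y)\mapsto\ell(y,g(x)):g\in\mathcal F_{V,A}\}\), which takes values in \([0,M]\). By symmetrisation and McDiarmid's inequality, with probability at least \(1-\eta\),
\[
\sup_g\abs{\mathcal L(g)-\widehat{\mathcal L}_{\mathcal S}(g)}\leq 2\mathfrak R_N(\ell\circ\mathcal F_{V,A})+CM\sqrt{\log(2/\eta)/N}.
\]
Two-sidedness is obtained by applying McDiarmid to both \(\sup(\mathcal L-\widehat{\mathcal L})\) and \(\sup(\widehat{\mathcal L}-\mathcal L)\) and taking a union bound. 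Talagrand's contraction lemma then gives \(\mathfrak R_N(\ell\circ\mathcal F_{V,A})\leq L\,\mathfrak R_N(\mathcal F_{V,A})\), absorbing constants into \(C\); a possibly non-zero \(\ell(y,0)\) shifts all functions equally and does not affect the Rademacher average. Independence of \(\mathcal Z\), \(V\) and the \(v_p\) from \(\mathcal S\) is exactly what makes \(\mathcal F_{V,A}\) a fixed class, so that symmetrisation applies.

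The step needing the most care is the Lipschitz/contraction step. The contraction must be applied coordinatewise in the label \(y\), using the vector-contraction form or conditioning on \(Y_i\). We should also confirm that \(\mathcal F_{V,A}\) is uniformly bounded, so that the bounded-differences constant is governed by \(M\). Boundedness holds because \(\norm{v_p}_\infty\leq\norm{z_p}_\infty+M_\sigma\delta\leq M_\sigma(1+\delta)\). Since \(\ell\in[0,M]\) by assumption, the McDiarmid constant is \(M/N\) regardless, so the difficulty is bookkeeping rather than substance. If one later wants the \(\sqrt{R/N}\) form, \cref{lemma:finite-dimensional-rademacher} applies with \(B=\sup_g\norm{g}_{L_2}\leq M_\sigma(1+\delta)A\).
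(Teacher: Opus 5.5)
Your proposal is correct and follows essentially the same route as the paper. Both bound $\norm{f_{\bm a}-\sum_p a_pv_p}_{L_\infty}\leq M_\sigma A\delta$ through the variation-norm-to-sup-norm estimate on the residual features, transfer this to the population and empirical risks by Lipschitzness to obtain the $2LM_\sigma A\delta$ term, and control the projected class $\mathcal F_{V,A}$ by contraction plus the standard bounded-loss Rademacher uniform-deviation inequality. Your near-optimal-measure argument for $\norm{z_p-v_p}_{L_\infty}\leq M_\sigma\delta$ simply spells out a step the paper asserts directly.
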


\begin{proof}
For each $p$ define the residual feature as an element achieving the approximation guarantee,
\(
    e_p \defeq z_p - v_p,
\)
and define the projected predictor,
\[
    f_{\bm a, V} = \sum_{i=1}^P a_i v_p.
\]
We first show,
\[
\norm{f_{\bm a}-f_{\bm a,V}}_{L_\infty}
    \leq
    M_\sigma A\delta,
\]
and the statement will follow by the Rademacher generalisation in the reduced class and triangle inequality.
First note, 
\[ 
    \norm{e_p}_{\infty} \leq M_{\sigma}\norm{e_p}_{\mathcal B_z} \leq M_{\sigma}\delta
\]
Therefore,
\[
\begin{aligned}
\norm{f_{\bm a}-f_{\bm a,V}}_{L_\infty}
    &= \norm{\sum_{i=1}^Pa_i e_i}_{L_\infty} \\
    &\leq \sum_{i=1}^P \abs{a_i}\norm{e_i}_{L_\infty} \\
    &\leq AM_{\sigma}\delta,
\end{aligned}
\]
demonstrating the claim. As a consequence the following holds pointwise,
\[
    \abs{\ell(\bm{y}, f_{\bm a}(x)) - \ell(y, f_{\bm a, V}(x))} \leq LM_{\sigma}A\delta
\]
because the loss is $L$-Lipschitz in its second argument, consequently the inequality also descends to this form on population and empirical risks.

By the same property, we also have via Rademacher contraction,
\[
    \mathfrak{R}_N(\ell \circ \mathcal{F}_{V,A}) \leq L \mathfrak{R}_N(\mathcal{F}_{V,A}) 
\]
and because the loss is bounded in \([0,M]\), the standard Rademacher
generalisation inequality implies that, with probability at least
\(1-\eta\), simultaneously for every \(f\in\mathcal F_{V,A}\),
\[
    \abs{
        \mathcal L(f)
        -
        \widehat{\mathcal L}_{S}(f)
    }
    \leq
    2L
    \mathfrak R_N(\mathcal F_{V, A})
    +
    3M
    \sqrt{
        \frac{\log(2/\eta)}{2N}
    }.
\]
Thus combining the bounds for any $f_{\bm a}$ with $\norm{\bm a}_1 \leq A$ we have,
\[
    \abs{
        \mathcal L(f_{\bm a})
        -
        \widehat{\mathcal L}_{S}(f_{\bm a})
    }
    \leq
    2L
    \mathfrak R_N(\mathcal F_{V, A})
    + 2LM_{\sigma}A\delta
    +
    3M
    \sqrt{
        \frac{\log(2/\eta)}{2N}
    }.
\]

\end{proof}

\begin{corollary}[Combined effective-width and variation-norm bound] \label{cor:combined-width-generalisation} 
Under the conditions of \cref{prop:width-generalisation}, suppose additionally that 
    \[ \abs{X_j} \leq 1 \qquad \text{a.s. for every }j\in[D]. 
\] Then, with probability at least \(1-\eta\), simultaneously for every readout satisfying \(\norm{\bm a}_1\leq A\), 
\[ 
    \begin{aligned} 
    \abs{ \mathcal L(f_{\bm a}) - \widehat{\mathcal L}_{\mathcal S}(f_{\bm a}) } \lesssim\;
    & \frac{ LA(1+\delta) }{ \sqrt N } \min \left\{ M_\sigma\sqrt R,\, C_1\sqrt{\log(2D)},\, C_wD \right\} \\ 
    &+ LM_\sigma A\delta + M \sqrt{ \frac{\log(2/\eta)}N }. 
    \end{aligned} 
\] Where, \(C_1\) is available when \(\sup_{\bm w\in\mathcal W} \norm{\bm w}_1 \leq C_1, \) and the term involving \(C_w\) is available when \( \sup_{\bm w\in\mathcal W} \norm{\bm w}_\infty \leq C_w. \) The implicit constant is universal.
\end{corollary}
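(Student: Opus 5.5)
The plan is to combine \cref{prop:width-generalisation} with three separate bounds on the Rademacher term $\mathfrak R_N(\mathcal F_{V,A})$ and take the minimum. The approximation term $2LM_\sigma A\delta$ and the confidence term carry over unchanged. So the only work is to show that $\mathfrak R_N(\mathcal F_{V,A}) \lesssim \frac{A(1+\delta)}{\sqrt N}\min\{M_\sigma\sqrt R, C_1\sqrt{\log(2D)}, C_wD\}$.

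\textbf{Finite-dimensional branch.} Every $f=\sum_p a_pv_p\in\mathcal F_{V,A}$ lies in $V$. We need an $L_2$ bound on $f$. Write $v_p=z_p-e_p$. Then $\norm{z_p}_{L_2}\le M_\sigma$, and since $\norm{e_p}_{L_\infty}\le M_\sigma\norm{e_p}_{\mathcal B_z}\le M_\sigma\delta$ (as in the proof of \cref{prop:width-generalisation}), we get $\norm{v_p}_{L_2}\le M_\sigma(1+\delta)$. Hence $\norm{f}_{L_2}\le AM_\sigma(1+\delta)$. Applying \cref{lemma:finite-dimensional-rademacher} with $B=AM_\sigma(1+\delta)$ gives the $M_\sigma\sqrt R$ term.

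\textbf{Variation-norm branches.} Since $V\subseteq\mathcal B_z$, the triangle inequality gives $\norm{v_p}_{\mathcal B_z}\le\norm{z_p}_{\mathcal B_z}+\delta\le 1+\delta$, because each $z_p$ is a single atom of variation norm at most $1$. Hence $\norm{f}_{\mathcal B_z}\le A(1+\delta)$, so $\mathcal F_{V,A}\subseteq\mathcal B_z(A(1+\delta))$. \Cref{prop:variation-ball-rademacher} bounds the empirical Rademacher complexity by $A(1+\delta)\widehat{\mathfrak R}_{\bm X}(\Phi_\pm)$. The subsequent lemma, using $\abs{X_{ij}}\le1$, then gives $C_1\sqrt{\log(2D)/N}$ under the $\ell_1$ weight bound and $C_wD/\sqrt N$ under the $\ell_\infty$ bound. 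Taking expectation over $\bm X$ converts these into bounds on the expected complexity.

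\textbf{Main obstacle.} That lemma is stated for $\tanh$, so it relies on $\sigma$ being $1$-Lipschitz (more generally $B_\sigma$-Lipschitz, with the constant absorbed) and on the contraction step surviving the absolute value inside the supremum. Because the class is symmetrised to $\Phi_\pm$, the absolute value costs at most a factor of $2$ via the standard Ledoux--Talagrand argument, which the $\lesssim$ absorbs. I would also check that the paper's norms are taken over a dictionary $\mathcal W$ containing the realised weights. Once that holds, taking the minimum of the three bounds and substituting into \cref{prop:width-generalisation} finishes the proof.
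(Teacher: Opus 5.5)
Your proposal is correct and follows the paper's proof essentially step for step. You bound $\norm{v_p}_{\mathcal B_z}\le 1+\delta$ and use the $R$-dimensional $L_2$-ball estimate; your triangle inequality on $v_p=z_p-e_p$ is an equivalent substitute for the paper's $\norm{f}_{L_2}\le M_\sigma\norm{f}_{\mathcal B_z}$. You then apply the variation-ball and atom-class Rademacher bounds, take the minimum, and substitute into \cref{prop:width-generalisation}.

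One small correction to your aside: extending the $\tanh$ atom lemma to a general $B_\sigma$-Lipschitz $\sigma$ also requires $\sigma(0)=0$. Otherwise an extra $\abs{\sigma(0)}/\sqrt N$ term appears, and the resulting constant depends on $B_\sigma$ rather than being universal.
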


\begin{proof}
Each activity atom satisfies
\(
    \norm{z_p}_{\mathcal B_z}
    \leq
    1.
\)
Hence
\[
    \norm{v_p}_{\mathcal B_z}
    \leq
    \norm{z_p}_{\mathcal B_z}
    +
    \norm{z_p-v_p}_{\mathcal B_z}
    \leq
    1+\delta.
\]
It follows that every
\(f_{\bm a,V}\in\mathcal F_{V,A}\)
satisfies
\[
    \norm{f_{\bm a,V}}_{\mathcal B_z}
    \leq
    A(1+\delta).
\]
Since
\(
    \norm{f}_{L_2(p_x)}
    \leq
    M_\sigma
    \norm{f}_{\mathcal B_z},
\)
and \(\dim V\leq R\), the standard Rademacher estimate for an
\(R\)-dimensional \(L_2(p_x)\)-ball gives
\[
    \mathfrak R_N(\mathcal F_{V,A})
    \leq
    M_\sigma A(1+\delta)
    \sqrt{\frac RN}.
\]
Alternatively, the variation-ball estimate and the Rademacher bounds
for the neural atom class give
\[
    \mathfrak R_N(\mathcal F_{V,A})
    \lesssim
    A(1+\delta)
    C_1
    \sqrt{\frac{\log(2D)}N}
\]
under the uniform \(\ell_1\)-bound, and
\[
    \mathfrak R_N(\mathcal F_{V,A})
    \lesssim
    A(1+\delta)
    C_w
    \frac{D}{\sqrt N}
\]
under the uniform \(\ell_\infty\)-bound.

Taking the smallest available estimate yields
\[
    \mathfrak R_N(\mathcal F_{V,A})
    \lesssim
    \frac{
        A(1+\delta)
    }{
        \sqrt N
    }
    \min
    \left\{
        M_\sigma\sqrt R,\,
        C_1\sqrt{\log(2D)},\,
        C_wD
    \right\}.
\]
Substitution into
\cref{prop:width-generalisation}
proves the result.
\end{proof}



\newpage

\newpage
\subsection{Norm-control}

\begin{lemma}[Residual cost of empirical interpolation]
\label{lemma:residual-interpolation}
Let
\[
    \mathcal Z
    =
    \{z_1,\ldots,z_P\}
    \subseteq
    \mathcal B_z,
\]
and suppose there exists a subspace
\(V\subseteq\mathcal B_z\), with
\(\dim V\leq R\), and
\(v_p\in V\) satisfying
\[
    \norm{z_p-v_p}_{\mathcal B_z}
    \leq
    \delta
    \qquad
    \text{for every }p\in[P].
\]
Let
\(
    \bm{X}
    =
    (X_1,\ldots,X_N)
\)
be any collection of sample inputs, and define the sample-evaluation
space
\[
    V_{\bm{X}}
    \defeq
    \left\{
        \left(
            v(X_1),\ldots,v(X_N)
        \right):
        v\in V
    \right\}
    \subseteq
    \mathbb R^N.
\]
Let
\(\Pi_{V, \bm X}^{\perp}\)
denote the Euclidean orthogonal projection onto
\(V_{\bm X}^{\perp}\) under the empirical inner product.

Suppose the readout
\[
    f_{\bm a}
    =
    \sum_{p=1}^P a_pz_p
\]
interpolates a label vector
\(\bm y\in\mathbb R^N\), so that
\(
    f_{\bm a}(X_i)
    =
    y_i
\) for every $i\in[N]$.
Then
\[
    \norm{
        \Pi_{V, \bm X}^{\perp}\bm y
    }_2
    \leq
    N^{1/2}M_\sigma\delta
    \norm{\bm a}_1,
\]
Consequently, if \(\delta>0\), every interpolating readout satisfies
\[
    \boxed{
        \norm{\bm a}_1
        \geq
        \frac{
            \norm{
                \Pi_{V, \bm X}^{\perp}\bm y
            }_2
        }{
            \sqrt{N}M_\sigma\delta
        }.
    }
\]
\end{lemma}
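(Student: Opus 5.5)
The plan is to decompose the interpolating readout into its projection onto \(V\) plus a residual, and then to bound the residual uniformly at sample points using the variation-norm control of the residual atoms. Write \(e_p\defeq z_p-v_p\), so that \(\norm{e_p}_{\mathcal B_z}\leq\delta\). Then
\[
    f_{\bm a}=\sum_{p}a_pv_p+\sum_p a_pe_p\defeq f_{\bm a,V}+r_{\bm a}.
\]
Evaluating at the sample gives \(\bm y=\bm f_V+\bm r\), where \(\bm f_V\defeq(f_{\bm a,V}(X_i))_{i}\in V_{\bm X}\), because \(f_{\bm a,V}\in V\). Applying \(\Pi^{\perp}_{V,\bm X}\) annihilates \(\bm f_V\), so
\[
    \Pi^{\perp}_{V,\bm X}\bm y=\Pi^{\perp}_{V,\bm X}\bm r,
\]
and since an orthogonal projection is nonexpansive, \(\norm{\Pi^{\perp}_{V,\bm X}\bm y}_2\leq\norm{\bm r}_2\).

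Next I bound \(\bm r\) pointwise. As in the proof of \cref{prop:width-generalisation}, any \(g\in\mathcal B_z\) with representing measure \(\mu\) satisfies \(\abs{g(\bm x)}\leq\abs{\mu}_{TV}\sup_{\bm w}\abs{\sigma(\bm w\cdot\bm x)}\leq M_\sigma\abs{\mu}_{TV}\). Taking the infimum over representing measures gives \(\norm{e_p}_{\infty}\leq M_\sigma\delta\). Here I assume \(M_\sigma<\infty\) as in \cref{prop:width-generalisation}, and I work pointwise on the sample, so no almost-everywhere subtlety arises because each \(e_p\) is a genuine function. Hence
\[
    \abs{r_{\bm a}(X_i)}\leq\sum_p\abs{a_p}\,\norm{e_p}_\infty\leq M_\sigma\delta\norm{\bm a}_1
\]
for every \(i\), and therefore \(\norm{\bm r}_2\leq\sqrt N M_\sigma\delta\norm{\bm a}_1\). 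Combining this with the previous step yields the first inequality, and dividing by \(\sqrt N M_\sigma\delta>0\) gives the boxed bound.

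The only point requiring care is the sup-norm control \(\norm{e_p}_\infty\leq M_\sigma\norm{e_p}_{\mathcal B_z}\). This must hold at every sample point rather than merely in \(L_2\). It follows directly from the integral representation, with the infimum over representing measures handled by an \(\epsilon\)-slack argument. I would also check that the Euclidean projection convention (unnormalised \(\mathbb R^N\) norm) matches the \(\sqrt N\) factor. With the unnormalised norm it does; an empirical inner product normalised by \(1/N\) would instead remove that factor.
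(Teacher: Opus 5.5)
Your proposal is correct and follows the paper's proof essentially step for step. You split $z_p=v_p+e_p$, observe that the sample vector of $\sum_p a_pv_p$ lies in $V_{\bm X}$ and is annihilated by the projection, use nonexpansiveness of the projection, and bound the residual pointwise via $\norm{e_p}_{\infty}\le M_\sigma\norm{e_p}_{\mathcal B_z}\le M_\sigma\delta$. Your pointwise evaluation at the sample points and your treatment of the $\sqrt N$ factor both match the paper's argument, which simply states the same bound in the normalised empirical norm $\norm{\cdot}_N=N^{-1/2}\norm{\cdot}_2$.
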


\begin{proof}
For each \(p\in[P]\), define the residual feature
\(
    e_p
    \defeq
    z_p-v_p
\).
By assumption,
\(
    \norm{e_p}_{\mathcal B_z}
    \leq
    \delta
\).
Since the variation norm controls the uniform norm,
\[
    \norm{e_p}_{L_\infty}
    \leq
    M_\sigma
    \norm{e_p}_{\mathcal B_z}
    \leq
    M_\sigma\delta.
\]
Decompose the predictor as
\[
    f_{\bm a}
    =
    f_{\bm a,V}
    +
    f_{\bm a,E},
\]
where
\[
    f_{\bm a,V}
    \defeq
    \sum_{p=1}^P a_pv_p
    \in V,
    \quad
    f_{\bm a,E}
    \defeq
    \sum_{p=1}^P a_pe_p.
\]
Let
\[
    \hat{f}_{\bm a,V}
    \defeq
    \left(
        f_{\bm a,V}(X_1),\ldots,
        f_{\bm a,V}(X_N)
    \right)
\]
and define
\(\hat{f}_{\bm a,E}\) analogously.
Because \(f_{\bm a,V}\in V\), its sample-evaluation vector satisfies
\[
    \hat{f}_{\bm a,V}
    \in
    \bm V_{\bm X}.
\]
The interpolation condition therefore gives
\[
    \bm y
    =
    \hat{f}_{\bm a,V}
    +
    \hat{f}_{\bm a,E}.
\]
Applying
\(\Pi_{V, \bm X}^{\perp}\)
to both sides yields
\[
    \Pi_{V, \bm X}^{\perp}\bm y
    =
    \Pi_{V, \bm X}^{\perp}
    \hat{f}_{\bm a,E},
\]
since
\[
    \Pi_{V, \bm X}^{\perp}
    \hat{f}_{\bm a,V}
    =
    0.
\]
Orthogonal projection is contractive in the empirical norm \(\norm{\cdot}_N \defeq N^{-1/2}\norm{\cdot}_2\), and hence
\[
\norm{
        \Pi_{V, \bm X}^{\perp}\bm y
    }_N
    \leq
    \norm{
        \hat{f}_{\bm a,E}
    }_N
    \leq
    \norm{
        f_{\bm a,E}
    }_{L_\infty}.
\]
Finally,
\[
\begin{aligned}
    \norm{
        f_{\bm a,E}
    }_{L_\infty}
    &=
    \norm{
        \sum_{p=1}^P a_pe_p
    }_{L_\infty}
    \\
    &\leq
    \sum_{p=1}^P
    \abs{a_p}
    \norm{e_p}_{L_\infty}
    \\
    &\leq
    M_\sigma\delta
    \sum_{p=1}^P
    \abs{a_p}
    \\
    &=
    M_\sigma\delta
    \norm{\bm a}_1.
\end{aligned}
\]
Combining the preceding inequalities gives
\[
    \norm{
        \Pi_{V, \bm X}^{\perp}\bm y
    }_N
    \leq
    M_\sigma\delta
    \norm{\bm a}_1.
\]
\end{proof}

\begin{lemma}[Empirical isotropy of sparse-parity labels] \label{lemma:parity-empirical-isotropy} 
Let 
\( 
    X_1,\ldots,X_N \overset{\mathrm{i.i.d.}}{\sim} \operatorname{Unif} \left( \{\pm1\}^D \right) 
\).
For each \( S\in\binom{[D]}k\), 
define the sample label vector 
\[ 
    \bm y_S \defeq \left( \chi_S(X_1),\ldots,\chi_S(X_N) \right) \in \{\pm1\}^N, 
\] and let 
\[ 
    K_{\bm X} \defeq \frac{1}{\binom Dk} \sum_{S\in\binom{[D]}k} \bm y_S\bm y_S^\top. 
\] 
Then there exists a constant \(C_k>0\), depending only on \(k\), such that, for every \(0<\zeta<1\), with probability at least \(1-\zeta\), \[ \boxed{ \norm{ K_{\bm X}-I_N }_{\mathrm{op}} \leq C_k \frac{ N \log\left[\left( \frac{2N^2}{\zeta} \right) \right]^{k/2} }{ \sqrt{\binom Dk} }. } \] 
\end{lemma}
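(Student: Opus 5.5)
We bound the entries of \(K_{\bm X}-I_N\) individually and then pass to the operator norm through the crude inequality \(\norm{M}_{\mathrm{op}}\leq N\max_{i,j}\abs{M_{ij}}\). This explains the factor \(N\) in the stated bound. The diagonal is exact: \((\bm y_S)_i^2=\chi_S(X_i)^2=1\), so \((K_{\bm X})_{ii}=1\) for every \(i\) and every realisation. Hence only the off-diagonal entries \(i\neq j\) need control, and for these
\[
    (K_{\bm X})_{ij}
    =
    \frac{1}{\binom Dk}\sum_{\abs{S}=k}\chi_S(X_i)\chi_S(X_j)
    =
    \frac{1}{\binom Dk}\sum_{\abs{S}=k}\chi_S(Z_{ij}),
    \qquad
    Z_{ij}\defeq X_i\odot X_j .
\]
For \(i\neq j\), independence and uniformity give \(Z_{ij}\sim\operatorname{Unif}(\{\pm1\}^D)\). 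Each off-diagonal entry is therefore a normalised homogeneous degree-\(k\) Rademacher chaos \(g(Z)=\binom Dk^{-1}\sum_{\abs{S}=k}\chi_S(Z)\).

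The key step is a tail bound for \(g\). By orthonormality of the Walsh functions, \(\mathbb E[g(Z)]=0\) and \(\mathbb E[g(Z)^2]=\binom Dk^{-1}\), so \(\norm{g}_{L_2}=\binom Dk^{-1/2}\). Bonami--Beckner hypercontractivity for degree-\(k\) polynomials gives \(\norm{g}_{L_q}\leq (q-1)^{k/2}\norm{g}_{L_2}\) for \(q\geq2\). Applying Markov's inequality to \(\abs{g}^q\) and optimising over \(q\) yields the standard chaos tail
\[
    \Pr\left(\abs{g(Z)}\geq t\right)
    \leq
    \exp\left(
        -c_k\left(t\sqrt{\tbinom Dk}\right)^{2/k}
    \right),
    \qquad t\gtrsim_k\tbinom Dk^{-1/2}.
\]
Equivalently, with probability at least \(1-\zeta'\),
\[
    \abs{g(Z)}\leq C_k\log(1/\zeta')^{k/2}\tbinom Dk^{-1/2}
\]
for \(\zeta'\) below a \(k\)-dependent constant. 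Larger \(\zeta'\) is absorbed into \(C_k\), since \(\log(2N^2/\zeta)\geq\log 2\).

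To conclude, we take a union bound over the fewer than \(N^2\) ordered pairs \(i\neq j\), with \(\zeta'=\zeta/(2N^2)\). On the resulting event of probability at least \(1-\zeta\),
\[
    \max_{i\neq j}\abs{(K_{\bm X})_{ij}}
    \leq
    C_k\log\left(\frac{2N^2}{\zeta}\right)^{k/2}\tbinom Dk^{-1/2}.
\]
Because the diagonal of \(K_{\bm X}-I_N\) vanishes, the operator norm is at most \(N\) times this maximum (via the Schur test, i.e.\ the maximum absolute row sum), which gives the boxed bound.

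The main obstacle is the hypercontractive tail step. We need the moment-to-tail conversion with the correct exponent \(2/k\), and the constant must depend only on \(k\) and not on \(D\). We plan to invoke the standard Bonami lemma directly rather than attempt a decoupling argument. Note that the pairwise dependence among the \(Z_{ij}\) is harmless, because the union bound requires only the marginal law of each \(Z_{ij}\).
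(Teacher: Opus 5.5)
Your proposal is correct and follows essentially the same route as the paper. Each off-diagonal entry is a homogeneous degree-$k$ Rademacher chaos in the coordinatewise product of $X_i$ and $X_j$, controlled via Bonami--Beckner and Markov, followed by a union bound over ordered pairs and the maximum-absolute-row-sum bound on the operator norm. The only difference is cosmetic: you pass through the general chaos tail obtained by optimising over $q$, whereas the paper plugs in $q=\max\{2,\log(2N^2/\zeta)\}$ directly, which makes your separate handling of large $\zeta'$ unnecessary.
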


\begin{proof} The entries of \(K_{\bm X}\) are 
\[ 
    (K_{\bm X})_{ij} = \frac{1}{\binom Dk} \sum_{S\in\binom{[D]}k} \chi_S(X_i)\chi_S(X_j). 
\] 
Since every parity takes values in \(\{\pm1\}\), 
\(
    (K_{\bm X})_{ii} = 1 
\) for every $i \in [N]$.
Fix distinct \(i,j\in[N]\), and define
\[ 
    U_\ell^{ij} \defeq X_{i,\ell}X_{j,\ell}, \qquad \ell\in[D]. 
\] 
Because \(X_i\) and \(X_j\) are independent uniform Rademacher vectors, the variables \[ U_1^{ij},\ldots,U_D^{ij} \] are independent Rademacher random variables. 
Moreover, \[ 
    (K_{\bm X})_{ij} = \frac{1}{\binom Dk} \sum_{S\in\binom{[D]}k} \prod_{\ell\in S} U_\ell^{ij}. 
\] 
Thus, each off-diagonal entry is a homogeneous Rademacher polynomial of degree \(k\). The degree-\(k\) Walsh monomials are orthogonal. 
Therefore, 
\[  
    \mathbb E \left[ (K_{\bm X})_{ij}^2 \right] 
    = \frac{1}{\binom Dk^2} \sum_{S,T\in\binom{[D]}k} \mathbb E \left[ \prod_{\ell\in S}U_\ell^{ij} \prod_{m\in T}U_m^{ij} \right] 
    = \frac{1}{\binom Dk}.  
\] 
Hence \( \norm{ (K_{\bm X})_{ij} }_{L_2} = \frac{1}{ \sqrt{\binom Dk} }. \)

By the Bonami--Beckner hypercontractive inequality, applied to homogeneous degree-\(k\) Rademacher polynomials \citep{odonnellBasicsHypercontractivity2014}, for every \(q\geq2\), 
\[ 
    \norm{ (K_{\bm X})_{ij} }_{L_q} \leq (q-1)^{k/2} \norm{ (K_{\bm X})_{ij} }_{L_2}. 
\] 
It follows that 
\[ 
    \norm{ (K_{\bm X})_{ij} }_{L_q} \leq \frac{ (q-1)^{k/2} }{ \sqrt{\binom Dk} }. 
\] 
Set 
\[ 
    q \defeq \max \left\{ 2, \log \left( \frac{2N^2}{\zeta} \right) \right\}. 
\] 
Markov's inequality gives 
\[ 
    \begin{aligned} \mathbb P \left[ \abs{ (K_{\bm X})_{ij} } > \frac{ e(q-1)^{k/2} }{ \sqrt{\binom Dk} } \right] &\leq e^{-q}. \end{aligned} 
\] 
A union bound over all ordered pairs \(i\neq j\) therefore gives, with probability at least \(1-\zeta\), 
\[ 
    \max_{i\neq j} \abs{ (K_{\bm X})_{ij} } \leq C_k \frac{ [\log\left( \frac{2N^2}{\zeta} \right)]^{k/2} }{ \sqrt{\binom Dk} }. 
\] 
Since \(K_{\bm X}-I_N\) is symmetric and has zero diagonal, its operator norm is bounded by its maximum absolute row sum:
\[ 
\begin{aligned} 
    \norm{ K_{\bm X}-I_N }_{\mathrm{op}} 
    &\leq \max_{i\in[N]} \sum_{\substack{ j\in[N]\\
    j\neq i }} \abs{ (K_{\bm X})_{ij} } \\
    &\leq (N-1) \max_{i\neq j} \abs{ (K_{\bm X})_{ij} }. 
\end{aligned} 
\] 
Combining the preceding estimates proves 
\[ 
    \norm{ K_{\bm X}-I_N }_{\mathrm{op}} \leq C_k \frac{ N [\log \left( \frac{2N^2}{\zeta} \right)]^{k/2} }{ \sqrt{\binom Dk} }. 
\]
\end{proof}
Define
\[
    \eta_{D,N,\zeta}
    \defeq
    C_k
    \frac{
        N
        \left[
            \log
            \left(
                \frac{2N^2}{\zeta}
            \right)
        \right]^{k/2}
    }{
        \sqrt{\binom Dk}
    }, \quad \text{which }\quad = \tilde{O}_k\left(\frac{N}{D^{k/2}}\right)\ \text{for fixed $k,\gamma$}
\]
\begin{proposition}[Readout-norm cost of premature interpolation]
\label{prop:parity-interpolation-cost}
Let
\[
    X_1,\ldots,X_N
    \overset{\mathrm{i.i.d.}}{\sim}
    \operatorname{Unif}
    \left(
        \{\pm1\}^D
    \right),
\]
and let
\[
    S
    \sim
    \operatorname{Unif}
    \binom{[D]}k
\]
be independent of the initial representation conditional on the sample
inputs. Suppose the initial activity family \( \mathcal Z \)
admits an \(R\)-dimensional approximation at resolution \(\delta>0\). Assume that \(V\) is selected independently of \(S\), conditional on
the sample inputs,
\[
    S \perp (\mathcal{Z}, V) \mid \bm{X}.
\]
If
\[
    R<N
    \qquad\text{and}\qquad
    \eta_{D,N,\zeta}<1,
\]
then, with probability at least \(1-\zeta\) over the sample inputs, a
fraction at least
\[
    \frac{\mu}{2-\mu},
    \qquad
    \mu
    \defeq
    \left(
        1-\eta_{D,N,\zeta}
    \right)
    \left(
        1-\frac RN
    \right),
\]
of the \(k\)-sparse parity targets have the following property: any readout
that interpolates the target on the sample satisfies
\[
    \boxed{
        \norm{\bm a}_1
        \geq
        \frac{1}{
            M_\sigma\delta
        }
        \sqrt{
            \frac{\mu}{2}
        }.
    }
\]
\end{proposition}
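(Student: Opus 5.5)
Our plan is to combine \cref{lemma:residual-interpolation} with \cref{lemma:parity-empirical-isotropy}, using an averaging argument over targets followed by a reverse-Markov step. Throughout we condition on the sample inputs \(\bm X\). By the conditional independence \(S\perp(\mathcal Z,V)\mid\bm X\), the evaluation space \(V_{\bm X}\subseteq\mathbb R^N\) is then a fixed subspace. Its dimension is at most \(R\), so the orthogonal complement has \(\dim V_{\bm X}^\perp\geq N-R\). Let \(\Pi\defeq\Pi^{\perp}_{V,\bm X}\), an orthogonal projection of rank \(r\geq N-R\). By \cref{lemma:residual-interpolation}, any interpolating readout for target \(S\) satisfies
\[
\norm{\bm a}_1\geq \frac{\norm{\Pi\bm y_S}_2}{\sqrt N M_\sigma\delta}.
\]
It therefore suffices to show that a fraction at least \(\mu/(2-\mu)\) of the targets satisfy \(\norm{\Pi\bm y_S}_2^2/N\geq\mu/2\).

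\textbf{Average residual energy.} Write \(\rho_S\defeq\norm{\Pi\bm y_S}_2^2/N\). Since \(\bm y_S\in\{\pm1\}^N\), we have \(0\leq\rho_S\leq1\). Averaging over \(S\) gives
\[
\frac{1}{\binom Dk}\sum_S\rho_S=\frac1N\operatorname{tr}(\Pi K_{\bm X}).
\]
On the event of \cref{lemma:parity-empirical-isotropy}, which has probability at least \(1-\zeta\), we have \(K_{\bm X}\succeq(1-\eta_{D,N,\zeta})I_N\). Because \(\Pi\) is positive semidefinite, this yields
\[
\operatorname{tr}(\Pi K_{\bm X})\geq(1-\eta_{D,N,\zeta})\operatorname{tr}\Pi\geq(1-\eta_{D,N,\zeta})(N-R).
\]
Hence the mean of \(\rho_S\) is at least \(\mu\). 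This step uses \(\eta_{D,N,\zeta}<1\) and \(R<N\) to ensure \(\mu>0\).

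\textbf{Reverse Markov.} Let \(\beta\) be the fraction of targets with \(\rho_S\geq\mu/2\). Using \(\rho_S\leq1\) on these targets and \(\rho_S<\mu/2\) on the rest,
\[
\mu\leq\beta\cdot1+(1-\beta)\frac{\mu}{2},
\qquad\text{so}\qquad
\beta\geq\frac{\mu/2}{1-\mu/2}=\frac{\mu}{2-\mu}.
\]
For each such target, the interpolation bound gives
\[
\norm{\bm a}_1\geq\frac{\sqrt{N\mu/2}}{\sqrt N M_\sigma\delta}=\frac{1}{M_\sigma\delta}\sqrt{\frac{\mu}{2}},
\]
which is the claim.

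\textbf{Main obstacle.} The only delicate point is justifying that \(\Pi\) can be treated as fixed while averaging over \(S\). The subspace \(V\) may depend on \(\bm X\), but it must not depend on the labels; this is exactly what the conditional independence hypothesis supplies. Given that, the trace computation is legitimate. Note that the probability \(1-\zeta\) comes only from the isotropy event on \(\bm X\), and the fraction statement is deterministic on that event.
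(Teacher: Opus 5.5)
Your proposal is correct and follows essentially the same route as the paper: on the isotropy event of \cref{lemma:parity-empirical-isotropy} you lower-bound the target-averaged residual energy by $\operatorname{tr}(\Pi K_{\bm X})/N \geq \mu$, apply the same reverse-Markov step using $0\leq\rho_S\leq 1$, and conclude via \cref{lemma:residual-interpolation}. The only cosmetic difference is that you phrase the average as a counting fraction over targets for a fixed $\Pi$, whereas the paper writes it as a conditional expectation over $S$ given $(V,\bm X)$; these coincide under the stated conditional independence.
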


\begin{proof}
By \cref{lemma:parity-empirical-isotropy}, with probability at least
\(1-\zeta\),
\[
    \norm{
        K_{\bm X}-I_N
    }_{\mathrm{op}}
    \leq
    \eta_{D,N,\zeta}.
\]
We condition on this event for the remainder of the proof. It implies
\[
    K_{\bm X}
    \succeq
    \left(
        1-\eta_{D,N,\zeta}
    \right)
    I_N.
\]
Because the target support \(S\) is independent of \(V\), conditional
on the sample inputs, we have
\[
\begin{aligned}
    \mathbb E_S
    \left[
        \norm{
            \Pi_{V,\bm X}^{\perp}\bm y_S
        }_2^2
        \,\middle|\,
        V,\bm X
    \right]
    &=
    \operatorname{tr}
    \left(
        \Pi_{V,\bm X}^{\perp}
        K_{\bm X}
    \right)
    \\
    &\geq
    \left(
        1-\eta_{D,N,\zeta}
    \right)
    \operatorname{tr}
    \left(
        \Pi_{V,\bm X}^{\perp}
    \right).
\end{aligned}
\]
Since
\(
    \dim V_{\bm X}
    \leq
    \dim V
    \leq
    R,
\)
we have
\[
    \operatorname{tr}
    \left(
        \Pi_{V,\bm X}^{\perp}
    \right)
    \geq
    N-R.
\]
Therefore,
\[
    \mathbb E_S
    \left[
        \frac1N
        \norm{
            \Pi_{V,\bm X}^{\perp}\bm y_S
        }_2^2
        \,\middle|\,
        V,\bm X
    \right]
    \geq
    \mu,
\]
where
\(
    \mu
    \defeq
    \left(
        1-\eta_{D,N,\zeta}
    \right)
    \left(
        1-\frac RN
    \right)
\).

Set
\[
    Z_S
    \defeq
    \frac1N
    \norm{
        \Pi_{V,\bm X}^{\perp}\bm y_S
    }_2^2.
\]
Since
\(\norm{\bm y_S}_2^2=N\)
and orthogonal projections are contractive,
\(
    0
    \leq
    Z_S
    \leq
    1.
\)

Let
\[
    q
    \defeq
    \mathbb P_S
    \left[
        Z_S
        \geq
        \frac{\mu}{2}
        \,\middle|\,
        V,\bm X
    \right].
\]
Using \(Z_S\leq1\), we obtain
\[
\begin{aligned}
    \mu
    \leq
    \mathbb E_S[Z_S\mid V,\bm X]
    &\leq
    q
    +
    (1-q)\frac{\mu}{2}.
\end{aligned}
\]
Rearranging gives
\[
    q
    \geq
    \frac{\mu}{2-\mu}.
\]

Hence, for at least a fraction
\(\mu/(2-\mu)\)
of the parity targets,
\[
    \frac1{\sqrt N}
    \norm{
        \Pi_{V,\bm X}^{\perp}\bm y_S
    }_2
    \geq
    \sqrt{\frac{\mu}{2}}.
\]
For each such target,
\cref{lemma:residual-interpolation}
implies that every interpolating readout satisfies
\[
\begin{aligned}
    \norm{\bm a}_1
    &\geq
    \frac{
        \norm{
            \Pi_{V,\bm X}^{\perp}\bm y_S
        }_2
    }{
        \sqrt N\,M_\sigma\delta
    }
    \\
    &\geq
    \frac1{
        M_\sigma\delta
    }
    \sqrt{
        \frac{\mu}{2}
    }.
\end{aligned}
\]
\end{proof}

\begin{corollary}[Fixed-target interpolation obstruction]
\label{cor:fixed-parity-interpolation-cost}
Suppose the assumptions of
\cref{prop:parity-interpolation-cost} hold. Assume additionally that the
joint law of the initial activity family, its
\(R\)-dimensional approximating space, and the input sample is invariant
under coordinate permutations:
\[
    \left(
        T_\pi\mathcal Z,
        T_\pi V,
        \pi\bm X
    \right)
    \overset{d}{=}
    \left(
        \mathcal Z,
        V,
        \bm X
    \right)
\]
for every permutation \(\pi\) of \([D]\), and suppose \(R<N\) and
\(\eta_{D,N,\zeta}<1\). Then, for every fixed
\(S^\star\in\binom{[D]}k\), with probability at least
\[
    (1-\zeta)
    \frac{\mu}{2-\mu}
\]
over the task-independent initialisation and input sample, every readout
that interpolates \(\chi_{S^\star}\) satisfies
\[
    \boxed{
        \norm{\bm a}_1
        \geq
        \frac{1}{
            M_\sigma\delta
        }
        \sqrt{
            \frac{\mu}{2}
        }.
    }
\]
\end{corollary}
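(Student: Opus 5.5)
The plan is to reduce the fixed-target statement to the averaged-target statement of \cref{prop:parity-interpolation-cost} using the permutation invariance. Write \(E_S\) for the event, over the initialisation and sample, that every readout interpolating \(\chi_S\) on \(\bm X\) satisfies \(\norm{\bm a}_1\geq \sqrt{\mu/2}/(M_\sigma\delta)\). I take "initialisation" to mean the joint draw of \((\mathcal Z,V,\bm X)\). I will show that \(\mathbb P(E_{S^\star})\) does not depend on \(S^\star\), and then identify this common value with an average over \(S\).

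\textbf{Step 1 (equivariance).} For any permutation \(\pi\) of \([D]\), I will check that
\[
(\mathcal Z,V,\bm X)\in E_{S}\iff (T_\pi\mathcal Z,T_\pi V,\pi\bm X)\in E_{\pi(S)}.
\]
Interpolation is preserved under relabelling coordinates, since \(\chi_{\pi(S)}(\pi x)=\chi_S(x)\). The readout vector \(\bm a\) is unchanged, because the neurons are permuted, not reindexed. The quantities \(\delta\), \(R\) and \(M_\sigma\) are also invariant, since \(T_\pi\) is an isometry of \(\mathcal B_z\) mapping atoms \(\sigma(\bm w\cdot\bm x)\) to \(\sigma(\pi\bm w\cdot\bm x)\) and preserving \(\mathcal W\) by the invariance hypothesis. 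Combining this with the distributional identity gives \(\mathbb P(E_{S})=\mathbb P(E_{\pi(S)})\). The permutation group acts transitively on \(\binom{[D]}k\), so \(p\defeq\mathbb P(E_S)\) is the same for every \(S\).

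\textbf{Step 2 (averaging).} Let \(S\sim\operatorname{Unif}\binom{[D]}k\) be independent of everything else. Then \(p=\mathbb P(E_{S})\) for this random \(S\), and \(S\perp(\mathcal Z,V)\mid\bm X\) holds trivially. By \cref{prop:parity-interpolation-cost}, on an event \(G\) of probability at least \(1-\zeta\) over the inputs, the conditional fraction of targets in good position is at least \(\mu/(2-\mu)\). The proposition's argument conditions on \(V,\bm X\) with \(K_{\bm X}\) controlled, so the same bound holds conditionally on \((\mathcal Z,V,\bm X)\). Therefore
\[
p=\mathbb E\bigl[\mathbb P(E_S\mid \mathcal Z,V,\bm X)\bigr]\geq \mathbb P(G)\,\frac{\mu}{2-\mu}\geq(1-\zeta)\frac{\mu}{2-\mu}.
\]
Applying Step 1 with \(S=S^\star\) concludes the argument.

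\textbf{Main obstacle.} The delicate point is Step 1: verifying that the "good" event is a genuinely measurable, equivariant function of \((\mathcal Z,V,\bm X)\). This should be handled by defining \(E_S\) through the sufficient condition \(\norm{\Pi^\perp_{V,\bm X}\bm y_S}_2^2\geq N\mu/2\). That condition is manifestly equivariant, because \(\Pi^\perp_{T_\pi V,\pi\bm X}\) acts on the same sample-indexed vector space and \(\bm y_{\pi(S)}(\pi\bm X)=\bm y_S(\bm X)\). \cref{lemma:residual-interpolation} then converts this condition into the norm bound pointwise. One must also note that \(\mu\) depends only on \(D,N,R,\zeta\), so it is permutation invariant.
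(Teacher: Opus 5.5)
Your proposal is correct and takes essentially the approach the paper relies on: the paper states this corollary without a separate proof, and the intended route is to use permutation invariance to make the probability of the good event independent of $S^\star$, then identify it with the target average controlled by \cref{prop:parity-interpolation-cost}. Defining the good event through $\norm{\Pi^{\perp}_{V,\bm X}\bm y_S}_2^2\geq N\mu/2$ is a clean way to make it measurable and equivariant. One minor point is that your appeal to $T_\pi$ being an isometry of $\mathcal B_z$ that preserves $\mathcal W$ is unnecessary, and it does not follow from the distributional hypothesis alone. Neither event refers to the $\delta$-approximation property, which you only invoke for the unpermuted triple $(\mathcal Z,V,\bm X)$ in Step~2.
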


\begin{remark}[Low-width interpolation requires norm inflation] Suppose 
\[ 
    \frac RN \longrightarrow 0, \qquad \eta_{D,N,\zeta} \longrightarrow 0, \qquad \zeta \longrightarrow 0.
\] 
Then, for a uniformly random \(k\)-sparse parity, with probability \(1-o(1)\) over the target and sample, every interpolating readout satisfies 
\[ 
    \norm{\bm a}_1 \geq \frac{1-o(1)}{ M_\sigma\delta }. 
\] 
Under the permutation-invariance assumption of \cref{cor:fixed-parity-interpolation-cost}, the same conclusion holds for every fixed \(k\)-sparse parity, with probability \(1-o(1)\) over the task-independent initialisation and sample. At the transfer-compatible resolution 
\[ 
    \delta_N \asymp \frac1{N\sqrt D},
\] 
this gives, for bounded activations with dimension-independent \(M_\sigma\),
\[ 
    \norm{\bm a}_1 = \Omega(N\sqrt D). 
\] Along the parity frontier 
\[ 
    NR = \widetilde\Theta_k \left( D^{k-1} \right), 
\] 
the conditions \(R/N\to0\) and \(\eta_{D,N,\zeta}\to0\) hold, up to logarithmic factors, throughout the window \[ D^{(k-1)/2} \ll N \ll D^{k/2}. \] Thus, on the low-width side of the crossover and within the empirical isotropy regime, exact interpolation requires readout norm \(\Omega(N\sqrt D)\) with probability \(1-o(1)\).
\end{remark}

\end{document}